\newtheorem{theorem}{Theorem}
\newtheorem{lemma}{Lemma}
\newtheorem*{proof}{Proof}
\newtheorem{assumption}{Assumption}
\title{Understanding How Consistency Works in Federated Learning via Stage-wise Relaxed Initialization}
\author{%
  Yan Sun\\
  The University of Sydney\\
  \texttt{ysun9899@uni.sydney.edu.au} \\
  \And
  Li Shen\thanks{Li Shen is the corresponding author.}\\
  JD Explore Academy\\
  \texttt{mathshenli@gmail.com} \\
  \And
  Dacheng Tao \\
  The University of Sydney \\
  \texttt{dacheng.tao@gmail.com}
}
\begin{document}

\maketitle

\begin{abstract}
\label{abstract}
  Federated learning~(FL) is a distributed paradigm that coordinates massive local clients to collaboratively train a global model via stage-wise local training processes on the heterogeneous dataset.
  Previous works have implicitly studied that FL suffers from the ``client-drift'' problem, which is caused by the inconsistent optimum across local clients. However, till now it still lacks solid theoretical analysis to explain the impact of this local inconsistency. 
  To alleviate the negative impact of the  ``client drift'' and explore its substance in FL, in this paper, we first design an efficient FL algorithm \textit{FedInit}, which allows employing the personalized relaxed initialization state at the beginning of each local training stage. Specifically, \textit{FedInit} initializes the local state by moving away from the current global state towards the reverse direction of the latest local state. This relaxed initialization helps to revise the local divergence and enhance the local consistency level.
  Moreover, to further understand how inconsistency disrupts performance in FL, we introduce the excess risk analysis and study the divergence term to investigate the test error of the proposed \textit{FedInit} method. Our studies show that optimization error is not sensitive to this local inconsistency, while it mainly affects the generalization error bound in \textit{FedInit}. 
  Extensive experiments are conducted to validate this conclusion. Our proposed \textit{FedInit} could achieve state-of-the-art~(SOTA) results compared to several advanced benchmarks without any additional costs. Meanwhile, stage-wise relaxed initialization could also be incorporated into the current advanced algorithms to achieve higher performance in the FL paradigm.
  
\end{abstract}
\section{Introduction}
\label{introduction}

Since \citet{mcmahan2017communication} developed federated learning, it becomes a promising paradigm to effectively make full use of the computational ability of massive edge devices. \citet{MAL-083} further classify the modes based on the specific tasks and different environmental setups. Different from centralized training, FL utilizes a central server to coordinate the clients to perform several local training stages and aggregate local models as one global model. However, due to the heterogeneous dataset, it still suffers from significant performance degradation in practical scenarios.

Several previous studies explore the essence of performance limitations in FL and summarize it as the ``client-drift'' problem~\citep{acar2021federated,karimireddy2020scaffold,li2020federated,sun2023fedspeed,wang2021local,xu2021fedcm,yang2021achieving}. From the perspective of the global target, \citet{karimireddy2020scaffold} claim that the aggregated local optimum is far away from the global optimum due to the heterogeneity of the local dataset, which introduces the ``client-drift'' in FL. However, under limited local training steps, local clients can not genuinely approach the local optimum. To describe this negative impact more accurately, \citet{acar2021federated} and \citet{wang2021local} point out that each locally optimized objective should be regularized to be aligned with the global objective. Moreover, beyond the guarantees of local consistent objective, \citet{xu2021fedcm} indicate that the performance degradation could be further eliminated in FL if it guarantees the local consistent updates at each communication round, which is more similar to the centralized scenarios. These arguments intuitively provide forward-looking guidance for improving the performance in FL. However, in the existing analysis, there is still no solid theoretical support to understand the impact of the consistency term, which also severely hinders the further development of the FL paradigm.

To alleviate the negative impact of the ``client-drift'' problem and strengthen consistency in the FL paradigm, in this paper, we take into account adopting the personalized relaxed initialization at the beginning of each communication round, dubbed \textit{FedInit} method. Specifically, \textit{FedInit} initializes the selected local state by moving away from the current global state towards the reverse direction of the current latest local state. Personalized relaxed initialization helps each local model to revise its divergence and gather together with each other during the local training process. This flexible approach is surprisingly effective in FL and only adopts a constant coefficient to control the divergence level of the initialization. It could also be easily incorporated as a plug-in into other advanced benchmarks to further improve their performance.

Moreover, to explicitly understand how local inconsistency disrupts performance, we introduce the excess risk analysis to investigate the test error of \textit{FedInit} under the smooth non-convex objective, which includes an optimization error bound and a generalization error bound. Our theoretical studies indicate that the optimization error is insensitive to local inconsistency, while it mainly affects the generalization performance. Under \textit{P\L}-condition, consistency performs as the dominant term in the excess risk. Extensive empirical studies are conducted to validate the efficiency of the \textit{FedInit} method. On the CIFAR-10$/$100 dataset, it could achieve SOTA results compared to several advanced benchmarks without additional costs. It also helps to enhance the consistency level in FL.

In summary, the main contributions of this work are stated as follows:
\begin{itemize}
    \item We propose an efficient and novel FL method, dubbed \textit{FedInit}, which adopts the personalized relaxed initialization state on the selected local clients at each communication round. Relaxed initialization is dedicated to enhancing local consistency during training, and it is also a practical plug-in that could easily to incorporated into other methods.
    \item One important contribution is that we introduce the excess risk analysis in the proposed \textit{FedInit} method to understand the intrinsic impact of local consistency. Our theoretical studies prove that the optimization error is insensitive to consistency, while it mainly affects the test error and generalization error bound.
    \item Extensive numerical studies are conducted on the real-world dataset to validate the efficiency of the \textit{FedInit} method, which outperforms several SOTA benchmarks without additional training costs. Meanwhile, as an efficient plug-in, relaxed initialization (\textit{FedInit}) could also help the other benchmarks in our paper to achieve higher performance with effortlessness.
\end{itemize}

\section{Related Work}
\label{related_work}

\textbf{Consistency in FL.} FL employs an enormous number of edge devices to jointly train a single model among the isolated heterogeneous dataset~\citep{MAL-083, mcmahan2017communication}. As a standard benchmark, \textit{FedAvg}~\citep{fedopt,mcmahan2017communication, yang2021achieving} allows the local stochastic gradient descent~(local SGD)~\citep{gorbunov2021local,lin2018don,woodworth2020minibatch} based updates and uniformly selected partial clients' participation to alleviate the communication bottleneck. The stage-wise local training processes lead to significant divergence for each client~\citep{inconsistency1,inconsistency2,inconsistency3,wang2021local}. To improve the efficiency of the FL paradigm, a series of methods are proposed. \citet{karimireddy2020scaffold} indicate that inconsistent local optimums cause the severe ``client drift'' problem and propose the \textit{SCAFFOLD} method which adopts the variance reduction~\citep{defazio2014saga,johnson2013accelerating} technique to mitigate it. \citet{li2020federated} penalize the prox-term on the local objective to force the local update towards both the local optimum and the last global state. \citet{zhang2021fedpd} utilize the primal-dual method to improve consistency via solving local objectives under the equality constraint. Specifically, a series of works further adopt the alternating direction method of multipliers~(ADMM) to optimize the global objective~\citep{acar2021federated,gong2022fedadmm,wang2022fedadmm,zhou2023federated}, which could also enhance the consistency term. Beyond these, a series of momentum-based methods are proposed to strengthen local consistency. \citet{slowmo} study a global momentum update method to stabilize the global model. Further, \citet{feddc} use a local drift correction via a momentum-based term to revise the local gradient, efficiently reducing inconsistency. \citet{fedadc} and \citet{xu2021fedcm} propose a similar client-level momentum to force the local update towards the last global direction. A variant of client-level momentum that adopts the inertial momentum to further improve the local consistency level~\citep{liu2023enhance,fedproto}. At present, improving the consistency in FL remains a very important and promising research direction. Though these studies involve the heuristic discussion on consistency, in this paper we focus on the personalized relaxed initialization with theoretical analysis to understand the essential impact of consistency.

\textbf{Generalization in FL.} A lot of works have studied the properties of generalization in FL. Based on the margin loss~\citep{bartlett2017spectrally,farnia2018generalizable,neyshabur2017pac}, \citet{reisizadeh2020robust} develop a robust FL paradigm to alleviate the distribution shifts across the heterogeneous clients. \citet{shi2021fed} study the efficient and stable model technique of model ensembling. \citet{yagli2020information} prove the information-theoretic bounds on the generalization error and privacy leakage in the general FL paradigm. \citet{qu2022generalized} propose to adopt the sharpness aware minimization~(SAM) optimizer on the local client to improve the flatness of the loss landscape. \citet{improving_sam} and \citet{sun2023fedspeed} propose two variants based on SAM that could achieve higher performance. However, these works only focus on the generalization efficiency in FL, while in this paper we prove that its generalization error bound is dominated by consistency.

\section{Methodology}
\label{methodology}
\subsection{Preliminaries}
Under the cross-device FL setups, there are a very large number of local clients to collaboratively train a global model. Due to privacy protection and unreliable network bandwidth, only a fraction of devices are open-accessed at any one time~\citep{MAL-083,qu2022generalized}. Therefore, we define each client stores a private dataset $\mathcal{S}_i=\left\{z_j\right\}$ where $z_j$ is drawn from an unknown unique distribution $\mathcal{D}_i$. The whole local clients constitute a set $\mathcal{C}=\{i\}$ where $i$ is the index of each local client and $\vert\mathcal{C}\vert=C$. Actually, in the training process, we expect to approach the optimum of the population risk $F$:
\begin{equation}
    \small    w_{\mathcal{D}}^\star\in\arg\min_{w}\left\{ F(w)\triangleq \frac{1}{C}\sum_{i\in\mathcal{C}}F_i(w)\right\},
\end{equation}
where $F_i(w)=\mathbb{E}_{z_j\sim\mathcal{D}_i}F_i(w,z_j)$ is the local population risk.
While in practice, we usually consider the empirical risk minimization of the non-convex finite-sum problem in FL as:
\begin{equation}
     \small      w^\star\in\arg\min_w \left\{ f(w) \triangleq \frac{1}{C}\sum_{i\in\mathcal{C}}f_i(w)\right\},
\end{equation}
where $f_i(w)=\frac{1}{S_i}\sum_{z_j\in\mathcal{S}_i}f_i(w;z_j)$ is the local empirical risk. In Section~\ref{excess_risk}, we will analyze the difference between these two results. Furthermore, we introduce the excess risk analysis to upper bound the test error and further understand how consistency works in the FL paradigm.

\subsection{Personalized Relaxed Initialization}
\begin{wraptable}[17]{r}{0.48\linewidth}
\raggedleft

\begin{minipage}{0.465\columnwidth}
\vspace{-1.1cm}

\begin{algorithm}[H]
\small
\SetNoFillComment
\LinesNumbered
\caption{\textit{FedInit} Algorithm}
\label{tx:algorithm}
\KwIn{model $w$, local model $w_i$,\,$T$,\,$K$,\,$\beta$.\!\!}
\KwOut{model $w^T$.}

\BlankLine
\textbf{Initialize states}:
initialize $w^{-1}=w_{i,0}^{-1}=w^0$.\\
\For {$t=0, 1, ..., T-1$}{
    randomly select active clients set $\mathcal{N}$ from $\mathcal{C}$\\
    \For {$ i  \in \mathcal{N}$ $\textbf{in parallel}$}{
        send the $w^{t}$ to the active clients\\
        set the $w^{t}+\beta(w^{t}- w_{i,K}^{t-1})$ as $w_{i,0}^{t}$\\
        \For {$ k=0, 1, ..., K-1$}{
        compute gradient $g_{i,k}^{t}$ at $w_{i,k}^{t}$\\
        $w_{i,k+1}^t=w_{i,k}^{t}-\eta g_{i,k}^{t}$\\
        send the $w_{i}^{t}=w_{i,K}^{t}$ to the server
    }
    $w^{t+1} = \frac{1}{N}\sum_{i\in \mathcal{N}} w_{i}^t$\\
}}
\end{algorithm}
\end{minipage}
\end{wraptable}

In this part, we introduce the relaxed initialization in \textit{FedInit} method. \textit{FedAvg} proposes the local-SGD-based implementation in the FL paradigm with a partial participation selection. It allows uniformly selecting a subset of clients $\mathcal{N}$ to participate in the current training. In each round, it initializes the local model as the last global model. Therefore, after each round, the local models are always far away from each other. The local offset $w_{i,K}^{t-1}-w^t$ is the main culprit leading to inconsistency. Moreover, for different clients, their impacts vary with local heterogeneity. To alleviate this divergence, we propose the \textit{FedInit} method which adopts the personalized relaxed initialization at the beginning of each round. Concretely, on the selected active clients, it begins the local training from a new personalized state, which moves away from the last global model towards the reverse direction from the latest local state~(Line.6 in Algorithm~\ref{tx:algorithm}). A coefficient $\beta$ is adopted to control the level of personality. This offset $\beta(w^t - w_{i,K}^{t-1})$ in the relaxed initialization (RI) provides a correction that could help local models gather together after the local training process. Furthermore, this relaxed initialization is irrelevant to the local optimizer, which means, it could be easily incorporated into other methods. Additionally, \textit{FedInit} does not require extra auxiliary information to communicate. It is a practical technique in FL.


\section{Theoretical Analysis}
\label{theoretical_analysis}
In this section, we first introduce the excess risk in FL which could provide a comprehensive analysis on the joint performance of both optimization and generalization. In the second part, we introduce the main assumptions adopted in our proofs and discuss them in different situations. Then we state the main theorems on the analysis of the excess risk of our proposed \textit{FedInit} method.

\subsection{Excess Risk Error in FL}
\label{excess_risk}
Since \citet{karimireddy2020scaffold} pointed out that client-drift problem may seriously damage the performance in the FL paradigm, many previous works~\citep{huang2023fusion,karimi2021layer,karimireddy2020scaffold,reddi2020adaptive,sun2023adasam,wang2021local,xu2021fedcm,yang2021achieving} have learned its inefficiency in the FL paradigm. However, most of the analyses focus on the studies from the onefold perspective of optimization convergence but ignore investigating its impact on generality. To further provide a comprehensive understanding of how client-drift affects the performance in FL, we adopt the well-known excess risk in the analysis of our proposed \textit{FedInit} method.

We denote $w^T$ as the model generated by \textit{FedInit} method after $T$ communication rounds. Compared with $f(w^T)$, we mainly focus on the efficiency of $F(w^T)$ which corresponds to its generalization performance. Therefore, we analyze the $\mathbb{E}[F(w^T)]$ from the excess risk $\mathcal{E}_E$ as:
\begin{equation}
       \small \mathcal{E}_E = \mathbb{E}[F(w^T)] - \mathbb{E}[f(w^*)] = \underbrace{\mathbb{E}[F(w^T) - f(w^T)]}_{\mathcal{E}_G:\ \textit{generalization error}} + \underbrace{\mathbb{E}[f(w^T) - f(w^*)]}_{\mathcal{E}_O:\ \textit{optimization error}}.
\end{equation}
Generally, the $\mathbb{E}[f(w^*)]$ is expected to be very small and even to zero if the model could well-fit the dataset. Thus $\mathcal{E}_E$ could be considered as the joint efficiency of the generated model $w^T$. Thereinto, $\mathcal{E}_G$ means the different performance of $w^T$ between the training dataset and the test dataset, and $\mathcal{E}_O$ means the similarity between $w^T$ and optimization optimum $w^\star$ on the training dataset.

\subsection{Assumptions}
In this part, we introduce some assumptions adopted in our analysis. We will discuss their properties and distinguish the proofs they are used in.
\begin{assumption}
\label{tx:assumption_smooth}
    For $\forall w_1, w_2 \in \mathbb{R}^d$, the non-convex local function $f_i$ satisfies $L$-smooth if:
    \begin{equation}
        \small    \Vert \nabla f_i(w_1) - \nabla f_i(w_2) \Vert \leq L\Vert w_1 - w_2\Vert.
    \end{equation}
\end{assumption}

\begin{assumption}
\label{tx:assumption_bounded_stochastics}
    For $\forall w \in \mathbb{R}^d$, the stochastic gradient is bounded by its expectation and variance as:
    \begin{equation}
        \small    \mathbb{E}\left[g_{i,k}^t\right] = \nabla f_{i}(w_{i,k}^t), \quad \mathbb{E}\Vert g_{i,k}^t - \nabla f_{i}(w_{i,k}^t)\Vert^2 \leq \sigma_l^2.
    \end{equation}
\end{assumption}

\begin{assumption}
\label{tx:assumption_bounded_heterogeneity}
    For $\forall w \in \mathbb{R}^d$, the heterogeneous similarity is bounded on the gradient norm as:
    \begin{equation}
        \small    \mathbb{E}\Vert\nabla f_i(w)\Vert^2\leq G^2+B^2\mathbb{E}\Vert\nabla f(w)\Vert^2.
    \end{equation}
\end{assumption}

\begin{assumption}
\label{tx:assumption_Lipschitz}
    For $\forall w_1, w_2 \in \mathbb{R}^d$, the global function $f$ satisfies $L_G$-Lipschitz if:
    \begin{equation}
       \small     \Vert f(w_1) - f(w_2) \Vert \leq L_G\Vert w_1 - w_2\Vert.
    \end{equation}
\end{assumption}

\begin{assumption}
\label{tx:assumption_PL}
    For $\forall w \in \mathbb{R}^d$, let $w^\star\in\arg\min_{w} f(w)$, the function $f$ satisfies P\L-condition if:
    \begin{equation}
        \small    2\mu\left(f(w)-f(w^\star)\right)\leq\Vert\nabla f(w)\Vert^2.
    \end{equation}
\end{assumption}

\paragraph{Discussions.} Assumptions~\ref{tx:assumption_smooth}$\sim$\ref{tx:assumption_bounded_heterogeneity} are three general assumptions to analyze the non-convex objective in FL, which is widely used in the previous works~\citep{huang2023fusion,karimi2021layer,karimireddy2020scaffold,reddi2020adaptive,sun2023adasam,wang2021local,xu2021fedcm,yang2021achieving}. Assumption~\ref{tx:assumption_Lipschitz} is used to bound the uniform stability for the non-convex objective, which is used in \citep{hardt2016train,zhou2021towards}. Different from the analysis in the margin-based generalization bound~\citep{neyshabur2017pac,qu2022generalized,reisizadeh2020robust,sun2023fedspeed} that focus on understanding how the designed objective affects the final generalization performance, our work focuses on understanding how the generalization performance changes in the training process. We consider the entire training process and adopt uniform stability to measure the global generality in FL. For the general non-convex objective, one often uses the gradient norm $\mathbb{E}\Vert\nabla f(w)\Vert^2$ instead of bounding the loss difference $\mathbb{E}\left[f(w^T)-f(w^\star)\right]$ to measure the optimization convergence. To construct and analyze the excess risk, and further understand how the consistency affects the FL paradigm, we follow \citep{zhou2021towards} to use Assumption~\ref{tx:assumption_PL} to bound the loss distance. Through this, we can establish a theoretical framework to jointly analyze the trade-off on the optimization and generalization in the FL paradigm. 

\subsection{Main Theorems}
\subsubsection{Optimization Error $\mathcal{E}_O$}
\begin{theorem}
\label{tx:thm1}
    Under Assumptions~\ref{tx:assumption_smooth}$\sim$\ref{tx:assumption_bounded_heterogeneity}, let participation ratio is $N/C$ where $1<N<C$, let the learning rate satisfy $\eta\leq\min\left\{\frac{N}{2CKL},\frac{1}{NKL}\right\}$ where $K\geq 2$, let the relaxation coefficient $\beta\leq\frac{\sqrt{2}}{12}$, and after training $T$ rounds, the global model $w^t$ generated by \textit{FedInit} satisfies:
    \begin{equation}
        \small    \frac{1}{T}\sum_{t=0}^{T-1}\mathbb{E}\Vert f(w^t)\Vert^2\leq \frac{2\left(f(w^{0}) - f(w^{\star})\right)}{\lambda \eta KT} + \frac{\kappa_2\eta L}{\lambda N}\sigma_l^2 + \frac{3\kappa_1\eta KL}{\lambda N}G^2,
    \end{equation}
    where $\lambda \in (0,1)$, $\kappa_1=\frac{1300\beta^2}{1-72\beta^2} + 17$, and $\kappa_2=\frac{1020\beta^2}{1-72\beta^2} + 13$ are three constants. 
    Further, by selecting the proper learning rate $\eta=\mathcal{O}(\sqrt{\frac{N}{KT}})$ and let $D=f(w^{0}) - f(w^{\star})$ as the initialization bias, the global model $w^t$ satisfies:
    \begin{equation}
    \label{tx:convergence_order}
        \small    \frac{1}{T}\sum_{t=0}^{T-1}\mathbb{E}\Vert f(w^t)\Vert^2\leq\mathcal{O}\left(\frac{D+L\left(\sigma_l^2 + KG^2\right)}{\sqrt{NKT}}\right).
    \end{equation}
\end{theorem}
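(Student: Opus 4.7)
The plan is to follow the standard descent-lemma approach for non-convex FL analysis, augmented with a careful treatment of the cross-round coupling introduced by the relaxed initialization. First, I would apply Assumption~\ref{tx:assumption_smooth} to the global iterates to obtain
$$f(w^{t+1}) \leq f(w^t) + \langle \nabla f(w^t),\, w^{t+1}-w^t\rangle + \frac{L}{2}\|w^{t+1}-w^t\|^2,$$
and then expand $w^{t+1}-w^t = \frac{1}{N}\sum_{i\in\mathcal{N}}(w_{i,K}^t - w^t)$, where $w_{i,K}^t - w^t = \beta(w^t - w_{i,K}^{t-1}) - \eta\sum_{k=0}^{K-1} g_{i,k}^t$. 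After taking expectation over the random sampling of $\mathcal{N}$ and the local stochasticity (using Assumption~\ref{tx:assumption_bounded_stochastics}), the inner-product term produces the desired $-\eta K\|\nabla f(w^t)\|^2$ together with cross terms involving both the local drift $\nabla f_i(w_{i,k}^t)-\nabla f(w^t)$ and the relaxation offset $\beta(w^t - w_{i,K}^{t-1})$.

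The dominant technical effort is to bound the local drift $\mathbb{E}\|w_{i,k}^t - w^t\|^2$. The standard one-step unrolling argument (using Assumptions~\ref{tx:assumption_smooth}--\ref{tx:assumption_bounded_heterogeneity}) gives a recursion in $k$, but the new wrinkle is the nonzero starting point $\mathbb{E}\|w_{i,0}^t-w^t\|^2 = \beta^2 \mathbb{E}\|w^t - w_{i,K}^{t-1}\|^2$, which couples round $t$ back to round $t-1$. I would therefore introduce an auxiliary potential $\Phi^t := \frac{1}{C}\sum_i \mathbb{E}\|w^t - w_{i,K}^{t-1}\|^2$ and derive an across-round recursion of the schematic form $\Phi^{t+1} \leq c_1\beta^2\,\Phi^t + c_2\eta^2 K \sigma_l^2 + c_3\eta^2 K^2 G^2 + c_4\eta^2 K^2\,\mathbb{E}\|\nabla f(w^t)\|^2$. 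Provided $c_1\beta^2<1$, which is precisely where the restriction $\beta\leq \sqrt{2}/12$ enters, this geometric recursion can be summed; this is the source of the factor $1/(1-72\beta^2)$ that appears in $\kappa_1,\kappa_2$.

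Once the drift is controlled, I would substitute back into the descent lemma, take the full expectation, and average over $t=0,\dots,T-1$. The residual $\mathbb{E}\|\nabla f(w^t)\|^2$ terms on the right-hand side are absorbed into the left under the learning-rate conditions $\eta\leq \min\{N/(2CKL),\,1/(NKL)\}$, leaving a coefficient $\lambda\in(0,1)$ in front of the gradient-norm average and producing the first displayed inequality with the stated $\kappa_1,\kappa_2$. The rate~(\ref{tx:convergence_order}) then follows by plugging in $\eta=\mathcal{O}(\sqrt{N/(KT)})$ and balancing the three resulting terms.

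The main obstacle I expect is the cross-round coupling: unlike vanilla \textit{FedAvg} where rounds are essentially independent given $w^t$, the relaxed initialization makes the current-round local drift depend on the previous round's terminal local states, so the usual one-round recursion no longer closes. The argument must close a joint recursion on $\Phi^t$ and the optimality-gap potential simultaneously, and the restriction on $\beta$ is exactly what keeps the combined recursion contractive. A secondary but nontrivial bookkeeping issue is partial participation: transferring from $\frac{1}{N}\sum_{i\in\mathcal{N}}$ to $\frac{1}{C}\sum_{i\in\mathcal{C}}$ introduces sampling variance which, together with Assumption~\ref{tx:assumption_bounded_heterogeneity}, dictates the precise $N,C,K$ dependence in both the step-size constraint and the final rate.
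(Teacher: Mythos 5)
Your proposal matches the paper's own proof essentially step for step: your auxiliary potential $\Phi^t$ is exactly the paper's divergence term $\Delta^t$, the cross-round recursion $\Phi^{t+1}\leq c_1\beta^2\Phi^t+\cdots$ with contraction requirement $72\beta^2<1$ is the paper's Lemma on the bounded divergence term (which the paper telescopes via $(1-72\beta^2)\Delta^t\leq\Delta^t-\Delta^{t+1}+\cdots$), and the subsequent substitution into the descent lemma, absorption of the residual gradient norms under the stated step-size conditions to produce $\lambda$, and the final averaging and choice $\eta=\mathcal{O}(\sqrt{N/(KT)})$ all coincide with the paper's argument. The approach is correct and not meaningfully different from the paper's.
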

Theorem~\ref{tx:thm1} provides the convergence rate of the \textit{FedInit} method without the \textit{P\L-condition}, which could achieve the $\mathcal{O}(1/\sqrt{NKT})$ with the linear speedup of $N\times$. The dominant term of the training convergence rate is the heterogeneous bias $G$, which is $K\times$ larger than the initialization bias $D$ and stochastic bias $\sigma_l$. According to the formulation~(\ref{tx:convergence_order}), by ignoring the initialization bias, the best local interval $K=\mathcal{O}(\sigma_l^2/G^2)$. This selection also implies that when $G$ increases, which means the local heterogeneity increases, the local interval $K$ is required to decrease appropriately to maintain the same efficiency.
More importantly, though \textit{FedInit} adopts a weighted bias on the initialization state at the beginning of each communication round, the divergence term $\mathbb{E}\Vert w_{i,K}^{t-1} - w^t\Vert^2$ does not affect the convergence bound whether $\beta$ is $0$ or not. This indicates that the FL paradigm allows a divergence of local clients from the optimization perspective. Proof details are stated in Appendix A.2.3.\\

\begin{theorem}
\label{tx:thm2}
    Under Assumptions~\ref{tx:assumption_smooth}$\sim$\ref{tx:assumption_bounded_heterogeneity} and \ref{tx:assumption_PL}, let participation ratio is $N/C$ where $1<N<C$, let the learning rate satisfy $\eta\leq\min\left\{\frac{N}{2CKL},\frac{1}{NKL},\frac{1}{\lambda\mu K}\right\}$ where $K\geq 2$, let the relaxation coefficient $\beta\leq\frac{\sqrt{2}}{12}$, and after training $T$ rounds, the global model $w^t$ generated by \textit{FedInit} satisfies:
    \begin{equation}
        \small    \mathbb{E}[f(w^{T})-f(w^\star)]\leq e^{-\lambda\mu\eta KT}\mathbb{E}[f(w^{0})-f(w^\star)] + \frac{3\kappa_1\eta KL}{2N\lambda\mu}G^2 + \frac{\kappa_2\eta L}{2N\lambda\mu}\sigma_l^2,
    \end{equation}
    where $\lambda, \kappa_1, \kappa_2$ is defined in Theorem~\ref{tx:thm1}.
    Further, by selecting the proper learning rate $\eta=\mathcal{O}(\frac{\log(\lambda\mu NKT)}{\lambda\mu KT})$ and let $D=f(w^{0}) - f(w^{\star})$ as the initialization bias, the global model $w^t$ satisfies:
    \begin{equation}
    \small        \mathbb{E}[f(w^{T})-f(w^\star)] = \widetilde{\mathcal{O}}\left(\frac{D+L(\sigma_l^2+KG^2)}{NKT}\right).
    \end{equation}
\end{theorem}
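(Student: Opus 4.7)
My plan is to leverage the per-round descent inequality that must already underlie the proof of Theorem~\ref{tx:thm1} and then convert it into a linear recurrence using the P\L-condition. Inspecting the shape of Theorem~\ref{tx:thm1}'s bound (dividing $2D/(\lambda\eta KT)$ through) shows that the underlying one-step estimate has the form
\begin{equation*}
\mathbb{E}[f(w^{t+1})] - \mathbb{E}[f(w^{t})] \leq -\frac{\lambda\eta K}{2}\mathbb{E}\Vert\nabla f(w^{t})\Vert^{2} + \frac{\kappa_{2}\eta^{2}KL\sigma_{l}^{2}}{2N} + \frac{3\kappa_{1}\eta^{2}K^{2}LG^{2}}{2N},
\end{equation*}
so I would re-extract exactly this inequality from the auxiliary lemmas that bound the client-drift $\mathbb{E}\Vert w_{i,k}^{t}-w^{t}\Vert^{2}$, the stochastic variance, and the extra offset $\beta(w^{t}-w_{i,K}^{t-1})$ induced by the relaxed initialization. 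The range $\beta\leq\sqrt{2}/12$ and the learning-rate conditions carry over verbatim because nothing in that derivation used P\L, and the constants $\lambda,\kappa_{1},\kappa_{2}$ stay identical to those in Theorem~\ref{tx:thm1}.

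Next I apply Assumption~\ref{tx:assumption_PL} to replace the gradient-norm term by $2\mu\mathbb{E}[f(w^{t})-f(w^\star)]$. Subtracting $f(w^\star)$ from both sides of the descent inequality turns it into the linear recursion
\begin{equation*}
\mathbb{E}[f(w^{t+1})-f(w^\star)] \leq (1-\lambda\mu\eta K)\,\mathbb{E}[f(w^{t})-f(w^\star)] + A,
\end{equation*}
with $A=\frac{\kappa_{2}\eta^{2}KL\sigma_{l}^{2}}{2N}+\frac{3\kappa_{1}\eta^{2}K^{2}LG^{2}}{2N}$. The condition $\eta\leq 1/(\lambda\mu K)$ in the hypothesis is exactly what is needed to keep the contraction factor in $[0,1]$, and hence $1-\lambda\mu\eta K \leq e^{-\lambda\mu\eta K}$. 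Unrolling the recursion $T$ times gives
\begin{equation*}
\mathbb{E}[f(w^{T})-f(w^\star)] \leq e^{-\lambda\mu\eta KT}\,\mathbb{E}[f(w^{0})-f(w^\star)] + A\sum_{t=0}^{T-1}(1-\lambda\mu\eta K)^{t},
\end{equation*}
and bounding the geometric tail by $1/(\lambda\mu\eta K)$ cancels one factor of $\eta K$ in $A$ and yields precisely the first displayed bound of the theorem.

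For the $\widetilde{\mathcal{O}}(1/NKT)$ rate, I plug in $\eta=\Theta\bigl(\log(\lambda\mu NKT)/(\lambda\mu KT)\bigr)$. With this choice the exponential factor becomes $1/(\lambda\mu NKT)$, so the first term is $\widetilde{\mathcal{O}}(D/(NKT))$, while the additive terms inherit one power of $\eta$ and are $\widetilde{\mathcal{O}}(L(\sigma_{l}^{2}+KG^{2})/(NKT))$. A short check that this $\eta$ still satisfies the three learning-rate constraints $\eta\leq\min\{N/(2CKL),\,1/(NKL),\,1/(\lambda\mu K)\}$ for sufficiently large $T$ closes the argument. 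The main obstacle I anticipate is purely bookkeeping: faithfully peeling the one-step descent inequality out of Theorem~\ref{tx:thm1}'s derivation with the correct constants $\kappa_{1},\kappa_{2},\lambda$, since the relaxed-initialization term couples round $t$ to round $t-1$ and must be telescoped carefully so that no extra residual survives after summation.
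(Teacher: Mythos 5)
Your proposal follows essentially the same route as the paper's own proof: the one-step descent inequality you posit is exactly what the paper derives in its ``expanding the smoothness inequality'' step (including the telescoping term $\frac{10\beta^2 L}{(1-72\beta^2)N}(\Delta^t-\Delta^{t+1})$ that you flag as the bookkeeping hazard), after which the paper likewise applies the P\L-condition, unrolls the resulting linear recursion with contraction factor $1-\lambda\mu\eta K\leq e^{-\lambda\mu\eta K}$, bounds the geometric tail by $1/(\lambda\mu\eta K)$, and tunes $\eta=\mathcal{O}(\log(\lambda\mu NKT)/(\lambda\mu KT))$. The only detail worth noting is that the weighted sum $\sum_{t}(1-\lambda\mu\eta K)^{T-1-t}(\Delta^t-\Delta^{t+1})$ does not telescope exactly; the paper bounds it by $\Delta^0-\Delta^T$ (with $\Delta^0=0$), which is the residual you correctly anticipated needing care.
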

To bound the $\mathcal{E}_O$ term, we adopt Assumption~\ref{tx:assumption_PL} and prove that \textit{FedInit} method could achieve the $\mathcal{O}(1/NKT)$ rate where we omit the $\mathcal{O}(\log(NKT))$ term. It maintains the properties stated in the Theorem~\ref{tx:thm1}. Detailed proofs of the convergence bound are stated in Appendix A.2.4.

\subsubsection{Generalization Error $\mathcal{E}_G$}
\textbf{Uniform Stability.} One powerful analysis of the generalization error is the uniform stability~\citep{hardt2016train,kuzborskij2018data,zhang2022stability}. It says, for a general proposed method, its generalization error is always lower than the bound of uniform stability. We assume that there is a new set $\widetilde{\mathcal{C}}$ where $\mathcal{C}$ and $\widetilde{\mathcal{C}}$ differ in at most one data sample on the $i^\star$-th client. Then we denote the $w^T$ and $\widetilde{w}^T$ as the generated model after training $T$ rounds on these two sets, respectively. Thus, we have the following lemma:
\begin{lemma}
\textbf{(Uniform Stability.~{\rm\citep{hardt2016train}})} For the two models $w^T$ and $\widetilde{w}^T$ generated as introduced above, a general method satisfies $\epsilon$-uniformly stability if:
\begin{equation}
    \small \sup_{z_j\sim\{\mathcal{D}_i\}}\mathbb{E}[f(w^T;z_j) - f(\widetilde{w}^T;z_j)]\leq \epsilon.
\end{equation}
Moreover, if a general method satisfies $\epsilon$-uniformly stability, then its generalization error satisfies $\mathcal{E}_G\leq \sup_{z_j\sim\{\mathcal{D}_i\}}\mathbb{E}[f(w^T;z_j) - f(\widetilde{w}^T;z_j)]\leq \epsilon$~{\rm\citep{zhang2022stability}}.
\end{lemma}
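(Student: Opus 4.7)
The first clause of the lemma is the definition of $\epsilon$-uniform stability, so the content to prove is the implication: $\epsilon$-uniformly stable $\Rightarrow \mathcal{E}_G \leq \epsilon$. The plan is to use the classical ghost-sample/exchangeability argument (Bousquet--Elisseeff, as specialized by \citet{hardt2016train}), adapted to the two-level federated decomposition of the risk over clients and local samples.

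First, I would write the generalization gap in per-client, per-sample form,
\begin{equation*}
\mathcal{E}_G = \frac{1}{C}\sum_{i=1}^C \mathbb{E}\!\left[\,\mathbb{E}_{z\sim\mathcal{D}_i}\!f_i(w^T;z) \;-\; \frac{1}{S_i}\sum_{z_j\in\mathcal{S}_i} f_i(w^T;z_j)\right].
\end{equation*}
For each fixed client index $i^\star$ and sample position $j^\star$, I would introduce an independent ghost draw $z'_{j^\star}\sim\mathcal{D}_{i^\star}$ and let $\widetilde{\mathcal{C}}$ be the dataset obtained from $\mathcal{C}$ by replacing $z_{j^\star}$ with $z'_{j^\star}$. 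Coupling the internal randomness of \textit{FedInit} (client subsampling, mini-batch selection, etc.) across the two executions, the outputs $w^T$ and $\widetilde{w}^T$ are two models trained on datasets that differ in exactly one sample, which is precisely the setting in which the $\epsilon$-stability hypothesis applies.

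The key step is an exchangeability identity: since $z_{j^\star}$ and $z'_{j^\star}$ are i.i.d. from $\mathcal{D}_{i^\star}$, renaming them under the joint expectation gives
\begin{equation*}
\mathbb{E}\!\left[f_{i^\star}(w^T;z'_{j^\star})\right] \;=\; \mathbb{E}\!\left[f_{i^\star}(\widetilde{w}^T;z_{j^\star})\right],
\end{equation*}
and the left-hand side equals $\mathbb{E}[F_{i^\star}(w^T)]$ because $z'_{j^\star}$ is independent of $w^T$. Subtracting $\mathbb{E}[f_{i^\star}(w^T;z_{j^\star})]$ from both sides, averaging over $j^\star\in\{1,\dots,S_{i^\star}\}$, and then averaging over $i^\star$ yields
\begin{equation*}
\mathcal{E}_G \;=\; \frac{1}{C}\sum_{i^\star}\frac{1}{S_{i^\star}}\sum_{j^\star}\mathbb{E}\!\left[f_{i^\star}(\widetilde{w}^T;z_{j^\star}) - f_{i^\star}(w^T;z_{j^\star})\right].
\end{equation*}
Each summand is bounded by $\sup_{z\sim\mathcal{D}_{i^\star}}\mathbb{E}[f_{i^\star}(\widetilde{w}^T;z) - f_{i^\star}(w^T;z)]$, which is at most $\epsilon$ by the $\epsilon$-uniform stability hypothesis; hence $\mathcal{E}_G \leq \epsilon$.

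The main subtlety, and the only place where a federated proof differs from the centralized one, is the coupling step: the per-round active set $\mathcal{N}$, the local mini-batch draws, and the relaxed-initialization offsets must be shared between the $w^T$ and $\widetilde{w}^T$ runs so that the only divergence is driven by the single swapped datapoint on client $i^\star$. Once this coupling is pinned down, the exchangeability identity is immediate and the bound follows mechanically; notably, neither smoothness, $\mathrm{P\L}$-condition, nor non-convexity is invoked here — those assumptions reappear only later, when one turns $\epsilon$ into an explicit function of $L$, $\eta$, $K$, $T$, $\beta$, and the participation ratio $N/C$.
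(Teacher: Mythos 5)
Your argument is correct: it is the standard ghost-sample/exchangeability proof of the ``stability implies generalization'' implication, and the paper itself does not prove this lemma at all --- it imports it verbatim from \citet{hardt2016train} and \citet{zhang2022stability}, so there is nothing in the paper's appendix to compare against. Your per-client, per-sample decomposition of $\mathcal{E}_G$, the renaming identity $\mathbb{E}[f_{i^\star}(w^T;z'_{j^\star})]=\mathbb{E}[f_{i^\star}(\widetilde{w}^T;z_{j^\star})]$, and the observation that the coupling of the algorithm's internal randomness (client sampling, mini-batches, initialization offsets) is the only federated-specific ingredient are all exactly what a self-contained proof would need. The only points worth tightening are cosmetic: the supremum in the stability definition should be applied conditionally on the realized datasets before averaging (i.e., the bound holds pointwise over neighboring dataset pairs and is then integrated), and since your identity produces $f(\widetilde{w}^T)-f(w^T)$ while the lemma's display writes $f(w^T)-f(\widetilde{w}^T)$, one should note that the supremum over neighboring pairs is symmetric in the two datasets, so both orderings are covered.
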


\begin{theorem}
\label{tx:thm3}
    Under Assumptions~\ref{tx:assumption_smooth}, \ref{tx:assumption_bounded_stochastics}, \ref{tx:assumption_Lipschitz}, and \ref{tx:assumption_PL}, let all conditions above be satisfied, let learning rate $\eta=\mathcal{O}(\frac{1}{KT})=\frac{c}{T}$ where $c=\frac{\mu_0}{K}$ is a constant, and let $\vert\mathcal{S}_i\vert=S$ as the number of the data samples, by randomly selecting the sample $z$, we can bound the uniform stability of our proposed \textit{FedInit} as:
    \begin{equation}
       \small
    \begin{split}
        \small    &\quad \ \mathbb{E}\Vert f(w^{T+1};z) - f(\widetilde{w}^{T+1};z)\Vert \\
        &\leq \frac{1}{S-1}\!\left[\frac{2(L_G^2 + SL_G\sigma_l)(UTK)^{cL}}{L}\right]^\frac{1}{1+ c L}\!\!\!\!\! + (1+\beta)^{\frac{1}{\beta cL}}\!\left[\frac{ULTK}{2(L_G^2 + SL_G\sigma_l)}\right]^\frac{cL}{1+ c L}\!\sum_{t=1}^T\!\frac{\sqrt{\Delta^t}}{T},
    \end{split}
    \end{equation}
    where $U$ is a constant and $\Delta^{t}=\frac{1}{C}\sum_{i\in\mathcal{C}}\mathbb{E}\Vert w_{i,K}^{t-1} - w^t\Vert^2$ is the divergence term at round $t$.
\end{theorem}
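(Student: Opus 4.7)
}
The plan is to adapt the uniform-stability framework of~\citep{hardt2016train,zhou2021towards} to the two coupled trajectories that \textit{FedInit} produces on $\mathcal{C}$ and $\widetilde{\mathcal{C}}$. Concretely, I would first apply Assumption~\ref{tx:assumption_Lipschitz} to reduce the desired loss difference to a parameter distance, $\mathbb{E}\vert f(w^{T+1};z)-f(\widetilde{w}^{T+1};z)\vert\leq L_G\,\mathbb{E}\Vert w^{T+1}-\widetilde{w}^{T+1}\Vert$, and then drive the whole argument through the scalar quantity $\delta^t:=\mathbb{E}\Vert w^t-\widetilde{w}^t\Vert$. Because the trajectories differ only in one sample on client $i^\star$, I would couple the randomness (client subset $\mathcal{N}$ and mini-batch draws) across the two runs so that all gradient steps agree except when the distinguishing sample is drawn.

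Next I would derive a single-round recursion for $\delta^{t+1}$. For this I would unfold one full round on an active client in three sub-steps: (i) the relaxed initialization, which gives $\Vert w_{i,0}^t-\widetilde{w}_{i,0}^t\Vert\leq(1+\beta)\Vert w^t-\widetilde{w}^t\Vert+\beta\Vert w_{i,K}^{t-1}-\widetilde{w}_{i,K}^{t-1}\Vert$, so that the offset contributes a $(1+\beta)$ expansion plus a history-dependent term that I will control by $\sqrt{\Delta^t}$ via Assumption~\ref{tx:assumption_bounded_stochastics} and the identity $\mathbb{E}\Vert w_{i,K}^{t-1}-\widetilde{w}_{i,K}^{t-1}\Vert\leq\delta^t+2\,\mathbb{E}\Vert w_{i,K}^{t-1}-w^t\Vert$; (ii) the $K$ local SGD steps, handled by the Hardt--Recht--Singer split: with probability at most $1/(S-1)$ the distinguishing datum is drawn in a given step, adding a $2\eta L_G$ (or $2\eta\sigma_l$) Lipschitz jump, and otherwise the shared stochastic gradient induces a $(1+\eta L)$-expansion by Assumption~\ref{tx:assumption_smooth}; (iii) aggregation across $\mathcal{N}$, which is an average and therefore non-expansive. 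Collecting these terms leaves a recursion of the form $\delta^{t+1}\leq(1+\beta)(1+\eta L)^K\delta^t+\tfrac{2\eta K L_G}{S-1}+c_\beta\sqrt{\Delta^t}$ up to constants, where $c_\beta$ absorbs the $\beta$ prefactors.

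With this recursion in hand I would close the argument with the standard ``skip-the-first-$t_0$-rounds'' trick. Since the sample $z$ is drawn uniformly, on the event $\{t<t_0\}$ I use the crude bound $\mathbb{E}\vert f(w^{T+1};z)-f(\widetilde{w}^{T+1};z)\vert\leq\tfrac{t_0}{S-1}\cdot 2L_G$ (Lipschitz contribution from the differing-sample event), while on $\{t\geq t_0\}$ I iterate the recursion. Plugging $\eta=c/T$ makes $(1+\eta L)^K\leq e^{cLK/T}$ and $[(1+\beta)(1+\eta L)^K]^{T-t_0}\lesssim(1+\beta)^{T-t_0}(T/t_0)^{cL}$ after the usual $\log$-integration; optimizing over $t_0$ then balances the two contributions and yields the exponent $cL/(1+cL)$ together with the prefactor $1/(S-1)$ and the $L_G^2+SL_G\sigma_l$ factor in the first summand. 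Iterating the inhomogeneous drift term $c_\beta\sqrt{\Delta^t}$ through the same recursion produces the average $\sum_{t=1}^T\sqrt{\Delta^t}/T$, while the accumulated $(1+\beta)$ factors collapse into $(1+\beta)^{1/(\beta cL)}$ after the same $t_0$-optimization (using $\log(1+\beta)\approx\beta$ for small $\beta$).

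The main obstacle I expect is the coupled, history-dependent nature of the recursion introduced by the relaxed initialization: unlike vanilla local SGD, the starting point of round $t$ depends on $w_{i,K}^{t-1}$, so $\delta^{t+1}$ is not a simple function of $\delta^t$ alone. Handling this cleanly requires separating the ``ideal'' component proportional to $\delta^t$ from the ``drift'' component proportional to $\Vert w_{i,K}^{t-1}-w^t\Vert$, and then feeding the latter into the recursion as an exogenous input that is eventually controlled by $\sqrt{\Delta^t}$. A secondary difficulty is calibrating the probability that the distinguishing sample is picked during a round under partial participation with $N$ active clients and $K$ local steps, which must be tracked carefully to obtain the correct $1/(S-1)$ scaling rather than a loose $K/(S-1)$ bound.
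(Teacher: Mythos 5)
Your proposal follows essentially the same route as the paper's proof: reduce the loss gap to a parameter distance via $L_G$-Lipschitzness combined with the Hardt--Recht--Singer event-splitting trick ($P(\xi^c)\le t_0/S$), run the per-step recursion that distinguishes the same-sample case ($(1+\eta L)$-expansion plus $2\eta\sigma_l$) from the different-sample case ($2\eta(\sigma_l+L_G)$ jump), absorb the relaxed initialization as an extra per-round drift of size $(1+\beta)\sqrt{\Delta^t}$ (the paper treats $w_{i,0}^t-w_{i,K}^{t-1}$ as a ``virtual update'' bounded by $(1+\beta)\sqrt{\Delta^t}$, which is equivalent in effect to your triangle-inequality decomposition), and finally optimize over $t_0$ to obtain the $\frac{cL}{1+cL}$ exponent. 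Two minor calibration points: the paper bounds the bad event by $Ut_0/S$ with $U=\sup_{w,z}f(w;z)$ rather than your $2L_Gt_0/(S-1)$, which is exactly where the constant $U$ in the stated bound originates; and the $(TK/t_0)^{cL}$ growth factor genuinely requires the per-iteration decaying rate $\eta_\tau=c/\tau$ that the paper's proof actually uses (despite the theorem statement writing $\eta=c/T$), since with a literally constant $\eta=c/T$ your product of expansion factors collapses to the bounded constant $e^{\mu_0L}$, so the ``log-integration'' step should be made explicit with the decaying schedule.
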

For the generalization error, Theorem~\ref{tx:thm3} indicates that $\mathcal{E}_G$ term contains two main parts. The first part comes from the stochastic gradients as the vanilla centralized training process~\citep{hardt2016train}, which is of the order $\mathcal{O}((TK)^\frac{cL}{1+cL}/S)$. The constant $c$ is of the order $\mathcal{O}(1/K)$ as $c=\frac{\mu_0}{K}$, thus we have $\frac{cL}{1+cL}=\frac{\mu_0 L}{K + \mu_0 L}$. If we assume the $\mu_0 L$ is generally small, we always expect to adopt a larger $K$ in the FL paradigm to reduce generalization error. For instance, if we select $K\rightarrow\infty$, then $\mathcal{O}((TK)^\frac{c L}{1+c L}/S)\rightarrow \mathcal{O}(T^\frac{cL}{1+cL}/S)$ which is a very strong upper bound of the generalization error. However, the selection of local interval $K$ must be restricted from the optimization conditions and we will discuss the details in Section~\ref{tx:excess_risk}. In addition, the second part in Theorem~\ref{tx:thm3} comes from the divergence term, which is a unique factor in the FL paradigm. As we mentioned above, the divergence term measures the authentic client-drift in the training process. The divergence term is not affected by the number of samples $S$ and it is only related to the proposed method and the local heterogeneity of the dataset. Proof details are stated in Appendix A.3.

\subsubsection{Divergence Term}
In the former two parts, we provide the complete theorem to measure optimization error $\mathcal{E}_O$ and generalization error $\mathcal{E}_G$. And we notice that, in the FL paradigm, the divergence term mainly affects the generalization ability of the model instead of the optimization convergence. In this part, we focus on the analysis of the divergence term of our proposed \textit{FedInit} method. Due to the relaxed initialization at the beginning of each communication round, according to the Algorithm~\ref{tx:algorithm}, we have $w_{i,K}^t=w^t+\beta(w^t - w_{i,K}^{t-1})-\eta\sum_{k=0}^{K-1}g_{i,k}^t$. Thus, we have the following recursive relationship:
\begin{equation}
\label{tx:divergence_relationship}
    \textstyle  \small  \underbrace{w^{t+1} - w_{i,K}^{t}}_{\textit{local divergence at $t+1$}} = \beta\underbrace{(w_{i,K}^{t-1} - w^t)}_{\textit{local divergence at $t$}}  + \underbrace{(w^{t+1} -w^t)}_{\textit{global update}} + \underbrace{\sum_{k=0}^{K-1}\eta g_{i,k}^t}_{\textit{local updates}}.
\end{equation}
According to the formulation~(\ref{tx:divergence_relationship}), we can bound the divergence $\Delta^t$ via the following two theorems.

\begin{theorem}
\label{tx:thm4}
Under Assumptions~\ref{tx:assumption_smooth}$\sim$\ref{tx:assumption_bounded_heterogeneity}, we can bound the divergence term as follows.
Let the learning rate satisfy $\eta\leq\min\left\{\frac{N}{2CKL},\frac{1}{NKL},\frac{\sqrt{N}}{\sqrt{C}KL}\right\}$ where $K\geq 2$, and after training $T$ rounds, let $0 < \beta < \frac{\sqrt{6}}{24}$, the divergence term $\{\Delta^t\}$ generated by \textit{FedInit} satisfies:
\begin{equation}
     \small    \frac{1}{T}\sum_{t=0}^{T-1}\Delta^{t}=\mathcal{O}\left(\frac{N(\sigma_l^2+KG^2)}{T} + \frac{\sqrt{NK}B^2\left[D + L(\sigma_l^2 + KG^2)\right]}{T^{\frac{3}{2}}} \right).
\end{equation}
\end{theorem}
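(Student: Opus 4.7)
My plan is to turn the one-step identity~(\ref{tx:divergence_relationship}) into a geometric recursive inequality on $\Delta^{t+1}$ and then telescope. Writing, for a fixed client $i$,
\begin{equation}
w^{t+1} - w_{i,K}^{t} = \beta(w_{i,K}^{t-1} - w^t) + (w^{t+1} - w^t) + \eta\sum_{k=0}^{K-1} g_{i,k}^t,
\end{equation}
I would apply the weighted Young's inequality $\Vert a+b+c\Vert^2 \le (1+\alpha)\Vert a\Vert^2 + 2(1+1/\alpha)(\Vert b\Vert^2 + \Vert c\Vert^2)$ with $\alpha$ chosen so that $(1+\alpha)\beta^2 < 1$. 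For $\beta < \sqrt{6}/24$ this leaves enough headroom to absorb the secondary $\beta^2$ contributions that appear below.

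Next, I would bound the three resulting pieces. The first gives $(1+\alpha)\beta^2\Delta_i^t$. For the local-update piece $\eta\sum_k g_{i,k}^t$, I would split stochastic noise from mean via Assumption~\ref{tx:assumption_bounded_stochastics} to obtain an $\eta^2 K\sigma_l^2$ term, then bound each $\mathbb{E}\Vert\nabla f_i(w_{i,k}^t)\Vert^2$ by $G^2 + B^2\mathbb{E}\Vert\nabla f(w_{i,k}^t)\Vert^2$ via Assumption~\ref{tx:assumption_bounded_heterogeneity}, and finally transfer from $w_{i,k}^t$ to $w^t$ using $L$-smoothness together with a standard inner-loop drift bound on $\mathbb{E}\Vert w_{i,k}^t - w_{i,0}^t\Vert^2$ obtained by unrolling local SGD. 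For the global-update piece, I would exploit $w^{t+1} - w^t = \frac{1}{N}\sum_{i\in\mathcal{N}}[\beta(w^t - w_{i,K}^{t-1}) - \eta\sum_k g_{i,k}^t]$, which by Jensen produces another $\beta^2\Delta^t$ contribution plus an identical local-update term. Averaging over clients and over the client sampling then yields a self-referential recursion of the form
\begin{equation}
\Delta^{t+1} \le \rho\,\Delta^t + c_1\eta^2 K\sigma_l^2 + c_2\eta^2 K^2 G^2 + c_3\eta^2 K^2 B^2\,\mathbb{E}\Vert\nabla f(w^t)\Vert^2,
\end{equation}
where the contraction factor $\rho<1$ is guaranteed by the smallness of $\beta$.

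Telescoping from $t=0$ to $T-1$ (with $\Delta^0 = 0$ from the algorithm's initialization $w^{-1}=w_{i,0}^{-1}=w^0$) gives
\begin{equation}
\frac{1}{T}\sum_{t=0}^{T-1}\Delta^t \le \frac{c_1\eta^2 K\sigma_l^2 + c_2\eta^2 K^2 G^2}{1-\rho} + \frac{c_3\eta^2 K^2 B^2}{1-\rho}\cdot\frac{1}{T}\sum_{t=0}^{T-1}\mathbb{E}\Vert\nabla f(w^t)\Vert^2.
\end{equation}
I would then invoke Theorem~\ref{tx:thm1}, substituting $\tfrac{1}{T}\sum_t \mathbb{E}\Vert\nabla f(w^t)\Vert^2 = \mathcal{O}((D+L(\sigma_l^2+KG^2))/\sqrt{NKT})$, and plug in $\eta = \mathcal{O}(\sqrt{N/(KT)})$. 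Since $\eta^2 K = \mathcal{O}(N/T)$ and $\eta^2 K^2 = \mathcal{O}(NK/T)$, the first bracket collapses to $\mathcal{O}(N(\sigma_l^2+KG^2)/T)$ and the second becomes $\mathcal{O}(B^2\sqrt{NK}(D+L(\sigma_l^2+KG^2))/T^{3/2})$, reproducing Theorem~\ref{tx:thm4}.

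The hard part will be handling the coupling between the outer divergence recursion and the inner-loop drift of the local iterates. The local gradients $g_{i,k}^t$ are evaluated at $w_{i,k}^t$, whose distance from the relaxed initialization $w^t + \beta(w^t - w_{i,K}^{t-1})$ must itself be controlled by a bound that references $\Delta^t$; absorbing this back into the recursion creates a secondary $\beta^2$-weighted $\Delta^t$ term whose coefficient must remain small enough to keep the net contraction factor strictly below one. This is what forces the slightly tighter condition $\beta<\sqrt{6}/24$ here, compared with $\beta\le\sqrt{2}/12$ in Theorems~\ref{tx:thm1} and~\ref{tx:thm2}, as well as the extra learning-rate restriction $\eta \le \sqrt{N}/(\sqrt{C}KL)$. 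Beyond this delicate bookkeeping, the remainder of the argument is routine.
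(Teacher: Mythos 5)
Your proposal is correct and follows essentially the same route as the paper: derive the contraction recursion $\Delta^{t+1}\le\rho\,\Delta^t+\mathcal{O}(\eta^2K\sigma_l^2+\eta^2K^2G^2+\eta^2K^2B^2\mathbb{E}\Vert\nabla f(w^t)\Vert^2)$ with $\rho=96\beta^2<1$ (the extra $\beta^2$-weighted contribution from the inner-loop drift is exactly what upgrades the paper's $72\beta^2$ to $96\beta^2$ and forces $\beta<\sqrt{6}/24$ and $\eta\le\sqrt{N}/(\sqrt{C}KL)$, as you anticipated), then telescope and substitute the averaged gradient-norm bound from Theorem~\ref{tx:thm1} with $\eta=\mathcal{O}(\sqrt{N/(KT)})$. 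The only cosmetic difference is that the paper uses the plain inequality $\Vert a+b+c\Vert^2\le 3(\Vert a\Vert^2+\Vert b\Vert^2+\Vert c\Vert^2)$ rather than a weighted Young's inequality, which changes nothing in the final order.
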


Theorem~\ref{tx:thm4} points out the convergence order of the divergence $\Delta^t$ generated by \textit{FedInit} during the training process. This bound matches the conclusion in Theorem~\ref{tx:thm1} with the same learning rate. The dominant term achieves the $\mathcal{O}(NK/T)$ rate on the heterogeneity bias $G$. It could be seen that the number of selected clients $N$ will inhibit its convergence and the local consistency linearly increases with $N$. Different from the selection in Theorem~\ref{tx:thm1}, local interval $K$ is expected as small enough to maintain the high consistency. Also, the initialization bias $D$ is no longer dominant in consistency. We omit the constant weight $\frac{1}{1-96\beta^2}$ in this upper bound. Proof details are stated in Appendix A.2.5.

\begin{theorem}
\label{tx:thm5}
    Under Assumptions~\ref{tx:assumption_smooth}$\sim$\ref{tx:assumption_bounded_heterogeneity} and \ref{tx:assumption_PL}, we can bound the divergence term as follows. Let the learning rate satisfy $\eta\leq\min\left\{\frac{N}{2CKL},\frac{1}{NKL},\frac{1}{\lambda\mu K}\right\}$ where $K\geq 2$, and after training $T$ rounds, let $0<\beta<\frac{\sqrt{6}}{24}$, the divergence term $\Delta^T$ generated by \textit{FedInit} satisfies:
    \begin{equation}
    \small
        \Delta^{T} = \widetilde{\mathcal{O}}\left(\frac{D + G^2}{T^2} + \frac{N\sigma_l^2 + KG^2}{NKT^2} + \frac{1}{NKT^3}\right).
    \end{equation}
\end{theorem}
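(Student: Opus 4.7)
The starting point is the one-step identity derived right before the theorem, namely equation (\ref{tx:divergence_relationship}), which expresses $w^{t+1}-w_{i,K}^{t}$ as $\beta(w_{i,K}^{t-1}-w^{t})$ plus the residual $(w^{t+1}-w^{t})+\eta\sum_{k}g_{i,k}^{t}$. My plan is to square this identity, apply a Young-type inequality $\|a+b\|^{2}\le(1+\alpha)\|a\|^{2}+(1+\alpha^{-1})\|b\|^{2}$ with $\alpha$ chosen small enough that $(1+\alpha)\beta^{2}<1$ (feasible because $\beta<\sqrt{6}/24$), then take conditional expectations and average over $i\in\mathcal{C}$. This reduces the vector identity to a scalar contraction $\Delta^{t+1}\le\rho\,\Delta^{t}+\mathcal{R}^{t}$, with $\rho=(1+\alpha)\beta^{2}<1$ and a residual $\mathcal{R}^{t}$ still to be controlled.

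Next I would expand the global update $w^{t+1}-w^{t}=\tfrac{1}{N}\sum_{j\in\mathcal{N}}[\beta(w^{t}-w_{j,K}^{t-1})-\eta\sum_{k}g_{j,k}^{t}]$ inside $\mathcal{R}^{t}$. This produces one piece proportional to $\beta^{2}\Delta^{t}$, which I would merge into the contractive term by tightening the choice of $\alpha$, and one piece proportional to $\eta^{2}K\sum_{k}\mathbb{E}\|g_{i,k}^{t}\|^{2}$ together with its cross-client average. Applying Assumption \ref{tx:assumption_bounded_stochastics} and \ref{tx:assumption_bounded_heterogeneity}, and reusing the one-round local-drift estimates from the proof of Theorem \ref{tx:thm1}, I would bound $\mathcal{R}^{t}$ by a sum of $\mathcal{O}(\eta^{2}K\sigma_{l}^{2}/N)$, $\mathcal{O}(\eta^{2}K^{2}G^{2})$ and $\mathcal{O}(\eta^{2}K^{2}B^{2}\mathbb{E}\|\nabla f(w^{t})\|^{2})$, with the $1/N$ factor coming from the cross-client averaging in the global step.

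Here the P\L-condition enters. Smoothness gives $\mathbb{E}\|\nabla f(w^{t})\|^{2}\le 2L\,\mathbb{E}[f(w^{t})-f(w^{\star})]$, and Theorem \ref{tx:thm2} has already bounded the right-hand side by $e^{-\lambda\mu\eta Kt}D$ plus a stationary noise floor of order $\eta L(\sigma_{l}^{2}+KG^{2})/(N\lambda\mu)$. Substituting this into $\mathcal{R}^{t}$ yields a source term with an exponentially-decaying head and a small stationary tail. Unrolling the scalar recursion gives
\begin{equation*}
\Delta^{T}\;\le\;\rho^{T}\Delta^{0}+\sum_{t=0}^{T-1}\rho^{T-1-t}\mathcal{R}^{t},
\end{equation*}
and the two geometric sums are handled by the usual $\sum_{t}\rho^{T-1-t}e^{-\lambda\mu\eta Kt}$ argument: whichever of the two decay rates is slower dominates, and the dominant one absorbs the other up to a constant. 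Finally, plugging in $\eta=\widetilde{\mathcal{O}}(1/(\lambda\mu KT))$ from Theorem \ref{tx:thm2}, the $\eta^{2}K^{2}$ prefactor produces the $1/T^{2}$ rate on the heterogeneity and stochastic terms (giving $G^{2}/T^{2}$ and $(N\sigma_{l}^{2}+KG^{2})/(NKT^{2})$ after the $1/N$ split), while the $\eta^{2}K^{2}\cdot\eta L(\sigma_{l}^{2}+KG^{2})/(N\lambda\mu)$-type contribution from the P\L noise floor produces the higher-order $1/(NKT^{3})$ correction.

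The main obstacle will be reconciling the two different decay rates cleanly: the geometric contraction $\rho<1$ from the Young inequality and the exponential decay $e^{-\lambda\mu\eta K}$ from P\L act on the convolution sum in opposite directions, and the crossover point depends on the learning-rate choice, so I will have to verify that both regimes yield the same stated rate modulo the suppressed polylogarithmic factors that produce the $\widetilde{\mathcal{O}}$. A secondary difficulty, already flagged above, is that the residual $\mathcal{R}^{t}$ itself hides additional $\beta^{2}\Delta^{t}$ contributions through the global update; the Young parameter $\alpha$, the bound $\beta<\sqrt{6}/24$, and the constant $\tfrac{1}{1-96\beta^{2}}$ omitted in the statement must be tuned jointly so that after moving all $\Delta^{t}$-proportional pieces to the left-hand side the coefficient remains strictly below 1.
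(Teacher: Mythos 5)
Your proposal is correct and follows essentially the same route as the paper: square the one-step recursion (\ref{tx:divergence_relationship}), absorb all $\beta^2\Delta^t$-proportional pieces (including those hidden inside the global update) into a single contraction with coefficient $96\beta^2<1$, control the $\mathbb{E}\Vert\nabla f(w^t)\Vert^2$ source term via the \textit{P\L}-based linear convergence of Theorem~\ref{tx:thm2}, and unroll the resulting two-rate convolution under the learning rate $\eta=\widetilde{\mathcal{O}}(1/(\lambda\mu KT))$. The only cosmetic difference is that you convert the gradient norm to a loss gap via the smoothness inequality $\Vert\nabla f(w)\Vert^2\leq 2L(f(w)-f(w^\star))$, whereas the paper routes through its descent lemma and the bound $\mathbb{E}[f(w^t)-f(w^{t+1})]\leq\mathbb{E}[f(w^t)-f(w^\star)]$; both yield the stated rate.
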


Theorem~\ref{tx:thm5} indicates the convergence of the divergence $\Delta$ under the \textit{P\L-condition} which matches the conclusion in Theorem~\ref{tx:thm2} with the same learning rate selection. Assumption~\ref{tx:assumption_PL} establishes a relationship between the gradient norm and the loss difference on the non-convex function $f$. Different from the Theorem~\ref{tx:thm4}, the initialization bias $D$ and the heterogeneous bias $G$ are the dominant terms. Under Assumption~\ref{tx:assumption_PL}, the \textit{FedInit} supports a larger local interval $K$ in the training process. This conclusion also matches the selection of $K$ in Theorem~\ref{tx:thm2}. When the model converges, \textit{FedInit} guarantees the local models towards the global optimum under at least $\mathcal{O}(1/T^2)$ rate. Similarly, we omit the constant weight $\frac{1}{1-96\beta^2}$ and we will discuss the $\beta$ in Section~\ref{tx:excess_risk}. Proof details are stated in Appendix A.2.6.

\subsubsection{Excess Risk}
\label{tx:excess_risk}
In this part, we analyze the excess risk $\mathcal{E}_E$ of \textit{FedInit} method. According to the theorems above,
\begin{theorem}
\label{tx:thm6}
    Under Assumption~\ref{tx:assumption_smooth}$\sim$\ref{tx:assumption_PL}, let the participation ratio is $N/C$ where $1<N<C$, let the learning rate satisfies $\eta\leq\min\{\frac{N}{2CKL},\frac{1}{NKL},\frac{1}{\lambda\mu K},\}$ where $K\geq 2$, let the relaxed coefficient $0\leq\beta<\frac{\sqrt{6}}{24}$, and let $\vert\mathcal{S}_i\vert=S$. By selecting the learning rate $\eta=\mathcal{O}(\frac{\log(\lambda\mu NKT)}{\lambda\mu KT})\leq\frac{c}{t}$, after training $T$ communication rounds, the excess risk of the \textit{FedInit} method achieves:
    \begin{equation}
    \label{tx:excess_risk_bound}
    \small        \mathcal{E}_E\leq \underbrace{\widetilde{\mathcal{O}}\left(\frac{D+L(\sigma_l^2 + KG^2)}{NKT}\right)}_{\textit{optimization bias}} + \underbrace{\mathcal{O}\left(\frac{1}{S}\left[\sigma_l(TK)^{cL}\right]^\frac{1}{1+cL}\right)}_{\textit{stability bias}} + \underbrace{\widetilde{\mathcal{O}}\left(\frac{\sqrt{D+G^2}K^\frac{cL}{1+cL}}{T^\frac{1}{1+cL}}\right)}_{\textit{divergence bias}}.
    \end{equation}
\end{theorem}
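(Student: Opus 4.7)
The plan is to derive the excess-risk bound by invoking the three previously established results as building blocks: Theorem~\ref{tx:thm2} controls the optimization error $\mathcal{E}_O$, Theorem~\ref{tx:thm3} supplies a uniform-stability bound on the generalization error $\mathcal{E}_G$ in terms of the divergence sequence $\{\Delta^t\}$, and Theorem~\ref{tx:thm5} in turn controls each $\Delta^t$ under the P\L-condition. The first step is to verify that the learning rate $\eta=\mathcal{O}\bigl(\log(\lambda\mu NKT)/(\lambda\mu KT)\bigr)$ specified in the statement simultaneously satisfies the constraints demanded by Theorems~\ref{tx:thm2},~\ref{tx:thm3}, and~\ref{tx:thm5}; since every constraint has the form $\eta\lesssim 1/(KT\cdot\text{const})$ up to log factors, this is a routine bookkeeping check, and the joint admissibility of $\beta\in(0,\sqrt{6}/24)$ is inherited from the tighter of the three $\beta$-conditions.

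Given this compatibility, Theorem~\ref{tx:thm2} immediately yields $\mathcal{E}_O=\mathbb{E}[f(w^T)-f(w^\star)]=\widetilde{\mathcal{O}}\bigl((D+L(\sigma_l^2+KG^2))/(NKT)\bigr)$, which is the optimization-bias term in~(\ref{tx:excess_risk_bound}). For the generalization part, the uniform-stability lemma gives $\mathcal{E}_G\le \sup_{z}\mathbb{E}[f(w^T;z)-f(\widetilde{w}^T;z)]$, so applying Theorem~\ref{tx:thm3} produces the explicit stability-bias term together with the divergence-dependent expression $(1+\beta)^{1/(\beta cL)}\bigl[\tfrac{ULTK}{2(L_G^2+SL_G\sigma_l)}\bigr]^{cL/(1+cL)}\cdot\tfrac{1}{T}\sum_{t=1}^T\sqrt{\Delta^t}$. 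Treating $(1+\beta)^{1/(\beta cL)}$, $L$, $L_G$, $\sigma_l$, and the constant $U$ as absorbable into the hidden constants (valid because $\beta$ and $c$ are both fixed positive constants bounded away from zero), the leading dependence of this prefactor collapses to $(TK)^{cL/(1+cL)}$.

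The remaining work is to estimate the Cesaro-type average of $\sqrt{\Delta^t}$. Applying Theorem~\ref{tx:thm5} at every round $t$ (not only at $t=T$) gives $\Delta^t=\widetilde{\mathcal{O}}\bigl((D+G^2)/t^2+(N\sigma_l^2+KG^2)/(NKt^2)+1/(NKt^3)\bigr)$, so the dominating order of $\sqrt{\Delta^t}$ is $\widetilde{\mathcal{O}}(\sqrt{D+G^2}/t)$. Summing and dividing by $T$ yields $\tfrac{1}{T}\sum_{t=1}^T\sqrt{\Delta^t}=\widetilde{\mathcal{O}}(\sqrt{D+G^2}\log(T)/T)=\widetilde{\mathcal{O}}(\sqrt{D+G^2}/T)$, where the logarithm is absorbed into the $\widetilde{\mathcal{O}}$ notation. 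Multiplying by the $(TK)^{cL/(1+cL)}$ prefactor produces $\widetilde{\mathcal{O}}\bigl(\sqrt{D+G^2}\,K^{cL/(1+cL)}/T^{1/(1+cL)}\bigr)$, which is precisely the divergence-bias term. Adding the three contributions completes~(\ref{tx:excess_risk_bound}).

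The main obstacle is not the algebra but the consistency check on the hidden constants and the identification of the dominating order. In particular, the blow-up factor $(1+\beta)^{1/(\beta cL)}$ is finite only because $\beta,c$ are bounded away from zero (so the exponent $1/(\beta cL)$ is a bounded constant that the $\widetilde{\mathcal{O}}$ swallows), and the sub-dominant pieces of $\Delta^t$ must be verified to be asymptotically negligible relative to $\sqrt{D+G^2}/t$ under the stated learning-rate choice. The $S$-dependence that arises from Theorem~\ref{tx:thm3} inside the stability bracket is kept schematic via $\mathcal{O}(1/S)$, consistent with the form written in~(\ref{tx:excess_risk_bound}). Once these verifications are performed, the three bounds line up term-by-term with the claimed excess-risk expression.
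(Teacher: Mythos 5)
Your decomposition and the way you assemble the three terms match the paper exactly: Theorem~\ref{tx:thm6} has no separate appendix proof, and the paper obtains it precisely as you describe, by splitting $\mathcal{E}_E=\mathcal{E}_O+\mathcal{E}_G$, taking the optimization bias from Theorem~\ref{tx:thm2}, the stability bias and the divergence-weighted prefactor from Theorem~\ref{tx:thm3}, and controlling the divergence via Theorem~\ref{tx:thm5}. The exponent bookkeeping $(TK)^{cL/(1+cL)}\cdot T^{-1}=K^{cL/(1+cL)}T^{-1/(1+cL)}$ is also the paper's.

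The one step where you go beyond the paper is the Cesaro average $\frac{1}{T}\sum_{t=1}^{T}\sqrt{\Delta^t}$, and that is where your justification does not go through as written. Theorem~\ref{tx:thm5} is proved for the \emph{final} round with a single learning rate tuned to $T$, namely $\eta=\mathcal{O}\bigl(\log(\lambda\mu NKT)/(\lambda\mu KT)\bigr)$. Under that fixed choice, the initialization bias inside $\Delta^t$ at an intermediate round $t$ decays like $e^{-\lambda\mu\eta Kt}=(NKT)^{-t/T}$, not like $1/t^{2}$, so ``applying Theorem~\ref{tx:thm5} at every round $t$'' does not give $\sqrt{\Delta^t}=\widetilde{\mathcal{O}}\bigl(\sqrt{D+G^2}/t\bigr)$; and the resulting average $\frac{1}{T}\sum_{t}(NKT)^{-t/(2T)}=\Theta\bigl(1/\log(NKT)\bigr)$ decays only logarithmically, far short of the $\widetilde{\mathcal{O}}(1/T)$ you need. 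The paper itself sidesteps this by silently replacing $\frac{1}{T}\sum_t\sqrt{\Delta^t}$ with $\sqrt{\Delta^T}$ (this is visible in the ``Selection of $\beta$'' discussion, where the coefficient of the divergence bias is read off as $(1+\beta)^{1/(\beta cL)}/\sqrt{1-96\beta^2}$, i.e.\ the coefficient of a single $\sqrt{\Delta^T}$); that is a dominant-term heuristic, not a bound. To close the gap one would need either a round-dependent learning rate $\eta_t=c/t$ under which a per-round version of Theorem~\ref{tx:thm5} genuinely holds (which is in fact what the stability proof of Theorem~\ref{tx:thm3} assumes), or an explicit argument that the early-round contributions to the sum are dominated. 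Everything else in your proposal --- the admissibility check on $\eta$ and $\beta$, the absorption of $(1+\beta)^{1/(\beta cL)}$, $U$, $L_G$ into constants, and the final term-by-term match --- is fine and coincides with the paper.
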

According to the Theorems~\ref{tx:thm2}, \ref{tx:thm3}, and \ref{tx:thm5}, we combine their dominant terms to upper bound the excess risk of \textit{FedInit} method. The first term comes from the optimization error, the second term comes from the stability bias, and the third term comes from the divergence bias. From the perspective of excess risk, the main restriction in the FL paradigm is the divergence term with the bound of $\widetilde{\mathcal{O}}(\frac{1}{T^\frac{1}{1+cL}})$. The second term of excess risk matches the conclusion in SGD~\citep{hardt2016train,zhou2021towards} which relies on the number $S$. Our analysis of the excess risk reveals two important corollaries in FL:
\begin{itemize}
\vspace{-0.2cm}
    \item From the perspective of optimization, the FL paradigm is insensitive to local consistency in the training process~(Theorems~\ref{tx:thm1}\&\ref{tx:thm2}).
    \vspace{-0.05cm}
    \item From the perspective of generalization, the local consistency level significantly affects the performance in the FL paradigm~(Theorem~\ref{tx:thm6}).
\end{itemize}
Then we discuss the best selection of the local interval $K$ and relaxed coefficient $\beta$.

\textbf{Selection of K.} In the first term, to minimize the optimization error, the local interval $K$ is required to be large enough. In the second term, since $\frac{cL}{1+cL}\leq 1$, the upper bound expects a small local interval $K$. In the third term, since $\frac{1}{1+cL}=\frac{K}{K+\mu_0 L}<1$, it expects a large $K$ to guarantee the order of $T$ to approach $\mathcal{O}(1/T)$, where the divergence bias could maintain a high-level consistency. Therefore, there is a specific optimal constant selection for $K>1$ to minimize the excess risk. 

\textbf{Selection of $\beta$.} As the dominant term, the coefficient of the divergence bias also plays a key role in the error bound. In Theorem~\ref{tx:thm5}, the constant weight we omit for the divergence term $\Delta^T$ is $\frac{1}{1-96\beta^2}$. Thus the coefficient of $\sqrt{\Delta^T}$ is $\frac{1}{\sqrt{1-96\beta^2}}$. Combined with Theorem~\ref{tx:thm3}, we have the coefficient for the divergence term in formulation~(\ref{tx:excess_risk_bound}) is $\frac{(1+\beta)^\frac{1}{\beta cL}}{\sqrt{1-96\beta^2}}$. Therefore, to minimize this term, there is a specific optimal constant selection for $0<\beta<\frac{\sqrt{6}}{24}$. We validate their selections in Section~\ref{tx:ablation}.

\section{Experiments}

\begin{table}[t]
\centering
\small
\renewcommand{\arraystretch}{1}
\caption{Test accuracy ($\%$) on the CIFAR-10$/$100 dataset. We test two participation ratios on each dataset. Under each setup, we test two Dirichlet splittings, and each result test for 3 times. This table reports results on ResNet-18-GN~(upper part) and VGG-11~(lower part) respectively.}
\label{tx:tb_results}
\vspace{-0.1cm}
\setlength{\tabcolsep}{1.2mm}{\begin{tabular}{@{}lcc|cc|cc|cc@{}}
\toprule
\multirow{3}{*}[-1.5ex]{\centering Method} & \multicolumn{4}{c}{CIFAR-10} & \multicolumn{4}{c}{CIFAR-100} \\ 
\cmidrule(lr){2-5} \cmidrule(lr){6-9}
\multicolumn{1}{c}{} & \multicolumn{2}{c}{$10\%$-100 clients} & \multicolumn{2}{c}{$5\%$-200 clients} & \multicolumn{2}{c}{$10\%$-100 clients} & \multicolumn{2}{c}{$5\%$-200 clients}\\ 
\cmidrule(lr){2-3} \cmidrule(lr){4-5} \cmidrule(lr){6-7} \cmidrule(lr){8-9} 
\multicolumn{1}{c}{} & \multicolumn{1}{c}{Dir-0.6}  & Dir-0.1 & \multicolumn{1}{c}{Dir-0.6}  & Dir-0.1 & \multicolumn{1}{c}{Dir-0.6}  & Dir-0.1 & \multicolumn{1}{c}{Dir-0.6}  & Dir-0.1 \\ 
\cmidrule(lr){1-9}               
FedAvg              & \multicolumn{1}{c}{$78.77_{\pm.11}$} & $72.53_{\pm.17}$ & \multicolumn{1}{c}{$74.81_{\pm.18}$} & $70.65_{\pm.21}$ & \multicolumn{1}{c}{$46.35_{\pm.15}$} & $42.62_{\pm.22}$ & \multicolumn{1}{c}{$44.70_{\pm.22}$} & $40.41_{\pm.33}$ \\ 
FedAdam             & \multicolumn{1}{c}{$76.52_{\pm.14}$} & $70.44_{\pm.22}$ & \multicolumn{1}{c}{$73.28_{\pm.18}$} & $68.87_{\pm.26}$ & \multicolumn{1}{c}{$48.35_{\pm.17}$} & $40.77_{\pm.31}$ & \multicolumn{1}{c}{$44.33_{\pm.26}$} & $38.04_{\pm.25}$ \\
FedSAM              & \multicolumn{1}{c}{$79.23_{\pm.22}$} & $72.89_{\pm.23}$ & \multicolumn{1}{c}{$75.45_{\pm.19}$} & $71.23_{\pm.26}$ & \multicolumn{1}{c}{$47.51_{\pm.26}$} & $43.43_{\pm.12}$ & \multicolumn{1}{c}{$45.98_{\pm.27}$} & $40.22_{\pm.27}$ \\ 
SCAFFOLD            & \multicolumn{1}{c}{$81.37_{\pm.17}$} & $75.06_{\pm.16}$ & \multicolumn{1}{c}{$78.17_{\pm.28}$} & $74.24_{\pm.22}$ & \multicolumn{1}{c}{$51.98_{\pm.23}$} & $\textbf{44.41}_{\pm.15}$ & \multicolumn{1}{c}{$50.70_{\pm.29}$} & $41.83_{\pm.29}$ \\ 
FedDyn              & \multicolumn{1}{c}{$82.43_{\pm.16}$} & $75.08_{\pm.19}$ & \multicolumn{1}{c}{$79.96_{\pm.13}$} & $74.15_{\pm.34}$ & \multicolumn{1}{c}{$50.82_{\pm.19}$} & $42.50_{\pm.28}$ & \multicolumn{1}{c}{$47.32_{\pm.21}$} & $41.74_{\pm.21}$ \\ 
FedCM               & \multicolumn{1}{c}{$81.67_{\pm.17}$} & $73.93_{\pm.26}$ & \multicolumn{1}{c}{$79.49_{\pm.17}$} & $73.12_{\pm.18}$ & \multicolumn{1}{c}{$51.56_{\pm.20}$} & $43.03_{\pm.26}$ & \multicolumn{1}{c}{$50.93_{\pm.19}$} & $42.33_{\pm.19}$ \\ 
\textbf{FedInit}    & \multicolumn{1}{c}{$\textbf{83.11}_{\pm.29}$} & $\textbf{75.95}_{\pm.19}$ & \multicolumn{1}{c}{$\textbf{80.58}_{\pm.20}$} & $\textbf{74.92}_{\pm.17}$ & \multicolumn{1}{c}{$\textbf{52.21}_{\pm.09}$} & $44.22_{\pm.21}$ & \multicolumn{1}{c}{$\textbf{51.16}_{\pm.18}$} & $\textbf{43.77}_{\pm.36}$ \\
\cmidrule(lr){1-9}               
FedAvg              & \multicolumn{1}{c}{$85.28_{\pm.12}$} & $78.02_{\pm.22}$ & \multicolumn{1}{c}{$81.23_{\pm.14}$} & $74.89_{\pm.25}$ & \multicolumn{1}{c}{$53.46_{\pm.25}$} & $50.53_{\pm.20}$ & \multicolumn{1}{c}{$47.55_{\pm.13}$} & $45.05_{\pm.33}$ \\ 
FedAdam             & \multicolumn{1}{c}{$86.44_{\pm.13}$} & $77.55_{\pm.28}$ & \multicolumn{1}{c}{$81.05_{\pm.23}$} & $74.04_{\pm.17}$ & \multicolumn{1}{c}{$55.56_{\pm.29}$} & $53.41_{\pm.18}$ & \multicolumn{1}{c}{$51.33_{\pm.25}$} & $47.26_{\pm.21}$ \\ 
FedSAM              & \multicolumn{1}{c}{$86.37_{\pm.22}$} & $79.10_{\pm.07}$ & \multicolumn{1}{c}{$81.76_{\pm.26}$} & $75.22_{\pm.13}$ & \multicolumn{1}{c}{$54.85_{\pm.31}$} & $51.88_{\pm.27}$ & \multicolumn{1}{c}{$48.65_{\pm.21}$} & $46.58_{\pm.28}$ \\ 
SCAFFOLD            & \multicolumn{1}{c}{$87.73_{\pm.17}$} & $81.98_{\pm.19}$ & \multicolumn{1}{c}{$84.81_{\pm.15}$} & $79.04_{\pm.16}$ & \multicolumn{1}{c}{$\textbf{59.45}_{\pm.17}$} & $56.67_{\pm.24}$ & \multicolumn{1}{c}{$53.73_{\pm.32}$} & $50.08_{\pm.19}$ \\ 
FedDyn              & \multicolumn{1}{c}{$87.35_{\pm.19}$} & $82.70_{\pm.24}$ & \multicolumn{1}{c}{$84.84_{\pm.19}$} & $\textbf{80.01}_{\pm.22}$ & \multicolumn{1}{c}{$56.13_{\pm.18}$} & $53.97_{\pm.11}$ & \multicolumn{1}{c}{$51.74_{\pm.18}$} & $48.16_{\pm.17}$ \\ 
FedCM               & \multicolumn{1}{c}{$86.80_{\pm.33}$} & $79.85_{\pm.29}$ & \multicolumn{1}{c}{$83.23_{\pm.31}$} & $76.42_{\pm.36}$ & \multicolumn{1}{c}{$53.88_{\pm.22}$} & $50.73_{\pm.35}$ & \multicolumn{1}{c}{$47.83_{\pm.19}$} & $46.33_{\pm.25}$ \\ 
\textbf{FedInit}    & \multicolumn{1}{c}{$\textbf{88.47}_{\pm.22}$} & $\textbf{83.51}_{\pm.13}$ & \multicolumn{1}{c}{$\textbf{85.36}_{\pm.19}$} & $79.73_{\pm.14}$ & \multicolumn{1}{c}{$58.84_{\pm.11}$} & $\textbf{57.22}_{\pm.21}$ & \multicolumn{1}{c}{$\textbf{54.12}_{\pm.08}$} & $\textbf{50.27}_{\pm.29}$     \\
\bottomrule
\end{tabular}}\label{acc}
\vspace{-0.4cm}
\end{table}

\label{experiments}
In this part, we introduce our empirical studies. Due to the page limitations, the details of the dataset, hyperparameters selection, implementation, and some extra ablation studies are stated in Appendix B.

\textbf{Benchmarks.} Our selected benchmarks in this paper are stated as follows. \textit{FedAvg}~\citep{mcmahan2017communication} proposes the general FL paradigm. \textit{FedAdam}~\citep{reddi2020adaptive} studies the efficiency of adaptive optimizer in FL. \textit{SCAFFOLD}~\citep{karimireddy2020scaffold}, \textit{FedDyn}~\citep{acar2021federated}, and \textit{FedCM}~\citep{xu2021fedcm} learn the ``client-drift" problem and adopt the variance reduction technique, ADMM, and client-level momentum respectively in FL to alleviate its negative impact. \textit{FedSAM}~\citep{qu2022generalized} uses the local SAM objective instead of the vanilla empirical risk objective to search for a smooth loss landscape, which focuses on the generalization performance.

\textbf{Setups.} Here we briefly introduce the setups in our experiments. We test our proposed \textit{FedInit} on the CIFAR-10 $/$100 dataset~\citep{cifar100}. To generate local heterogeneity, we follow \citet{dirichlet} to split the local clients through the Dirichlet sampling via a coefficient $D_r$ to control the heterogeneous level and follow \citet{sun2023fedspeed} to adopt the sampling with replacement to enhance the heterogeneity level. We test on the ResNet-18-GN~\citep{resnet,gn_bn} and VGG-11~\citep{simonyan2014very} to validate its efficiency. 
For each benchmark in our experiments,
we adopt two coefficients $D_r=0.1$ and $0.6$ for each dataset to generate different heterogeneity. We generally select the local learning rate $\eta=0.1$ and global learning rate $\eta=1$ on all setups except for \textit{FedAdam} we use $0.1$. The learning rate decay is set as multiplying $0.998$ per round except for \textit{FedDyn} we use $0.999$. We train 500 rounds on CIFAR-10 and 800 rounds on CIFAR-100 to achieve stable test accuracy. The participation ratios are selected as $10\%$ and $5\%$ respectively of total $100$ and $200$ clients. More details are stated in Appendix B.1.

\begin{wraptable}[12]{r}{0.6\linewidth}
\centering
\vspace{-0.75cm}
\small
\renewcommand{\arraystretch}{1}
\caption{\small We incorporate the relaxed initialization~(RI) into the benchmarks to test improvements on ResNet-18-GN on CIFAR-10 with the same hyperparameters and specific relaxed coefficient $\beta$.}
\label{tx:tb_plusin}
\scalebox{0.9}{
\setlength{\tabcolsep}{1.2mm}{\begin{tabular}{@{}lcc|cc|cc|cc@{}}
\toprule
\multirow{3}{*}[-1.5ex]{\centering Method} & \multicolumn{4}{c}{$10\%$-100 clients} & \multicolumn{4}{c}{$5\%$-200 clients} \\ 
\cmidrule(lr){2-5} \cmidrule(lr){6-9}
\multicolumn{1}{c}{} & \multicolumn{2}{c}{Dir-0.6} & \multicolumn{2}{c}{Dir-0.1} & \multicolumn{2}{c}{Dir-0.6} & \multicolumn{2}{c}{Dir-0.1}\\ 
\cmidrule(lr){2-3} \cmidrule(lr){4-5} \cmidrule(lr){6-7} \cmidrule(lr){8-9} 
\multicolumn{1}{c}{} & \multicolumn{1}{c}{-}  & +RI & \multicolumn{1}{c}{-}  & +RI & \multicolumn{1}{c}{-}  & +RI & \multicolumn{1}{c}{-}  & +RI \\ 
\cmidrule(lr){1-9}               
FedAvg              & \multicolumn{1}{c}{78.77} & 83.11 & \multicolumn{1}{c}{72.53} & 75.95 & \multicolumn{1}{c}{74.81} & 80.58 & \multicolumn{1}{c}{70.65} & 74.92 \\ 
FedAdam             & \multicolumn{1}{c}{76.52} & 78.33 & \multicolumn{1}{c}{70.44} & 72.55 & \multicolumn{1}{c}{73.28} & 78.33 & \multicolumn{1}{c}{68.87} & 71.34 \\
FedSAM              & \multicolumn{1}{c}{79.23} & \textbf{83.36} & \multicolumn{1}{c}{72.89} & 76.34 & \multicolumn{1}{c}{75.45} & 80.66 & \multicolumn{1}{c}{71.23} & 75.08 \\ 
SCAFFOLD            & \multicolumn{1}{c}{81.37} & 83.27 & \multicolumn{1}{c}{75.06} & \textbf{77.30} & \multicolumn{1}{c}{78.17} & \textbf{81.02} & \multicolumn{1}{c}{74.24} & \textbf{76.22} \\ 
FedDyn              & \multicolumn{1}{c}{82.43} & 81.91 & \multicolumn{1}{c}{75.08} & 75.11 & \multicolumn{1}{c}{79.96} & 79.88 & \multicolumn{1}{c}{74.15} & 74.34 \\ 
FedCM               & \multicolumn{1}{c}{81.67} & 81.77 & \multicolumn{1}{c}{73.93} & 73.71 & \multicolumn{1}{c}{79.49} & 79.72 & \multicolumn{1}{c}{73.12} & 72.98 \\ 
\bottomrule
\end{tabular}}}
\end{wraptable}
\subsection{Experiment results}
In Table~\ref{tx:tb_results}, our proposed \textit{FedInit} method performs well than the other benchmarks with good stability across different experimental setups. On the results of ResNet-18-GN on CIFAR-10, it achieves about 3.42$\%$ improvement than the vanilla \textit{FedAvg} on the high heterogeneous splitting with $D_r=0.1$. When the participation ratio decreases to $5\%$, the accuracy drops only about $0.1\%$ while \textit{FedAvg} drops almost $1.88\%$. Similar results on CIFAR-100, when the ratio decreases, \textit{FedInit} still achieves $43.77\%$ while the second best method \textit{SCAFFOLD} drops about $3.21\%$. This indicates the proposed \textit{FedInit} holds good stability on the varies of the participation. 
In addition, in Table~\ref{tx:tb_plusin}, we incorporate the relaxed initialization~(RI) into the other benchmarks to test its benefit. ``-" means the vanilla benchmarks, and ``+RI" means adopting the relaxed initialization. It shows that the relaxed initialization holds the promising potential to further enhance the performance. Actually, \textit{FedInit} could be considered as (RI + \textit{FedAvg}), whose improvement achieves about over $3\%$ on each setup. Table~\ref{tx:tb_results} shows the poor performance of the vanilla \textit{FedAvg}. Nevertheless, when adopting the RI, \textit{FedInit} remains above most benchmarks on several setups. When the RI is incorporated into other benchmarks, it helps them to achieve significant improvements with any additional communication costs.

\subsection{Ablation}
\label{tx:ablation}

\begin{wrapfigure}[12]{r}{0.5\textwidth}
\centering
\vspace{-0.7cm}
    \subfigure[\small Different $K$.]{
        \includegraphics[width=0.245\textwidth]{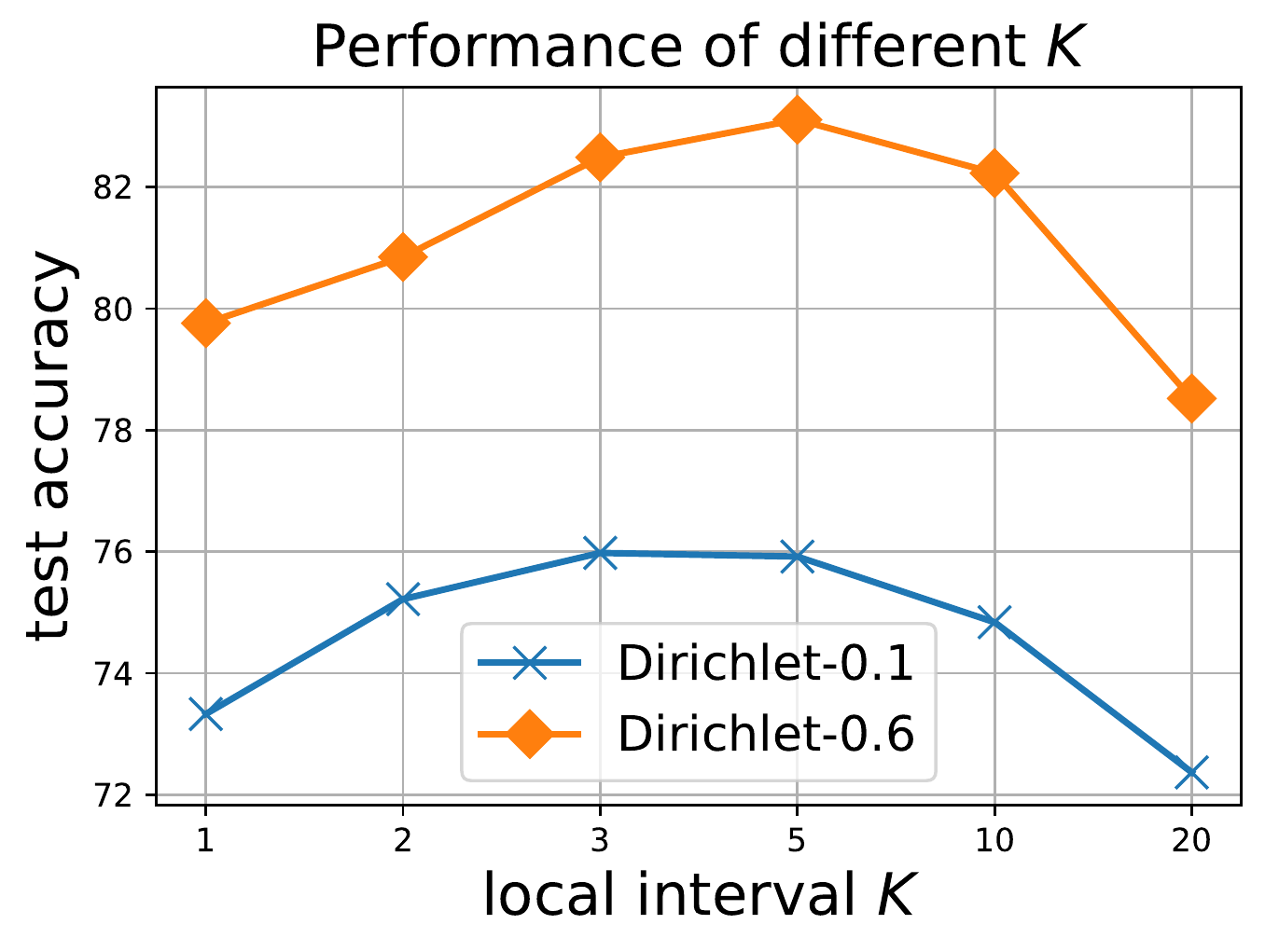}}\!\!\!
    \subfigure[\small Different $\beta$.]{
	\includegraphics[width=0.245\textwidth]{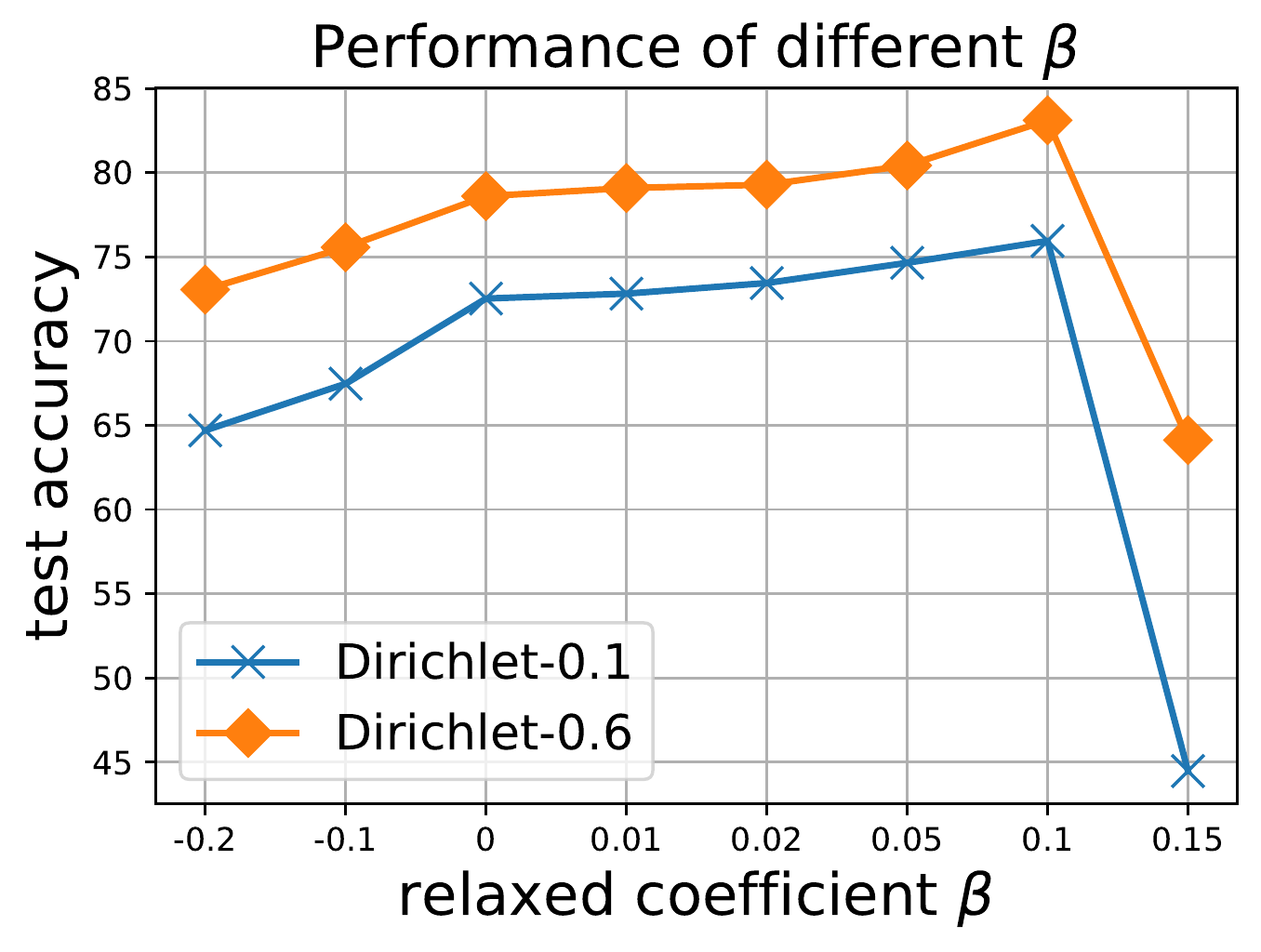}}
 \vspace{-0.4cm}
\caption{\small THyperparameters sensitivity studies of local intervals $K$ and relaxed coefficient $\beta$ of the \textit{FedInit} method on CIFAR-10. To fairly compare their efficiency, we fix the total communication rounds $T=500$.}
\label{hyperparamters}
\end{wrapfigure}

\textbf{Hyperparameters Sensitivity.} The excess risk and test error of \textit{FedInit} indicate there exists best selections for local interval $K$ and relaxed coefficient $\beta$, respectively. In this part, we test a series of selections to validate our conclusions. Furthermore, we also test the impact of learning rate decay and weight decay. In Figure~\ref{hyperparamters}~(a), we can see that the selection range of the beta is very small while it has great potential to improve performance. When it is larger than the threshold, the training process will diverge quickly. As local interval $K$ increases, test accuracy rises first and then decreases. Our analysis provides a clear explanation of the phenomenon. The optimization error decreases as $K$ increases when it is small. When $K$ exceeds the threshold, the divergence term in generalization cannot be ignored. Therefore, the test accuracy will be significantly affected. 

\begin{wraptable}[9]{r}{0.6\linewidth}
\centering
\vspace{-0.63cm}
\small
\renewcommand{\arraystretch}{1}
\caption{\small We test different selections of the relaxed coefficient $\beta$ of the \textit{FedInit} method on CIFAR-10 10$\%$-100 Dir-0.1 splitting to validate the relationship between test error and consistency after 500 rounds. We fix other hyperparameters as the same selection above for a fair comparison.}
\vspace{0.1cm}
\label{tx:consistency}
\scalebox{0.9}{
\setlength{\tabcolsep}{1.1mm}{\begin{tabular}{@{}ccccccccc@{}}
\toprule
    $\beta$ & -0.2 & -0.1 & 0 & 0.01 & 0.02 & 0.05 & 0.1 & 0.15\\
    \midrule
    Accuracy~(\%)   & 64.70 & 67.47 & 72.53 & 72.82 & 73.45 & 74.65 & \textbf{75.95} & 44.47 \\
    $\Delta^T$      & 0.873 & 0.815 & 0.855 & 0.875 & 0.850 & 0.823 & \textbf{0.760} & $\infty$ \\
    \bottomrule
\end{tabular}}}
\end{wraptable}
\textbf{Consistency.} In this part, we test the relationship between the test accuracy and divergence term $\Delta^T$ under different $\beta$ selections. As introduced in Algorithm~\ref{tx:algorithm} Line.6, negative $\beta$ means to adopt the relaxed initialization which is close to the latest local model. \textit{FedInit} degrades to \textit{FedAvg} when $\beta=0$. Table~\ref{tx:consistency} validates that RI is required to be far away from the local model~(a positive $\beta$). When $\beta$ is small, the correction is limited. The local divergence term is difficult to be diminished efficiently. While it becomes too large, the local training begins from a bad initialization, which can not receive enough guidance of global information from the global models. Furthermore, if the initialization is too far from the local model, the quality of the initialization state will not be effectively guaranteed. As shown in Table~\ref{tx:consistency}, when $\beta$ is too large, the test accuracy decreases severely and the local divergence level increases heavily.

\section{Conclusion}
\label{conclusion}

In this work, we propose an efficient and novel FL method, dubbed \textit{FedInit}, which adopts the stage-wise personalized relaxed initialization to enhance the local consistency level. Furthermore, to clearly understand the essential impact of consistency in FL, we introduce the excess risk analysis in FL and study the divergence term. Our proofs indicate that consistency dominates the test error and generalization error bound while optimization error is insensitive to it. Extensive experiments are conducted to validate the efficiency of relaxed initialization. As a practical and light plug-in, it could also be easily incorporated into other FL paradigms to improve their performance.

\textbf{Limitations \& Broader Impact.}
In this work, we analyze the excess risk for the \textit{FedInit} method to understand how consistency works in FL. Actually, the relaxed initialization may also work for the personalized FL~(pFL) paradigm. It is a future study to explore its properties in the pFL and decentralized FL, which may inspire us to design novel efficient algorithms in the FL community.

\newpage
{
\small
\bibliographystyle{plainnat}
\bibliography{reference}
}
\newpage
\appendix

\section{Proofs}
In this section, we introduce our proofs of the main theorems in the main context. In the first part, we introduce some assumptions used in our proofs and point out their functions used for which part. In the second part, we prove the convergence rate and optimization error under the general assumptions. In the third part, we prove the uniform stability to measure the generalization error and analyze how each term affects the accuracy. 

We suppose there are $C$ clients participating in the training process and each has a local heterogeneous dataset. In each round $t$, we randomly select $N$ clients to send the global model and they will train $K$ iterations to get $N$ local models. The local models will be aggregated on the global server as the next global model. After $T$ rounds, our method generates a global model as the final state. We denote the total client set as $\mathcal{C}$ and the selected client set as $\mathcal{N}$.
\subsection{Assumptions}
In this part, we state assumptions in our proofs and discuss them. We will introduce each assumption and develop their corollaries.
\begin{assumption}
\label{appendix:assumption_smooth}
    For $\forall w_1, w_2 \in \mathbb{R}^d$, the non-convex local function $f_i$ satisfies $L$-smooth if:
    \begin{equation}
        \Vert \nabla f_i(w_1) - \nabla f_i(w_2) \Vert \leq L\Vert w_1 - w_2\Vert,
    \end{equation}
    where $L$ is a universal constant.
\end{assumption}

\begin{assumption}
\label{appendix:assumption_bounded_stochastics}
    For $\forall w \in \mathbb{R}^d$, the stochastic gradient is bounded by its expectation and variance as:
    \begin{equation}
    \begin{split}
        \mathbb{E}\left[g_{i,k}^t\right] &= \nabla f_{i}(w_{i,k}^t),\\
        \mathbb{E}\Vert g_{i,k}^t - \nabla &f_{i}(w_{i,k}^t)\Vert^2 \leq \sigma_l^2,
    \end{split}
    \end{equation}
    where $\sigma_l > 0$ is a universal constant.
\end{assumption}

\begin{assumption}
\label{appendix:assumption_bounded_heterogeneity}
    For $\forall w \in \mathbb{R}^d$, the heterogeneous similarity is bounded on the gradient norm as:
    \begin{equation}
        \frac{1}{C}\sum_{i\in\mathcal{C}}\Vert\nabla f_i(w)\Vert^2\leq G^2+B^2\Vert\nabla f(w)\Vert^2,
    \end{equation}
    where $G\geq 0$ and $B\geq 1$ are two universal constants.
\end{assumption}

\begin{assumption}
\label{appendix:assumption_Lipschitz}
    For $\forall w_1, w_2 \in \mathbb{R}^d$, the global function $f$ satisfies $L_G$-Lipschitz if:
    \begin{equation}
        \Vert f(w_1) - f(w_2) \Vert \leq L_G\Vert w_1 - w_2\Vert,
    \end{equation}
    where $L_G$ is a universal constant.
\end{assumption}

\begin{assumption}
\label{appendix:assumption_PL}
    For $\forall w \in \mathbb{R}^d$, let $w^\star\in\arg\min_{w} f(w)$, the global function satisfies P\L-condition if:
    \begin{equation}
        2\mu\left(f(w)-f(w^\star)\right)\leq\Vert\nabla f(w)\Vert^2,
    \end{equation}
    where $\mu$ is a universal positive constant.
\end{assumption}

\paragraph{Discussion.} Assumption~\ref{appendix:assumption_smooth}$\sim$\ref{appendix:assumption_bounded_heterogeneity} are three general assumptions to analyze the non-convex objective in FL, which is widely used in the previous works~\citep{huang2023fusion,karimi2021layer,karimireddy2020scaffold,reddi2020adaptive,sun2023adasam,wang2021local,xu2021fedcm,yang2021achieving}. Assumption~\ref{appendix:assumption_Lipschitz} is used to bound the uniform stability for the non-convex objective, which is used in \citep{hardt2016train,zhou2021towards}. Different from the analysis in the margin-based generalization bound~\citep{neyshabur2017pac,qu2022generalized,reisizadeh2020robust,sun2023fedspeed} that focus on understanding how the designed objective affects the final generalization performance, our work focuses on understanding how the generalization performance changes in the training process. We consider the entire training process and adopt uniform stability to measure the global generality in FL and theoretically study the importance of consistency to FL. For the general non-convex objective, one often uses the gradient norm $\mathbb{E}\Vert\nabla f(w)\Vert^2$ instead of the loss difference $\mathbb{E}\left[f(w^\star)-f(w)\right]$ to measure the training error. To construct and analyze the \textit{excess risk} to further understand how the consistency affects the FL paradigm, we follow \citep{zhou2021towards} to use Assumption~\ref{appendix:assumption_PL} to bound the loss distance. Through this, we can establish a theoretical framework to jointly analyze the trade-off on the optimization and generalization in the FL paradigm.

\subsection{Proofs for the Optimization Error}
In this part, we prove the training error for our proposed method. We assume the objective function $f(w)=\frac{1}{C}\sum_{i\in\mathcal{C}}f_i(w)$ is $L$-smooth w.r.t $w$. Then we could upper bound the training error in the FL. Some useful notations in the proof are introduced in the Table~\ref{appendix:tb_notation_for_optimization}.
\begin{table}[ht]
  \caption{Some abbreviations of the used terms in the proof of bounded training error.}
  \label{appendix:tb_notation_for_optimization}
  \centering
  \begin{tabular}{ccc}
    \toprule
    Notation     &   Formulation   & Description \\
    \midrule
    $w_{i,k}^t$  &  -  & parameters at $k$-th iteration in round $t$ on client $i$ \\
    $w^t$        &  -  & global parameters in round $t$ \\
    $V_1^t$ & $\frac{1}{C}\sum_{i\in\mathcal{C}}\sum_{k=0}^{K-1}\mathbb{E}\Vert w_{i,k}^t - w^t\Vert^2$ & averaged norm of the local updates in round $t$\\
    $V_2^t$ & $\mathbb{E}\Vert w^{t+1} - w^t\Vert^2$ & norm of the global updates in round $t$\\
    $\Delta^t$   & $\frac{1}{C}\sum_{i\in\mathcal{C}}\mathbb{E}\Vert w_{i,K}^{t-1} - w^t\Vert^2$ & inconsistency/divergence term in round $t$\\
    $D$     & $f(w^0)-f(w^\star)$   & bias between the initialization state and optimal \\
    \bottomrule
  \end{tabular}
\end{table}

Then we introduce some important lemmas used in the proof.

\subsubsection{Important Lemmas}
\begin{lemma}
\label{appendix:bounded_local_updates}
    {\rm (Bounded local updates)} We first bound the local training updates in the local training. Under the Assumptions stated, the averaged norm of the local updates of total $C$ clients could be bounded as:
    \begin{equation}
        V_1^t \leq 4K\beta^2\Delta^t + 3K^2\eta^2\left(\sigma_l^2+4KG^2\right)+12K^3\eta^2B^2\mathbb{E}\Vert\nabla f(w^t)\Vert^2.
    \end{equation}
\end{lemma}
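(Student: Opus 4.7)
}

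The plan is to control $V_1^t$ by tracking, for each client $i$ and each local step $k$, how far $w_{i,k}^t$ has drifted from $w^t$. The key observation is that the relaxed initialization gives the clean identity
\begin{equation*}
    w_{i,k}^t - w^t = \underbrace{\beta(w^t - w_{i,K}^{t-1})}_{\text{initialization offset}} \;-\; \eta \sum_{j=0}^{k-1} g_{i,j}^t,
\end{equation*}
so the deviation splits into a term already accounted for by $\Delta^t$ and a local-SGD drift that is the subject of standard analysis. Applying $\|a+b\|^2 \leq 2\|a\|^2 + 2\|b\|^2$ I peel off the initialization offset, contributing a term of order $\beta^2 \Delta^t$, and reduce the problem to bounding $\mathbb{E}\|w_{i,k}^t - w_{i,0}^t\|^2 = \eta^2 \mathbb{E}\|\sum_{j=0}^{k-1} g_{i,j}^t\|^2$.

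For this residual term I would use the centering decomposition $g_{i,j}^t = (g_{i,j}^t - \nabla f_i(w_{i,j}^t)) + \nabla f_i(w_{i,j}^t)$. The martingale-difference part is handled by Assumption~\ref{appendix:assumption_bounded_stochastics}, yielding at most $k\sigma_l^2 \leq K\sigma_l^2$ after summing conditionally independent noises. For the deterministic gradient part, Cauchy--Schwarz (or Jensen on the $k$ terms) gives a factor of $k$, after which I push each $\|\nabla f_i(w_{i,j}^t)\|^2$ through the triangle inequality and $L$-smoothness as
\begin{equation*}
    \|\nabla f_i(w_{i,j}^t)\|^2 \leq 2L^2 \|w_{i,j}^t - w^t\|^2 + 2\|\nabla f_i(w^t)\|^2,
\end{equation*}
and average over $i$, invoking Assumption~\ref{appendix:assumption_bounded_heterogeneity} to replace $\tfrac{1}{C}\sum_i \|\nabla f_i(w^t)\|^2$ by $G^2 + B^2\mathbb{E}\|\nabla f(w^t)\|^2$.

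Summing the resulting pointwise bound over $k = 0, \ldots, K-1$ and using elementary sums ($\sum k \leq K^2$, $\sum k^2 \leq K^3$) produces an inequality of the schematic form
\begin{equation*}
    V_1^t \;\leq\; c_1 K\beta^2 \Delta^t + c_2 K^2 \eta^2 \sigma_l^2 + c_3 K^3 \eta^2\bigl(G^2 + B^2\mathbb{E}\|\nabla f(w^t)\|^2\bigr) + c_4 K^2 L^2 \eta^2 \, V_1^t,
\end{equation*}
which is self-referential through the last term because the smoothness step reintroduces $\|w_{i,j}^t - w^t\|^2$ inside the sum. The main obstacle is therefore to absorb this $V_1^t$-on-the-right term into the left-hand side. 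This is precisely where the learning-rate restrictions of Theorem~\ref{tx:thm1}, namely $\eta \leq \min\{\tfrac{N}{2CKL}, \tfrac{1}{NKL}\}$ with $K \geq 2$, become essential: they force $c_4 K^2 L^2 \eta^2 \leq \tfrac{1}{2}$, after which multiplying by $2$ and consolidating constants yields the claimed bound with coefficients $4$, $3$, $12$, $12$. All other steps are routine inequalities; the only real care needed is in the bookkeeping of constants to ensure they match the stated form and leave room for the smoothness-induced self-reference to be cleanly absorbed.
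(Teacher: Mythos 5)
Your proposal is correct, and it reaches the lemma by a genuinely different route from the paper. The paper never writes the unrolled identity $w_{i,k}^t-w^t=\beta(w^t-w_{i,K}^{t-1})-\eta\sum_{j<k}g_{i,j}^t$; instead it runs a one-step recursion in $k$, bounding $\mathbb{E}\Vert w^t-w_{i,k}^t\Vert^2$ by $\left(1+\tfrac{1}{2K-1}\right)\mathbb{E}\Vert w^t-w_{i,k-1}^t\Vert^2$ plus per-step noise and heterogeneity terms, folding the smoothness contribution into the contraction factor under $\eta\leq\tfrac{\sqrt{2}}{4(K-1)L}$, and then unrolling with $\left(1+\tfrac{1}{K-1}\right)^K\leq 4$ --- which is exactly where the coefficient $4$ on $\beta^2\Delta^t$ and the factor $3(K-1)$ on the per-step terms originate. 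Your full-trajectory decomposition plus absorption of the self-referential $c_4\eta^2K^2L^2V_1^t$ term is an equally standard device; both routes need $\eta=\mathcal{O}(1/(KL))$, which the constraints of Theorem~\ref{tx:thm1} supply, and your constants come out slightly smaller (roughly $4K\beta^2\Delta^t+2K^2\eta^2\sigma_l^2+\tfrac{8}{3}K^3\eta^2(G^2+B^2\mathbb{E}\Vert\nabla f(w^t)\Vert^2)$), so the stated inequality follows a fortiori. One caveat worth flagging: your splitting $\mathbb{E}\Vert\sum_j g_{i,j}^t\Vert^2=\mathbb{E}\Vert\sum_j(g_{i,j}^t-\nabla f_i(w_{i,j}^t))\Vert^2+\mathbb{E}\Vert\sum_j\nabla f_i(w_{i,j}^t)\Vert^2$ is not an exact orthogonal decomposition, because the noise at step $j$ correlates with $\nabla f_i(w_{i,j'}^t)$ for $j'>j$ through the iterates; the paper's step-wise recursion sidesteps this since each fresh noise term is conditionally centered given the current (measurable) iterate. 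Making your step rigorous via Young's inequality costs a factor of $2$, which pushes the $\sigma_l^2$ coefficient to $4$ rather than $3$ --- immaterial for every downstream use of the lemma, which only depends on the orders, but worth noting if you want the displayed constants to match exactly. (The paper itself uses the same non-orthogonal split as an equality in Lemmas~\ref{appendix:bounded_global_update} and~\ref{appendix:bounded_divergence}, so you are in good company.)
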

\begin{proof}
    $V_1$ measures the norm of the local offset during the local training stage. It could be bounded by two major steps. Firstly, we bound the separated term on the single client $i$ at iteration $k$ as:
    \begin{align*}
        &\quad\ \mathbb{E}_t\Vert w^t-w_{i,k}^t\Vert^2\\ 
        &= \mathbb{E}_t\Vert w^t-w_{i,k-1}^t+\eta\left(g_{i,k-1}^t-\nabla f_i(w_{i,k-1}^t)+\nabla f_i(w_{i,k-1}^t)-\nabla f_i(w^t)+\nabla f_i(w^t)\right)\Vert^2\\
        &\leq \left(1+\frac{1}{2K-1}\right)\mathbb{E}_t\Vert w^t-w_{i,k-1}^t + \eta\left(g_{i,k-1}^t-\nabla f_i(w_{i,k-1}^t)\right)\Vert^2\\ 
        &\quad + 2K\eta^2\mathbb{E}_t\Vert\nabla f_i(w_{i,k-1}^t)-\nabla f_i(w^t)+\nabla f_i(w^t)\Vert^2\\
        &\leq\left(1+\frac{1}{2K-1}\right)\mathbb{E}_t\Vert w^t-w_{i,k-1}^t\Vert^2 + \eta^2\mathbb{E}_t\Vert g_{i,k-1}^t-\nabla f_i(w_{i,k-1}^t)\Vert^2\\
        &\quad + 4K\eta^2\mathbb{E}_t\Vert \nabla f_i(w_{i,k-1}^t)-\nabla f_i(w^t)\Vert^2 + 4K\eta^2\Vert\nabla f_i(w^t)\Vert^2\\
        &\leq\left(1+\frac{1}{2K-1}+4\eta^2KL^2\right)\mathbb{E}_t\Vert w^t-w_{i,k-1}^t\Vert^2 + \eta^2\sigma_l^2 + 4K\eta^2\Vert\nabla f_i(w^t)\Vert^2\\
        &\leq \left(1+\frac{1}{K-1}\right)\mathbb{E}_t\Vert w^t-w_{i,k-1}^t\Vert^2 + \eta^2\sigma_l^2 + 4K\eta^2\Vert\nabla f_i(w^t)\Vert^2,
    \end{align*}
    where the learning rate is required $\eta\leq\frac{\sqrt{2}}{4(K-1)L}$ for $K\geq 2$.
    
    Computing the average of the separated term on client $i$, we have:
    \begin{align*}
        &\quad\ \frac{1}{C}\sum_{i\in\mathcal{C}}\mathbb{E}_t\Vert w^t-w_{i,k}^t\Vert^2\\
        &\leq \left(1+\frac{1}{K-1}\right)\frac{1}{C}\sum_{i\in\mathcal{C}}\mathbb{E}_t\Vert w^t-w_{i,k-1}^t\Vert^2 + \eta^2\sigma_l^2 + 4K\eta^2\frac{1}{C}\sum_{i\in\mathcal{C}} \Vert\nabla f_i(w^t)\Vert^2\\
        &\leq \left(1+\frac{1}{K-1}\right)\frac{1}{C}\sum_{i\in\mathcal{C}}\mathbb{E}_t\Vert w^t-w_{i,k-1}^t\Vert^2 + \eta^2\sigma_l^2 + 4K\eta^2G^2 + 4K\eta^2B^2\Vert\nabla f(w^t)\Vert^2.
    \end{align*}
    Unrolling the aggregated term on iteration $k\leq K$. When local interval $K\geq 2$, $\left(1+\frac{1}{K-1}\right)^k\leq\left(1+\frac{1}{K-1}\right)^K\leq 4$. Then we have:
    \begin{align*}
        &\quad\ \frac{1}{C}\sum_{i\in\mathcal{C}}\mathbb{E}_t\Vert w^t-w_{i,k}^t\Vert^2\\
        &\leq \sum_{\tau=0}^{k-1}\left(1+\frac{1}{K-1}\right)^\tau\left( \eta^2\sigma_l^2 + 4K\eta^2G^2 + 4K\eta^2B^2\Vert\nabla f(w^t)\Vert^2\right)\\
        &\quad + \left(1+\frac{1}{K-1}\right)^k\frac{1}{C}\sum_{i\in\mathcal{C}}\Vert w^t-w_{i,0}^{t}\Vert^2\\
        &\leq 3(K-1)\left( \eta^2\sigma_l^2 + 4K\eta^2G^2 + 4K\eta^2B^2\Vert\nabla f(w^t)\Vert^2\right) + 4\beta^2\frac{1}{C}\sum_{i\in\mathcal{C}}\mathbb{E}_t\Vert w^t-w_{i,K}^{t-1}\Vert^2\\
        &\leq 3K\eta^2\left(\sigma_l^2+4KG^2\right)+12K^2\eta^2B^2\Vert\nabla f(w^t)\Vert^2 + 4\beta^2\Delta^t.
    \end{align*}
    Summing the iteration on $k=0,1,\cdots,K-1$, 
    \begin{align*}
        \frac{1}{C}\sum_{i\in\mathcal{C}}\sum_{k=0}^{K-1}\mathbb{E}_t\Vert w^t-w_{i,k}^t\Vert^2\leq 4K\beta^2\Delta^t + 3K^2\eta^2\sigma_l^2 + 12K^3\eta^2G^2 + 12K^3\eta^2B^2\Vert\nabla f(w^t)\Vert^2.
    \end{align*}
    This completes the proof.
\end{proof}

\begin{lemma}
\label{appendix:bounded_global_update}
    {\rm (Bounded global updates)} The norm of the global update could be bounded by uniformly sampling. Under assumptions stated above, let $\eta\leq\frac{1}{KL}$, the norm of the global update of selected $N$ clients could be bounded as:
    \begin{equation}
        \begin{split}
            V_2^t
            &\leq \frac{15\beta^2}{N}\Delta^t + \frac{10\eta^2K}{N}\sigma_l^2 + \frac{39\eta^2K^2}{N}G^2\\
            &\quad + \frac{39\eta^2K^2B^2}{N}\mathbb{E}\Vert\nabla f(w^t)\Vert^2 + \frac{\eta^2}{CN}\mathbb{E}\Vert\sum_{i\in\mathcal{C}}\sum_{k=0}^{K-1}\nabla f_i(w_{i,k}^{t})\Vert^2.
        \end{split}
    \end{equation}
\end{lemma}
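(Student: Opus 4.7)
\textbf{Proof plan for Lemma~\ref{appendix:bounded_global_update}.}
The first step is to write $w^{t+1} - w^t$ explicitly in terms of the quantities appearing on the right-hand side. Combining the relaxed initialization $w_{i,0}^t = w^t + \beta(w^t - w_{i,K}^{t-1})$ with the $K$ inner SGD steps gives $w_{i,K}^t - w^t = \beta(w^t - w_{i,K}^{t-1}) - \eta\sum_{k=0}^{K-1} g_{i,k}^t$ for every selected client, so averaging over $\mathcal{N}$ yields an \emph{initialization block} $\tfrac{\beta}{N}\sum_{i\in\mathcal{N}}(w^t - w_{i,K}^{t-1})$ and a \emph{local-update block} $-\tfrac{\eta}{N}\sum_{i\in\mathcal{N}}\sum_{k=0}^{K-1} g_{i,k}^t$. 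I would then apply $\Vert a+b\Vert^2\leq 2\Vert a\Vert^2+2\Vert b\Vert^2$ to separate the two blocks so that each can be handled independently.

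Next I would dissect the randomness. For the initialization block I use the standard partial-participation variance identity $\mathbb{E}_{\mathcal{N}}\Vert\tfrac{1}{N}\sum_{i\in\mathcal{N}}x_i\Vert^2 \leq \Vert\bar{x}\Vert^2 + \tfrac{1}{NC}\sum_{i\in\mathcal{C}}\Vert x_i\Vert^2$ with $x_i = w^t - w_{i,K}^{t-1}$, and apply Jensen to $\bar{x}$ to collect everything into a multiple of $\beta^2\Delta^t/N$ (the constant eventually consolidating to $15$ after all subsequent splittings). For the local-update block I first isolate the stochastic noise $\xi_{i,k}^t = g_{i,k}^t - \nabla f_i(w_{i,k}^t)$; by Assumption~\ref{appendix:assumption_bounded_stochastics} and independence across $(i,k)$ its mean-squared contribution collapses to a clean $\eta^2K\sigma_l^2/N$ term after the subsampling average.

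The deterministic remainder $\mathbb{E}\Vert\tfrac{1}{N}\sum_{i\in\mathcal{N}}\sum_{k=0}^{K-1}\nabla f_i(w_{i,k}^t)\Vert^2$ is treated with the same subsampling identity: the mean part is kept intact and becomes the unreduced $\tfrac{\eta^2}{CN}\mathbb{E}\Vert\sum_{i\in\mathcal{C}}\sum_{k=0}^{K-1}\nabla f_i(w_{i,k}^t)\Vert^2$ term of the statement, while the variance part $\tfrac{1}{NC}\sum_{i\in\mathcal{C}}\Vert\sum_{k=0}^{K-1}\nabla f_i(w_{i,k}^t)\Vert^2$ is handled by inserting $\nabla f_i(w^t)$ and invoking $L$-smoothness to replace $\Vert\nabla f_i(w_{i,k}^t)-\nabla f_i(w^t)\Vert^2$ by $L^2\Vert w_{i,k}^t - w^t\Vert^2$. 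This brings in $V_1^t$, which Lemma~\ref{appendix:bounded_local_updates} converts into further $\beta^2\Delta^t$, $\sigma_l^2$, $G^2$ and $\Vert\nabla f(w^t)\Vert^2$ contributions, while Assumption~\ref{appendix:assumption_bounded_heterogeneity} turns the surviving $\tfrac{1}{C}\sum_i\Vert\nabla f_i(w^t)\Vert^2$ factor into the $G^2$ and $B^2\Vert\nabla f(w^t)\Vert^2$ terms seen in the claim.

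The main obstacle is constant bookkeeping rather than any single hard estimate: two layers of subsampling variance, a noise-versus-signal split, a smoothness expansion, and a recursive appeal back to Lemma~\ref{appendix:bounded_local_updates} each introduce multiplicative constants, and only the hypothesis $\eta\leq 1/(KL)$ keeps the $L^2\eta^2K^2$ cross-terms produced by the smoothness step absorbable so that the $\beta^2\Delta^t$ contributions collapse into the coefficient $15/N$ and the gradient contributions into $10/N$ and $39/N$ respectively. The last $\tfrac{\eta^2}{CN}\mathbb{E}\Vert\sum_{i,k}\nabla f_i(w_{i,k}^t)\Vert^2$ is intentionally left in raw form so that, when this lemma is later substituted into the descent inequality for $f(w^t)$ via $L$-smoothness, it can be combined with $\mathbb{E}\langle\nabla f(w^t),w^{t+1}-w^t\rangle$ and cancelled against a matching gradient-norm term to close the recursion for the optimization error.
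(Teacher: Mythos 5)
Your overall skeleton (expand $w_{i,K}^t-w^t=\beta(w^t-w_{i,K}^{t-1})-\eta\sum_k g_{i,k}^t$, peel off the stochastic noise to get $\eta^2K\sigma_l^2/N$, apply the partial-participation identity, and reduce the per-client variance part to $V_1^t$ via smoothness and Assumption~\ref{appendix:assumption_bounded_heterogeneity}) matches the paper, but there is one step that genuinely fails: your treatment of the initialization block. After you separate it with $\Vert a+b\Vert^2\leq 2\Vert a\Vert^2+2\Vert b\Vert^2$ and apply the identity $\mathbb{E}_{\mathcal{N}}\Vert\frac{1}{N}\sum_{i\in\mathcal{N}}x_i\Vert^2\leq\Vert\bar{x}\Vert^2+\frac{1}{NC}\sum_{i\in\mathcal{C}}\Vert x_i\Vert^2$ with $x_i=w^t-w_{i,K}^{t-1}$, the term $\Vert\bar{x}\Vert^2$ carries \emph{no} factor of $1/N$; Jensen only gives $\Vert\bar{x}\Vert^2\leq\frac{1}{C}\sum_i\Vert x_i\Vert^2=\Delta^t$. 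So your initialization block contributes roughly $2\beta^2(1+\frac{1}{N})\Delta^t$, an $O(1)$ multiple of $\beta^2\Delta^t$, which cannot be collected into the claimed $\frac{15\beta^2}{N}\Delta^t$. This is not mere bookkeeping: an $O(\beta^2)\Delta^t$ coefficient in $V_2^t$ propagates through the descent inequality and Lemma~\ref{appendix:bounded_divergence} into $\sigma_l^2$ and $G^2$ terms without the $1/N$ suppression, destroying the linear speedup in $N$ that Theorems~\ref{appendix:thm1} and \ref{appendix:thm2} rely on.

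The paper avoids this precisely by \emph{not} splitting the two blocks first. It keeps $x_i=\sum_{k}\eta\nabla f_i(w_{i,k}^t)+\beta(w^t-w_{i,K}^{t-1})$ intact inside the exact sampling-without-replacement identity $\mathbb{E}\Vert\sum_{i\in\mathcal{C}}\mathbb{I}_{i\in\mathcal{N}}x_i\Vert^2=\frac{N(N-1)}{C(C-1)}\mathbb{E}\Vert\sum_{i}x_i\Vert^2+\frac{N(C-N)}{C(C-1)}\sum_{i}\mathbb{E}\Vert x_i\Vert^2$, and uses the cancellation $\sum_{i\in\mathcal{C}}(w^t-w_{i,K}^{t-1})=0$ so that the $\beta$-offsets disappear entirely from the full-sum part, leaving $\eta^2\mathbb{E}\Vert\sum_{i,k}\nabla f_i(w_{i,k}^t)\Vert^2$ there with coefficient $\frac{N-1}{NC(C-1)}\leq\frac{1}{CN}$ (exactly the raw term of the statement, with coefficient $1$, not the $2$ your early Young split would produce); the $\beta$-offsets survive only in the per-client variance part, which is weighted by $\frac{C-N}{NC(C-1)}\leq\frac{1}{N}$ and hence yields the $\frac{\beta^2}{N}\Delta^t$ scaling. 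To repair your argument you must either retain the two blocks jointly as the paper does, or explicitly invoke $\bar{x}=0$ for the initialization offsets before applying Jensen; as written, the plan proves a strictly weaker bound than the lemma claims.
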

\begin{proof}
    $V_2$ measures the variance of the global offset after each communication round. We define an indicator function $\mathbb{I}_{event} = 1$ if the event happens. Then, to bound it, we firstly split the expectation term:
    \begin{align*}
        &\quad \ \mathbb{E}\Vert w^{t+1}-w^t\Vert^2\\
        &= \mathbb{E}\Vert \frac{1}{N}\sum_{i\in\mathcal{N}} w_{i,K}^{t}-w^t\Vert^2\\
        &= \frac{1}{N^2}\mathbb{E}\Vert \sum_{i\in\mathcal{N}} (w_{i,K}^{t}-w^t)\Vert^2\\
        &= \frac{1}{N^2}\mathbb{E}\Vert \sum_{i\in\mathcal{C}} (w_{i,K}^{t}-w^t)\mathbb{I}_{i\in\mathcal{N}}\Vert^2\\
        &= \frac{1}{N^2}\mathbb{E}\Vert \sum_{i\in\mathcal{C}}\mathbb{I}_{i\in\mathcal{N}} \left[\sum_{k=0}^{K-1}\eta g_{i,k}^{t} + \beta(w^t-w_{i,K}^{t-1})\right]\Vert^2\\
        &= \frac{\eta^2}{NC}\sum_{i\in\mathcal{C}}\sum_{k=0}^{K-1}\mathbb{E}\Vert g_{i,k}^{t} - \nabla f_i(w_{i,k}^{t})\Vert^2 + \frac{1}{N^2}\mathbb{E}\Vert \sum_{i\in\mathcal{C}}\mathbb{I}_{i\in\mathcal{N}}\left[\sum_{k=0}^{K-1}\eta\nabla f_i(w_{i,k}^{t}) + \beta(w^t-w_{i,K}^{t-1})\right]\Vert^2\\
        &\leq \frac{\eta^2K\sigma_l^2}{N} + \frac{1}{N^2}\mathbb{E}\Vert \sum_{i\in\mathcal{C}}\mathbb{I}_{i\in\mathcal{N}}\left[\sum_{k=0}^{K-1}\eta\nabla f_i(w_{i,k}^{t}) + \beta(w^t-w_{i,K}^{t-1})\right]\Vert^2.
    \end{align*}
    To bound the second term, we can adopt the following equation. For the vector $x_i\in\mathbb{R}^d$, we have:
    \begin{align*}
        \mathbb{E}\Vert\sum_{i\in\mathcal{C}}\mathbb{I}_{i\in\mathcal{N}} x_i\Vert^2
        &= \mathbb{E}\langle\sum_{i\in\mathcal{C}}\mathbb{I}_{i\in\mathcal{N}} x_i, \sum_{j\in\mathcal{C}}\mathbb{I}_{j\in\mathcal{N}} x_j\rangle\\
        &= \sum_{(i\neq j)\in\mathcal{C}}\mathbb{E}\langle\mathbb{I}_{i\in\mathcal{N}} x_i, \mathbb{I}_{j\in\mathcal{N}} x_j\rangle + \sum_{(i=j)\in\mathcal{C}}\mathbb{E}\langle\mathbb{I}_{i\in\mathcal{N}} x_i, \mathbb{I}_{j\in\mathcal{N}} x_j\rangle\\
        &= \sum_{(i\neq j)\in\mathcal{C}}\mathbb{E}\langle\mathbb{I}_{i\in\mathcal{N}} x_i, \mathbb{I}_{j\in\mathcal{N}} x_j\rangle + \sum_{(i=j)\in\mathcal{C}}\mathbb{E}\langle\mathbb{I}_{i\in\mathcal{N}} x_i, \mathbb{I}_{j\in\mathcal{N}} x_j\rangle\\
        &= \frac{N(N-1)}{C(C-1)}\sum_{(i\neq j)\in\mathcal{C}}\mathbb{E}\langle x_i, x_j\rangle + \frac{N}{C}\sum_{(i=j)\in\mathcal{C}}\mathbb{E}\langle x_i, x_j\rangle\\
        &= \frac{N(N-1)}{C(C-1)}\sum_{i,j\in\mathcal{C}}\mathbb{E}\langle x_i, x_j\rangle + \frac{N(C-N)}{C(C-1)}\sum_{(i=j)\in\mathcal{C}}\mathbb{E}\langle x_i, x_j\rangle\\
        &= \frac{N(N-1)}{C(C-1)}\mathbb{E}\Vert\sum_{i\in\mathcal{C}} x_i\Vert^2 + \frac{N(C-N)}{C(C-1)}\sum_{i\in\mathcal{C}}\mathbb{E}\Vert x_i\Vert^2.
    \end{align*}
    We firstly bound the first term in the above equation. Taking $x_i=\sum_{k=0}^{K-1}\eta\nabla f_i(w_{i,k}^{t}) + \beta(w^t-w_{i,K}^{t-1})$ into $\mathbb{E}\Vert\sum_{i\in\mathcal{C}} x_i\Vert^2$, we have:
    \begin{align*}
        \mathbb{E}\Vert\sum_{i\in\mathcal{C}} \left[\sum_{k=0}^{K-1}\eta\nabla f_i(w_{i,k}^{t}) + \beta(w^t-w_{i,K}^{t-1})\right]\Vert^2
        &= \eta^2\mathbb{E}\Vert\sum_{i\in\mathcal{C}}\sum_{k=0}^{K-1}\nabla f_i(w_{i,k}^{t})\Vert^2.
    \end{align*}
    Then we bound the second term in above equation. Taking $x_i=\sum_{k=0}^{K-1}\eta\nabla f_i(w_{i,k}^{t}) + \beta(w^t-w_{i,K}^{t-1})$ into $\sum_{i\in\mathcal{C}}\mathbb{E}\Vert x_i\Vert^2$, we have:
    \begin{align*}
        &\quad \ \sum_{i\in\mathcal{C}}\mathbb{E}\Vert \sum_{k=0}^{K-1}\eta\nabla f_i(w_{i,k}^{t}) + \beta(w^t-w_{i,K}^{t-1})\Vert^2\\
        &= \sum_{i\in\mathcal{C}}\mathbb{E}\Vert \sum_{k=0}^{K-1}\left[\eta\nabla f_i(w_{i,k}^{t}) + \frac{\beta}{K}(w^t-w_{i,K}^{t-1})\right]\Vert^2\\
        &\leq K\sum_{i\in\mathcal{C}}\sum_{k=0}^{K-1}\mathbb{E}\Vert\eta\nabla f_i(w_{i,k}^{t}) + \frac{\beta}{K}(w^t-w_{i,K}^{t-1})\Vert^2\\
        &= K\sum_{i\in\mathcal{C}}\sum_{k=0}^{K-1}\mathbb{E}\Vert\eta\nabla f_i(w_{i,k}^{t}) - \eta\nabla f_i(w^{t}) + \eta\nabla f_i(w^{t}) + \frac{\beta}{K}(w^t-w_{i,K}^{t-1})\Vert^2\\
        &\leq 3\eta^2KL^2\underbrace{\sum_{i\in\mathcal{C}}\sum_{k=0}^{K-1}\mathbb{E}\Vert w_{i,k}^{t} - w^{t} \Vert^2}_{CV_1^t} + 3\eta^2K^2\sum_{i\in\mathcal{C}}\mathbb{E}\Vert \nabla f_i(w^{t})\Vert^2 + 3\beta^2\underbrace{\sum_{i\in\mathcal{C}}\mathbb{E}\Vert(w^t-w_{i,K}^{t-1})\Vert^2}_{C\Delta^t}\\
        &\leq 3C\eta^2KL^2V_1^t + 3C\beta^2\Delta^t + 3C\eta^2K^2G^2 + 3C\eta^2K^2B^2\mathbb{E}\Vert\nabla f(w^t)\Vert^2.
    \end{align*}
    We bound all the components in $V_2^t$ term. Let $1\leq N < C$, to generate the final bound, summarizing the inequalities all above and adopting the bounded $V_1^t$ in Lemma~\ref{appendix:bounded_local_updates}, then we have:
    \begin{align*}
        V_2^t
        &\leq \frac{\eta^2K\sigma_l^2}{N} + \frac{1}{N^2}\mathbb{E}\Vert \sum_{i\in\mathcal{C}}\mathbb{I}_{i\in\mathcal{N}}\left[\sum_{k=0}^{K-1}\eta\nabla f_i(w_{i,k}^{t}) + \beta(w^t-w_{i,K}^{t-1})\right]\Vert^2\\
        &\leq \frac{\eta^2K\sigma_l^2}{N} + \frac{3(C-N)}{N(C-1)} (\eta^2KL^2V_1^t + \beta^2\Delta^t + \eta^2K^2G^2 + \eta^2K^2B^2\mathbb{E}\Vert\nabla f(w^t)\Vert^2)\\
        &\quad + \frac{(N-1)}{CN(C-1)}\eta^2\mathbb{E}\Vert\sum_{i\in\mathcal{C}}\sum_{k=0}^{K-1}\nabla f_i(w_{i,k}^{t})\Vert^2\\
        &\leq \frac{\eta^2K\sigma_l^2}{N} + \frac{3}{N} \left(\beta^2\Delta^t + \eta^2K^2G^2 + \eta^2K^2B^2\mathbb{E}\Vert\nabla f(w^t)\Vert^2\right) + \frac{\eta^2}{CN}\mathbb{E}\Vert\sum_{i\in\mathcal{C}}\sum_{k=0}^{K-1}\nabla f_i(w_{i,k}^{t})\Vert^2\\
        &\quad + \frac{3}{N}\left(4\eta^2K^2L^2\beta^2\Delta^t + 3K^3\eta^4L^2\left(\sigma_l^2+4KG^2\right)+12K^4\eta^4L^2B^2\mathbb{E}\Vert\nabla f(w^t)\Vert^2\right)\\
        &=\frac{3\beta^2}{N}\left(1 + 4\eta^2K^2L^2\right)\Delta^t + \frac{\eta^2K}{N}\left(1 + 9K^2\eta^2L^2\right)\sigma_l^2 + \frac{3\eta^2K^2}{N}\left(1 + 12\eta^2K^2L^2\right)G^2\\
        &\quad + \frac{3\eta^2K^2B^2}{N}\left(1 + 12\eta^2K^2L^2\right)\mathbb{E}\Vert\nabla f(w^t)\Vert^2 + \frac{\eta^2}{CN}\mathbb{E}\Vert\sum_{i\in\mathcal{C}}\sum_{k=0}^{K-1}\nabla f_i(w_{i,k}^{t})\Vert^2. 
    \end{align*}
    To minimize the coefficients of each term, we can select a constant order for the term $\eta^2K^2L^2$. For convenience, we directly select the $\eta^2K^2L^2\leq 1$ which requires the learning rate $\eta\leq \frac{1}{KL}$. This completes the proof.
\end{proof}

\begin{lemma}
\label{appendix:bounded_divergence}
    {\rm (Bounded divergence term)} The divergence term $\Delta^t$ could be upper bounded by the local update rules. According to the relaxed initialization in our method, under assumptions stated above, let the learning rate satisfy $\eta\leq\frac{1}{KL}$ and the relaxed coefficient satisfy $\beta\leq\frac{\sqrt{2}}{12}$, the divergence term $\Delta^t$ could be bounded as the recursion of:
    \begin{equation}
    \begin{split}
        \Delta^t 
        &\leq \frac{\Delta^t - \Delta^{t+1}}{1-72\beta^2} + \frac{51\eta^2K}{1-72\beta^2}\sigma_{l}^2 + \frac{195\eta^2K^2}{1-72\beta^2}G^2 + \frac{195\eta^2K^2B^2}{1-72\beta^2}\mathbb{E}\Vert\nabla f(w^t)\Vert^2 \\
        &\quad + \frac{3\eta^2}{CN(1-72\beta^2)}\mathbb{E}\Vert\sum_{i\in\mathcal{C}}\sum_{k=0}^{K-1}\nabla f_i(w_{i,k}^{t})\Vert^2.
    \end{split}
    \end{equation}
\end{lemma}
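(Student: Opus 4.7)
The starting point is the one-step identity derived in equation~(\ref{tx:divergence_relationship}) of the main text: for every selected client,
\[
w^{t+1} - w_{i,K}^{t} = \beta\bigl(w_{i,K}^{t-1} - w^t\bigr) + \bigl(w^{t+1} - w^t\bigr) + \eta\sum_{k=0}^{K-1} g_{i,k}^{t}.
\]
I would square this identity, apply the elementary bound $\Vert a+b+c\Vert^2 \leq 3(\Vert a\Vert^2 + \Vert b\Vert^2 + \Vert c\Vert^2)$, then take expectation and average over $i\in\mathcal{C}$. This yields
\[
\Delta^{t+1} \leq 3\beta^2\,\Delta^{t} + 3\,V_2^{t} + 3\eta^2\cdot\frac{1}{C}\sum_{i\in\mathcal{C}}\mathbb{E}\Bigl\Vert\sum_{k=0}^{K-1} g_{i,k}^{t}\Bigr\Vert^2,
\]
i.e. the three terms $\beta^2\Delta^t$, the global update $V_2^t$, and the aggregated local stochastic gradient norm. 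The entire proof then reduces to bounding the last two contributions in terms of $\Delta^t$, $\sigma_l^2$, $G^2$, and $\Vert\nabla f(w^t)\Vert^2$.

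The second term is handled by quoting Lemma~\ref{appendix:bounded_global_update} directly, which already expresses $V_2^t$ in terms of $\Delta^t$, $\sigma_l^2$, $G^2$, $\Vert\nabla f(w^t)\Vert^2$, and the full-population gradient norm $\Vert\sum_{i,k}\nabla f_i(w_{i,k}^t)\Vert^2$. For the third term I would first split the stochastic noise by Assumption~\ref{appendix:assumption_bounded_stochastics}, giving $K\sigma_l^2$ plus the norm of the deterministic partial sum. I would then insert $\pm\nabla f_i(w^t)$, use $L$-smoothness (Assumption~\ref{appendix:assumption_smooth}) to turn the first piece into $L^2 V_1^t$, and Assumption~\ref{appendix:assumption_bounded_heterogeneity} to turn the second into $G^2 + B^2\Vert\nabla f(w^t)\Vert^2$. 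Crucially, the $V_1^t$ that then appears is bounded via Lemma~\ref{appendix:bounded_local_updates}, which reintroduces $4K\beta^2\Delta^t$; this is the mechanism that feeds back into the $\beta^2\Delta^t$ coefficient on the right-hand side.

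The bookkeeping step is the most delicate. Under $\eta\leq 1/(KL)$ the factor $\eta^2 K^2 L^2\leq 1$ lets me absorb every $\eta^4 L^2$ remainder into the corresponding $\eta^2$ term. Collecting all contributions to $\Delta^t$ gives $3\beta^2 + \tfrac{45\beta^2}{N} + 24\beta^2\leq 72\beta^2$ (using $N\geq 1$), while the $\sigma_l^2$ coefficient sums to $\leq 51\eta^2K$, and both the $G^2$ and $B^2\Vert\nabla f(w^t)\Vert^2$ coefficients add up to $\leq 195\eta^2K^2$; the full-population gradient norm is untouched and carries weight $3\eta^2/(CN)$. This produces $\Delta^{t+1} \leq 72\beta^2\Delta^t + (\text{noise})$. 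Rearranging as $(1-72\beta^2)\Delta^t \leq \Delta^t - \Delta^{t+1} + (\text{noise})$ and dividing by $1-72\beta^2$ delivers exactly the stated inequality; positivity of $1-72\beta^2$ is secured by the hypothesis $\beta \leq \sqrt{2}/12$, since then $72\beta^2 \leq 1$.

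The main obstacle I anticipate is controlling the feedback loop in which $\Delta^t$ re-enters through both $V_2^t$ and (via $V_1^t$) the stochastic gradient sum. Without the learning-rate condition $\eta KL\leq 1$, these contributions would not consolidate into a clean constant multiple of $\beta^2$, and without the restriction $\beta\leq\sqrt{2}/12$ the recursion could not be solved for $\Delta^t$. Verifying that the constants line up precisely to $72$, $51$, and $195$ is routine but error-prone; the strategy is simply to carry every coefficient through the substitution of Lemmas~\ref{appendix:bounded_local_updates} and~\ref{appendix:bounded_global_update} and track them term-by-term.
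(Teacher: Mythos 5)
Your proposal is correct and follows essentially the same route as the paper's own proof: square the recursion $w^{t+1}-w_{i,K}^t=\beta(w_{i,K}^{t-1}-w^t)+(w^{t+1}-w^t)+\eta\sum_k g_{i,k}^t$ with the three-term inequality, bound $V_2^t$ by Lemma~\ref{appendix:bounded_global_update} and the local gradient sum via $\pm\nabla f_i(w^t)$, smoothness, heterogeneity, and Lemma~\ref{appendix:bounded_local_updates}, then absorb the $\eta^2K^2L^2\leq 1$ remainders and rearrange under $72\beta^2<1$. Even your coefficient accounting $3\beta^2+\tfrac{45\beta^2}{N}+24\beta^2\leq 72\beta^2$ matches the paper's bookkeeping exactly.
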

\begin{proof}
    The divergence term measures the inconsistency level in the FL framework. According to the local updates, we have the following recursive formula:
    \begin{align*}
         \underbrace{w^{t+1} - w_{i,K}^{t}}_{\textit{local bias in round $t+1$}} = \beta\underbrace{(w_{i,K}^{t-1} - w^t)}_{\textit{local bias in round $t$}}  + (w^{t+1} -w^t) + \sum_{k=0}^{K-1}\eta g_{i,k}^t.
    \end{align*}
    By taking the squared norm and expectation on both sides, we have:
    \begin{align*}
        \mathbb{E}\Vert w^{t+1} - w_{i,K}^{t}\Vert^2
        &= \mathbb{E}\Vert \beta(w_{i,K}^{t-1} - w^t) + w^{t+1} - w^t + \sum_{k=0}^{K-1}\eta g_{i,k}^t\Vert^2\\
        &\leq 3\beta^2\mathbb{E}\Vert w_{i,K}^{t-1} - w^t\Vert^2 + 3\underbrace{\mathbb{E}\Vert w^{t+1} -w^t\Vert^2}_{V_2^t} + 3\mathbb{E}\Vert\sum_{k=0}^{K-1}\eta g_{i,k}^t\Vert^2.
    \end{align*}
    The second term in the above inequality is $V_2$ we have bounded in lemma~\ref{appendix:bounded_global_update}. Then we bound the stochastic gradients term. We have:
    \begin{align*}
        \mathbb{E}\Vert\sum_{k=0}^{K-1}\eta g_{i,k}^t\Vert^2
        &= \eta^2\mathbb{E}\Vert\sum_{k=0}^{K-1} g_{i,k}^t\Vert^2\\
        &= \eta^2\mathbb{E}\Vert\sum_{k=0}^{K-1} \left(g_{i,k}^t - \nabla f_i(w_{i,k}^t)\right)\Vert^2 + \eta^2\mathbb{E}\Vert\sum_{k=0}^{K-1} \nabla f_i(w_{i,k}^t)\Vert^2\\
        &\leq \eta^2K\sigma_{l}^2 + \eta^2K\sum_{k=0}^{K-1}\mathbb{E}\Vert \nabla f_i(w_{i,k}^t)-\nabla f_i(w^t) + \nabla f_i(w^t)\Vert^2\\
        &\leq \eta^2K\sigma_{l}^2 + 2\eta^2K\sum_{k=0}^{K-1}\mathbb{E}\Vert \nabla f_i(w_{i,k}^t)-\nabla f_i(w^t)\Vert^2 + 2\eta^2K\sum_{k=0}^{K-1}\mathbb{E}\Vert\nabla f_i(w^t)\Vert^2\\
        &\leq \eta^2K\sigma_{l}^2 + 2\eta^2KL^2\sum_{k=0}^{K-1}\mathbb{E}\Vert w_{i,k}^t-w^t\Vert^2 + 2\eta^2K^2\mathbb{E}\Vert\nabla f_i(w^t)\Vert^2.
    \end{align*}
    Taking the average on client $i$, we have:
    \begin{align*}
        \frac{1}{C}\sum_{i\in\mathcal{C}}\mathbb{E}\Vert\sum_{k=0}^{K-1}\eta g_{i,k}^t\Vert^2
        &\leq \eta^2K\sigma_{l}^2 + \frac{2\eta^2KL^2}{C}\sum_{i\in\mathcal{C}}\sum_{k=0}^{K-1}\mathbb{E}\Vert w_{i,k}^t-w^t\Vert^2 + \frac{2\eta^2K^2}{C}\sum_{i\in\mathcal{C}}\mathbb{E}\Vert\nabla f_i(w^t)\Vert^2\\
        &\leq \eta^2K\sigma_{l}^2 + 2\eta^2KL^2V_1^t + 2\eta^2K^2G^2 + 2\eta^2K^2B^2\mathbb{E}\Vert\nabla f(w^t)\Vert^2.
    \end{align*}
    Recalling the condition of $\eta\leq\frac{1}{KL}$ and combining this and the squared norm inequality, we have:
    \begin{align*}
        \Delta^{t+1}
        &=\frac{1}{C}\sum_{i\in\mathcal{C}}\mathbb{E}\Vert w^{t+1} - w_{i,K}^{t}\Vert^2\\
        &\leq 3\beta^2\Delta^t + 3V_2^t + \frac{3}{C}\sum_{i\in\mathcal{C}}\mathbb{E}\Vert\sum_{k=0}^{K-1}\eta g_{i,k}^t\Vert^2\\
        &\leq 3\beta^2\left(1 + \frac{15}{N} + 8\eta^2K^2L^2\right)\Delta^t + 6\eta^2K^2B^2\left(1 + \frac{39}{2N} + 12\eta^2K^2L^2\right)\mathbb{E}\Vert\nabla f(w^t)\Vert^2\\
        &\quad + 3\eta^2K\left(1 + \frac{10}{N} + 6\eta^2K^2L^2\right)\sigma_{l}^2 + 6\eta^2K^2\left(1 + \frac{39}{2N} + 12\eta^2K^2L^2\right)G^2\\
        &\quad + \frac{3\eta^2}{CN}\mathbb{E}\Vert\sum_{i\in\mathcal{C}}\sum_{k=0}^{K-1}\nabla f_i(w_{i,k}^{t})\Vert^2\\
        &\leq 72\beta^2\Delta^t + 51\eta^2K\sigma_{l}^2 + 195\eta^2K^2G^2 + 195\eta^2K^2B^2\mathbb{E}\Vert\nabla f(w^t)\Vert^2 \\
        &\quad + \frac{3\eta^2}{CN}\mathbb{E}\Vert\sum_{i\in\mathcal{C}}\sum_{k=0}^{K-1}\nabla f_i(w_{i,k}^{t})\Vert^2.
    \end{align*}
    Let $72\beta^2 < 1$ where $\beta\leq \frac{\sqrt{2}}{12}$, thus we add $(1-72\beta^2)\Delta^t$ on both sides and get the recursive formulation:
    \begin{align*}
        (1-72\beta^2)\Delta^t 
        &\leq (\Delta^t - \Delta^{t+1}) + 51\eta^2K\sigma_{l}^2 + 195\eta^2K^2G^2 + 195\eta^2K^2B^2\mathbb{E}\Vert\nabla f(w^t)\Vert^2 \\
        &\quad + \frac{3\eta^2}{CN}\mathbb{E}\Vert\sum_{i\in\mathcal{C}}\sum_{k=0}^{K-1}\nabla f_i(w_{i,k}^{t})\Vert^2.
    \end{align*}
    Then we multiply the $\frac{1}{1-72\beta^2}$ on both sides, which completes the proof.
\end{proof}

\subsubsection{Expanding the Smoothness Inequality for the Non-convex Objective}
For the non-convex and $L$-smooth function $f$, we firstly expand the smoothness inequality at round $t$ as:
\begin{align*}
    &\quad\ \mathbb{E}[f(w^{t+1}) - f(w^t)]\\
    &\leq \mathbb{E}\langle\nabla f(w^t),w^{t+1}-w^t\rangle+\frac{L}{2}\underbrace{\mathbb{E}\Vert w^{t+1}-w^t\Vert^2}_{V_2^t}\\
    &= \mathbb{E}\langle\nabla f(w^t),\frac{1}{N}\sum_{i\in\mathcal{N}} w_{i,K}^t- w^t\rangle+\frac{LV_2^t}{2}\\
    &= \mathbb{E}\langle\nabla f(w^t),\frac{1}{C}\sum_{i\in\mathcal{C}}\left[( w_{i,K}^t-w_{i,0}^t)+\beta(w^{t}-w_{i,K}^{t-1})\right]\rangle+\frac{LV_2^t}{2}\\
    &= - \eta\mathbb{E}\langle\nabla f(w^t),\frac{1}{C}\sum_{i\in\mathcal{C}}\sum_{k=0}^{K-1} \nabla f_i(w_{i,k}^t) -\frac{1}{C}\sum_{i\in\mathcal{C}}\sum_{k=0}^{K-1} \nabla f_i(w^t) + K\nabla f(w^t)\rangle +\frac{LV_2^t}{2}\\
    &= - \eta K\mathbb{E}\Vert f(w^t)\Vert^2 + \mathbb{E}\langle\sqrt{\eta K}\nabla f(w^t),\sqrt{\frac{\eta}{K}}\frac{1}{C}\sum_{i\in\mathcal{C}}\sum_{k=0}^{K-1}\left(\nabla f_i(w^t)- \nabla f_i(w_{i,k}^t) \right)\rangle +\frac{LV_2^t}{2}\\
    &\leq - \eta K\mathbb{E}\Vert f(w^t)\Vert^2 + \frac{\eta K}{2}\mathbb{E}\Vert f(w^t)\Vert^2 + \frac{\eta}{2C}\sum_{i\in\mathcal{C}}\sum_{k=0}^{K-1}\mathbb{E}\Vert\nabla f_i(w^t)- \nabla f_i(w_{i,k}^t)\Vert^2\\
    &\quad - \frac{\eta}{2C^2K}\mathbb{E}\Vert\sum_{i\in\mathcal{C}}\sum_{k=0}^{K-1}\nabla f_i(w_{i,k}^t)\Vert^2 + \frac{LV_2^t}{2}\\
    &\leq - \frac{\eta K}{2}\mathbb{E}\Vert f(w^t)\Vert^2 + \frac{\eta L^2}{2}\underbrace{\frac{1}{C}\sum_{i\in\mathcal{C}}\sum_{k=0}^{K-1}\mathbb{E}\Vert w^t- w_{i,k}^t\Vert^2}_{V_1^t}- \frac{\eta}{2C^2K}\mathbb{E}\Vert\sum_{i\in\mathcal{C}}\sum_{k=0}^{K-1}\nabla f_i(w_{i,k}^t)\Vert^2 + \frac{LV_2^t}{2}\\
    &\leq - \frac{\eta K}{2}\mathbb{E}\Vert f(w^t)\Vert^2 + \frac{\eta L^2V_1^t}{2} - \frac{\eta}{2C^2K}\mathbb{E}\Vert\sum_{i\in\mathcal{C}}\sum_{k=0}^{K-1}\nabla f_i(w_{i,k}^t)\Vert^2 + \frac{LV_2^t}{2}.
\end{align*}
According to Lemma~\ref{appendix:bounded_local_updates} and lemma~\ref{appendix:bounded_global_update} to bound the $V_1^t$ and $V_2^t$, we can get the following recursive formula:
\begin{align*}
    &\quad\ \mathbb{E}[f(w^{t+1}) - f(w^t)]\\
    &\leq - \frac{\eta K}{2}\mathbb{E}\Vert f(w^t)\Vert^2 + \left(\frac{\eta^2L}{2CN} - \frac{\eta}{2C^2K}\right)\mathbb{E}\Vert\sum_{i\in\mathcal{C}}\sum_{k=0}^{K-1}\nabla f_i(w_{i,k}^t)\Vert^2\\
    &\quad + \frac{\eta L^2}{2}\left[4K\beta^2\Delta^t + 3K^2\eta^2\left(\sigma_l^2+4KG^2\right)+12K^3\eta^2B^2\mathbb{E}\Vert\nabla f(w^t)\Vert^2\right]\\
    &\quad + \frac{3\beta^2L}{2N}\left(1 + 4\eta^2K^2L^2\right)\Delta^t + \frac{\eta^2KL}{2N}\left(1 + 9K^2\eta^2L^2\right)\sigma_l^2 + \frac{3\eta^2K^2L}{2N}\left(1 + 12\eta^2K^2L^2\right)G^2\\
    &\quad + \frac{3\eta^2K^2B^2L}{2N}\left(1 + 12\eta^2K^2L^2\right)\mathbb{E}\Vert\nabla f(w^t)\Vert^2\\
    &\leq \left(\frac{\eta^2L}{2CN} - \frac{\eta}{2C^2K}\right)\mathbb{E}\Vert\sum_{i\in\mathcal{C}}\sum_{k=0}^{K-1}\nabla f_i(w_{i,k}^t)\Vert^2 + \frac{3\beta^2L}{2N}\left[\frac{4N}{3}\eta KL + \left(1 +  4\eta^2K^2L^2\right)\right]\Delta^t\\
    &\quad + \frac{\eta^2KL}{2N}\left[3N\eta KL + \left(1 +  9\eta^2K^2L^2\right)\right]\sigma_l^2 + \frac{3\eta^2K^2L}{2N}\left[4N\eta KL + \left(1 + 12\eta^2K^2L^2\right)\right]G^2\\
    &\quad - \frac{\eta K}{2}\left[1 - \frac{3\eta KLB^2}{N}\left(1 + 12\eta^2K^2L^2\right) - 12\eta^2K^2L^2B^2\right]\mathbb{E}\Vert f(w^t)\Vert^2.
\end{align*}
Here we make a comprehensive discussion on the selection of $\eta$ to simplify the above formula. In fact, in lemma~\ref{appendix:bounded_local_updates}, there is a constraint on the learning rate as $\eta \leq \frac{\sqrt{2}}{4(K-1)L}$ for $K\geq 2$. In lemma~\ref{appendix:bounded_global_update} and lemma~\ref{appendix:bounded_divergence}, there is a constraint on the learning rate as $\eta\leq\frac{1}{KL}$. To further minimize the coefficient, we select the $N\eta KL$ to be constant order. For convenience, we directly select the $\eta\leq\frac{1}{NKL}$. Thus, we have:
\begin{align*}
    &\quad\ \mathbb{E}[f(w^{t+1}) - f(w^t)]\\
    &\leq \frac{3\beta^2L}{2N}\left(\frac{4}{3}N\eta KL + 5\right)\Delta^t + \frac{\eta^2KL}{2N}\left(3N\eta KL + 10\right)\sigma_l^2 + \frac{3\eta^2K^2L}{2N}\left(4N\eta KL + 13\right)G^2\\
    &\quad - \frac{\eta K}{2}\left(1 - \frac{39\eta KLB^2}{N} - 12\eta^2K^2L^2B^2\right)\mathbb{E}\Vert f(w^t)\Vert^2 \\
    &\quad + \left(\frac{\eta^2L}{2CN} - \frac{\eta}{2C^2K}\right)\mathbb{E}\Vert\sum_{i\in\mathcal{C}}\sum_{k=0}^{K-1}\nabla f_i(w_{i,k}^t)\Vert^2\\
    &< \frac{10\beta^2L(\Delta^t - \Delta^{t+1})}{(1-72\beta^2)N} + \frac{3\eta^2K^2L}{2N}\left(\frac{1300\beta^2}{1-72\beta^2} + 17\right)G^2 + \frac{\eta^2KL}{2N}\left(\frac{1020\beta^2}{1-72\beta^2} + 13\right)\sigma_l^2\\
    &\quad + \left[\frac{30\beta^2\eta^2L}{CN^2(1-72\beta^2)} + \frac{\eta^2L}{2CN} - \frac{\eta}{2C^2K}\right]\mathbb{E}\Vert\sum_{i\in\mathcal{C}}\sum_{k=0}^{K-1}\nabla f_i(w_{i,k}^t)\Vert^2\\
    &\quad \\
    &\quad - \frac{\eta K}{2}\left[1 - \frac{39\eta KLB^2}{N} - \frac{3900\beta^2\eta KLB^2}{(1-72\beta^2)N} - 12\eta^2K^2L^2B^2\right]\mathbb{E}\Vert f(w^t)\Vert^2.
\end{align*}
Firstly, to remove the gradient term, we follow the \citep{karimireddy2020scaffold,yang2021achieving} and let $\frac{30\beta^2\eta^2L}{CN^2(1-72\beta^2)} + \frac{\eta^2L}{2CN} - \frac{\eta}{2C^2K}\leq 0$, then learning rate $\eta\leq\frac{N}{2CKL}$. Then, according to the \citep{yang2021achieving}, there is a positive constant $\lambda\in (0,1)$ to satisfy $1 - \frac{39\eta KLB^2}{N} - \frac{3900\beta^2\eta KLB^2}{(1-72\beta^2)N} - 12\eta^2K^2L^2B^2 > \lambda > 0$. We denote $\kappa_1=\frac{1300\beta^2}{1-72\beta^2} + 17$ and $\kappa_2=\frac{1020\beta^2}{1-72\beta^2} + 13$ as two constants in the formula. Therefore, we have:
\begin{align*}
    &\quad \ \frac{\lambda\eta K}{2}\mathbb{E}\Vert f(w^t)\Vert^2\\
    &\leq \mathbb{E}[f(w^{t}) - f(w^{t+1})] + \frac{10\beta^2L}{(1-72\beta^2)N}(\Delta^t - \Delta^{t+1}) + \frac{3\kappa_1\eta^2K^2L}{2N}G^2 + \frac{\kappa_2\eta^2KL}{2N}\sigma_l^2.
\end{align*}

\subsubsection{Proof of Theorem 1}
\begin{theorem}
\label{appendix:thm1}
    Under Assumption~\ref{appendix:assumption_smooth}$\sim$\ref{appendix:assumption_bounded_heterogeneity}, let participation ratio is $N/C$ where $1<N<C$, let the learning rate satisfy $\eta\leq\min\left\{\frac{N}{2CKL},\frac{1}{NKL}\right\}$ where $K\geq 2$, let the relaxation coefficient $\beta\leq\frac{\sqrt{2}}{12}$, and after training $T$ rounds, the global model $w^t$ generated by \textit{FedInit} satisfies:
    \begin{equation}
        \frac{1}{T}\sum_{t=0}^{T-1}\mathbb{E}\Vert f(w^t)\Vert^2\leq \frac{2\left(f(w^{0}) - f(w^{\star})\right)}{\lambda \eta K} + \frac{\kappa_2\eta L}{\lambda N}\sigma_l^2 + \frac{3\kappa_1\eta KL}{\lambda N}G^2.
    \end{equation}
    where $\lambda \in (0,1)$, $\kappa_1=\frac{1300\beta^2}{1-72\beta^2} + 17$, and $\kappa_2=\frac{1020\beta^2}{1-72\beta^2} + 13$ are three constants.\\
    Further, by selecting the proper learning rate $\eta=\mathcal{O}(\sqrt{\frac{N}{KT}})$ and let $D=f(w^{0}) - f(w^{\star})$ as the initialization bias, the global model $w^t$ satisfies:
    \begin{equation}
        \frac{1}{T}\sum_{t=0}^{T-1}\mathbb{E}\Vert f(w^t)\Vert^2=\mathcal{O}\left(\frac{D+L\left(\sigma_l^2+3KG^2\right)}{\sqrt{NKT}}\right).
    \end{equation}
\end{theorem}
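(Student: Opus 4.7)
The plan is to derive a one-round descent inequality from $L$-smoothness, use the three auxiliary lemmas (Lemmas~\ref{appendix:bounded_local_updates}--\ref{appendix:bounded_divergence}) to control the local drift $V_1^t$, the global update $V_2^t$, and the cross-round divergence $\Delta^t$, and then telescope over $t=0,\dots,T-1$. Concretely, I would first start from $\mathbb{E}[f(w^{t+1})-f(w^t)]\leq \mathbb{E}\langle\nabla f(w^t),w^{t+1}-w^t\rangle+\tfrac{L}{2}V_2^t$ and rewrite $w^{t+1}-w^t$ as an average over clients of the local SGD trajectory plus the relaxation offset $\beta(w^t-w_{i,K}^{t-1})$. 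Splitting $\nabla f_i(w_{i,k}^t)=\nabla f_i(w^t)+(\nabla f_i(w_{i,k}^t)-\nabla f_i(w^t))$ lets me use $-\tfrac12 ab^2$-type algebra to isolate a $-\tfrac{\eta K}{2}\mathbb{E}\|\nabla f(w^t)\|^2$ term, while the cross term becomes $\tfrac{\eta L^2}{2}V_1^t$ and picks up a residual $-\tfrac{\eta}{2C^2K}\mathbb{E}\|\sum_i\sum_k\nabla f_i(w_{i,k}^t)\|^2$.

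Next I would plug in Lemma~\ref{appendix:bounded_local_updates} for $V_1^t$ and Lemma~\ref{appendix:bounded_global_update} for $V_2^t$; both contribute terms in $\sigma_l^2$, $K G^2$, $B^2\|\nabla f(w^t)\|^2$, and $\beta^2\Delta^t$, plus another copy of the aggregated-gradient squared norm coming from $V_2^t$. The key trick is choosing $\eta\leq N/(2CKL)$ so the coefficient in front of $\mathbb{E}\|\sum_i\sum_k\nabla f_i(w_{i,k}^t)\|^2$ becomes non-positive and that term can be discarded, and choosing $\eta\leq 1/(NKL)$ so higher-order factors like $N\eta K L$ collapse to $O(1)$. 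The residual $\beta^2\Delta^t$ is then handled by invoking Lemma~\ref{appendix:bounded_divergence}, which provides a recursive bound of the form $\Delta^t\leq \tfrac{1}{1-72\beta^2}(\Delta^t-\Delta^{t+1})+(\text{noise and heterogeneity terms})+(\text{gradient-norm term})$; the restriction $\beta\leq\sqrt{2}/12$ is exactly what makes $1-72\beta^2>0$ so this recursion is well-defined.

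After substitution I would combine all gradient-norm contributions and argue that, thanks to $\eta\leq N/(2CKL)$ and $\eta\leq 1/(NKL)$ together with $B\geq 1$, there exists a constant $\lambda\in(0,1)$ with
\begin{equation}
1-\tfrac{39\eta K L B^2}{N}-\tfrac{3900\beta^2\eta K L B^2}{(1-72\beta^2)N}-12\eta^2K^2L^2B^2\;\geq\;\lambda,
\end{equation}
following the same constant-ratio argument used in \citep{yang2021achieving}. With this in hand, the inequality becomes
\begin{equation}
\tfrac{\lambda\eta K}{2}\mathbb{E}\|\nabla f(w^t)\|^2\leq \mathbb{E}[f(w^t)-f(w^{t+1})]+\tfrac{10\beta^2 L}{(1-72\beta^2)N}(\Delta^t-\Delta^{t+1})+\tfrac{3\kappa_1\eta^2K^2L}{2N}G^2+\tfrac{\kappa_2\eta^2 K L}{2N}\sigma_l^2,
\end{equation}
which telescopes cleanly because both $f(w^t)$ and $\Delta^t$ appear in differenced form. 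Dividing by $\lambda\eta K T/2$ and using $\Delta^0=0$ together with $f(w^T)\geq f(w^\star)$ yields the first displayed bound. Finally, substituting $\eta=\Theta(\sqrt{N/(KT)})$ (which is consistent with all the learning-rate constraints for $T$ large enough) balances the optimization term $D/(\eta K T)$ against the noise and heterogeneity terms $\eta L(\sigma_l^2+KG^2)/N$ and delivers the $\mathcal{O}((D+L(\sigma_l^2+KG^2))/\sqrt{NKT})$ rate.

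The main obstacle, as this outline suggests, is managing the three-way coupling between $\Delta^t$, $V_2^t$, and $\mathbb{E}\|\nabla f(w^t)\|^2$: the divergence lemma injects an extra gradient-norm term that must be folded back into the descent inequality without breaking the positivity of its coefficient. This is the step where the specific thresholds $\beta\leq\sqrt{2}/12$, $\eta\leq N/(2CKL)$, and $\eta\leq 1/(NKL)$ all get used simultaneously; everything else is bookkeeping once the constant $\lambda$ is produced.
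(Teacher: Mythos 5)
Your proposal is correct and follows essentially the same route as the paper's own proof: the same smoothness expansion isolating $-\tfrac{\eta K}{2}\mathbb{E}\Vert\nabla f(w^t)\Vert^2$ with the residual aggregated-gradient term killed by $\eta\leq\frac{N}{2CKL}$, the same substitution of the three lemmas, the same conversion of $\beta^2\Delta^t$ into a telescoping difference via the divergence recursion, and the same constant-$\lambda$ argument from \citep{yang2021achieving}; even your intermediate per-round inequality matches the paper's verbatim. The only cosmetic remark is that the telescoped optimization bound should carry the factor $T$ in the denominator of the $D$ term (as in the main-text version of the theorem), which your derivation correctly produces.
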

\begin{proof}
    According to the expansion of the smoothness inequality, we have:
    \begin{align*}
        &\quad \ \frac{\lambda\eta K}{2}\mathbb{E}\Vert f(w^t)\Vert^2\\
        &\leq \mathbb{E}[f(w^{t}) - f(w^{t+1})] + \frac{10\beta^2L}{(1-72\beta^2)N}(\Delta^t - \Delta^{t+1}) + \frac{3\kappa_1\eta^2K^2L}{2N}G^2 + \frac{\kappa_2\eta^2KL}{2N}\sigma_l^2.
    \end{align*}
    Taking the accumulation from $0$ to $T-1$, we have:
    \begin{align*}
        &\quad \ \frac{1}{T}\sum_{t=0}^{T-1}\mathbb{E}\Vert f(w^t)\Vert^2\\
        &\leq \frac{2\mathbb{E}[f(w^{0}) - f(w^{T})]}{\lambda \eta KT} + \frac{20\beta^2L}{(1-72\beta^2)\lambda \eta KNT}(\Delta^0 - \Delta^{T}) + \frac{\kappa_2\eta L}{\lambda N}\sigma_l^2 + \frac{3\kappa_1\eta KL}{\lambda N}G^2\\
        &\leq \frac{2\left(f(w^{0}) - f(w^{\star})\right)}{\lambda \eta KT} + \frac{\kappa_2\eta L}{\lambda N}\sigma_l^2 + \frac{3\kappa_1\eta KL}{\lambda N}G^2.
    \end{align*}
    We select the learning rate $\eta=\mathcal{O}(\sqrt{\frac{N}{KT}})$ and let $D=f(w^{0}) - f(w^{\star})$ as the initialization bias, then we have:
    \begin{align*}
        \frac{1}{T}\sum_{t=0}^{T-1}\mathbb{E}\Vert f(w^t)\Vert^2 = \mathcal{O}\left(\frac{D+L\left(\sigma_l^2+3KG^2\right)}{\sqrt{NKT}}\right).
    \end{align*}
    This completes the proof.
\end{proof}

\subsubsection{Proof of Theorem 2}
\begin{theorem}
\label{appendix:thm2}
    Under Assumption~\ref{appendix:assumption_smooth}$\sim$\ref{appendix:assumption_bounded_heterogeneity} and \ref{appendix:assumption_PL}, let participation ratio is $N/C$ where $1<N<C$, let the learning rate satisfy $\eta\leq\min\left\{\frac{N}{2CKL},\frac{1}{NKL},\frac{1}{\lambda\mu K}\right\}$ where $K\geq 2$, let the relaxation coefficient $\beta\leq\frac{\sqrt{2}}{12}$, and after training $T$ rounds, the global model $w^t$ generated by \textit{FedInit} satisfies:
    \begin{equation}
        \mathbb{E}[f(w^{T})-f(w^\star)]\leq e^{-\lambda\mu\eta KT}\mathbb{E}[f(w^{0})-f(w^\star)] + \frac{3\kappa_1\eta KL}{2N\lambda\mu}G^2 + \frac{\kappa_2\eta L}{2N\lambda\mu}\sigma_l^2.
    \end{equation}
    Further, by selecting the proper learning rate $\eta=\mathcal{O}\left(\frac{\log(\lambda\mu NKT)}{\lambda\mu KT}\right)$ and let $D=f(w^{0}) - f(w^{\star})$ as the initialization bias, the global model $w^t$ satisfies:
    \begin{equation}
        \mathbb{E}[f(w^{T})-f(w^\star)] = \mathcal{O}\left(\frac{D+L(\sigma_l^2+KG^2)}{NKT}\right).
    \end{equation}
\end{theorem}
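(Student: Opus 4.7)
The natural starting point is the one-round descent inequality already established on the way to Theorem~\ref{tx:thm1}, namely
\begin{equation*}
\frac{\lambda\eta K}{2}\mathbb{E}\|\nabla f(w^t)\|^2 \le \mathbb{E}[f(w^t)-f(w^{t+1})] + A'(\Delta^t-\Delta^{t+1}) + B,
\end{equation*}
where $A' := \tfrac{10\beta^2 L}{(1-72\beta^2)N}$ and $B := \tfrac{3\kappa_1 \eta^2 K^2 L}{2N}G^2 + \tfrac{\kappa_2 \eta^2 KL}{2N}\sigma_l^2$. Invoking Assumption~\ref{tx:assumption_PL} to lower bound $\mathbb{E}\|\nabla f(w^t)\|^2 \ge 2\mu\,e^t$, where $e^t := \mathbb{E}[f(w^t)-f(w^\star)]$, and rearranging gives the linear contraction
\begin{equation*}
e^{t+1} \le (1-\lambda\mu\eta K)\,e^t + A'(\Delta^t - \Delta^{t+1}) + B,
\end{equation*}
valid because the hypothesis $\eta \le 1/(\lambda\mu K)$ keeps the contraction factor in $[0,1)$.

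The next step is to unroll this recursion from $t=0$ to $T-1$. Using $(1-\lambda\mu\eta K)^T \le e^{-\lambda\mu\eta KT}$ turns the initial gap $e^0 = D$ into the advertised exponential term. The constant noise $B$ accumulates through the geometric series $\sum_{s=0}^{T-1}(1-\lambda\mu\eta K)^{s} \le 1/(\lambda\mu\eta K)$, which produces exactly $\tfrac{3\kappa_1 \eta KL}{2N\lambda\mu}G^2 + \tfrac{\kappa_2\eta L}{2N\lambda\mu}\sigma_l^2$, matching the non-initialization part of the claimed bound.

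The step I expect to require most care is the weighted telescope $\sum_{t=0}^{T-1}(1-\lambda\mu\eta K)^{T-1-t}(\Delta^t-\Delta^{t+1})$, which does not collapse cleanly because of the exponentially decaying weights. My plan is to handle it via Abel summation, exploiting the initialization $w_{i,K}^{-1} = w^0$ which forces $\Delta^0=0$. The surviving expression $-\Delta^T + \lambda\mu\eta K\sum_{t=1}^{T-1}(1-\lambda\mu\eta K)^{T-1-t}\Delta^t$ has a non-positive first piece that can be dropped, while the second piece is a weighted average of divergences that is controlled by the uniform estimate on $\{\Delta^t\}$ coming from the same divergence recursion used in the proof of Theorem~\ref{tx:thm1} (and, more tightly, by Theorem~\ref{tx:thm5} in the P\L{} regime). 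Under the stated learning-rate schedule this residual sits strictly below the order of $B/(\lambda\mu\eta K)$ and is absorbed into the constants. A cleaner packaging is the Lyapunov potential $\phi^t := e^t + A'\Delta^t$, to which the one-step recursion applies with initial condition $\phi^0 = e^0$; this bypasses the weighted telescope entirely.

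To conclude, I would set $\eta = \Theta\bigl(\log(\lambda\mu NKT)/(\lambda\mu KT)\bigr)$, which satisfies $\eta \le 1/(\lambda\mu K)$ once $T$ is moderately large and makes $e^{-\lambda\mu\eta KT} = \Theta(1/(\lambda\mu NKT))$. This balances the exponential and noise contributions, yielding $\mathbb{E}[f(w^T)-f(w^\star)] = \widetilde{\mathcal{O}}((D+L\sigma_l^2+LKG^2)/(NKT))$ as claimed. The only genuinely nonroutine piece is the weighted-divergence accounting described above; the remaining steps are standard geometric-sum bookkeeping.
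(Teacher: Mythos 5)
Your plan is essentially the paper's proof: the same one-round descent inequality, the same application of Assumption~\ref{tx:assumption_PL} to convert the gradient norm into the loss gap, the same linear recursion $e^{t+1}\le(1-\lambda\mu\eta K)e^t + A'(\Delta^t-\Delta^{t+1})+B$, the same geometric-series accumulation of $B$, and the same learning-rate choice at the end. The one place you diverge is the step you correctly flag as delicate: the paper simply writes
\begin{equation*}
\sum_{t=0}^{T-1}(1-\lambda\mu\eta K)^{T-1-t}(\Delta^t-\Delta^{t+1})\le \Delta^0-\Delta^T
\end{equation*}
and then discards the right-hand side, which is not a valid inequality for a weighted telescope (Abel summation leaves the nonnegative residual $(1-\rho)\sum_{t\ge1}\rho^{T-1-t}\Delta^t$ that the paper silently drops). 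Your treatment --- keep $\Delta^0=0$ and $-\Delta^T\le 0$, and bound the surviving weighted average of $\{\Delta^t\}$ by the uniform estimate from the divergence recursion, which under $\eta=\mathcal{O}(\log(\cdot)/(\lambda\mu KT))$ contributes only $\widetilde{\mathcal{O}}(1/T^2)$ and is absorbed --- is the honest version of this step and actually repairs the paper's bookkeeping rather than merely reproducing it. One small caveat: the Lyapunov potential $\phi^t=e^t+A'\Delta^t$ does not bypass the issue entirely, since $\phi^{t+1}\le(1-\lambda\mu\eta K)\phi^t+\lambda\mu\eta K A'\Delta^t+B$ still carries the same residual weighted sum of divergences; either route requires the uniform control of $\Delta^t$ you describe.
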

\begin{proof}
    According to the expansion of the smoothness inequality, we have:
    \begin{align*}
        &\quad \ \frac{\lambda\eta K}{2}\mathbb{E}\Vert f(w^t)\Vert^2\\
        &\leq \mathbb{E}[f(w^{t}) - f(w^{t+1})] + \frac{10\beta^2L}{(1-72\beta^2)N}(\Delta^t - \Delta^{t+1}) + \frac{3\kappa_1\eta^2K^2L}{2N}G^2 + \frac{\kappa_2\eta^2KL}{2N}\sigma_l^2.
    \end{align*}
    According to Assumption~\ref{appendix:assumption_PL}, we have $2\mu (f(w)-f(w^\star))\leq\Vert\nabla f(w)\Vert^2$, we have:
    \begin{align*}
        &\quad \ \lambda\mu\eta K \mathbb{E}[f(w^t)-f(w^\star)]\leq \frac{\lambda\eta K}{2}\mathbb{E}\Vert f(w^t)\Vert^2\\
        &\leq \mathbb{E}[f(w^{t}) - f(w^{t+1})] + \frac{10\beta^2L}{(1-72\beta^2)N}(\Delta^t - \Delta^{t+1}) + \frac{3\kappa_1\eta^2K^2L}{2N}G^2 + \frac{\kappa_2\eta^2KL}{2N}\sigma_l^2.
    \end{align*}
    Combining the terms aligned with $w^t$ and $w^{t+1}$, we have:
    \begin{align*}
        &\quad \ \mathbb{E}[f(w^{t+1})-f(w^\star)]\\
        &\leq (1-\lambda\mu\eta K)\mathbb{E}[f(w^{t})-f(w^\star)] + \frac{10\beta^2L}{(1-72\beta^2)N}(\Delta^t - \Delta^{t+1}) + \frac{3\kappa_1\eta^2K^2L}{2N}G^2 + \frac{\kappa_2\eta^2KL}{2N}\sigma_l^2.
    \end{align*}
    Taking the recursion from $t=0$ to $T-1$ and let learning rate $\eta\leq\frac{1}{\lambda\mu K}$, we have:
    \begin{align*}
        &\quad \ \mathbb{E}[f(w^{T})-f(w^\star)]\\
        &\leq (1-\lambda\mu\eta K)^T\mathbb{E}[f(w^{0})-f(w^\star)] + \sum_{t=0}^{T-1}(1-\lambda\mu\eta K)^{T-1-t}\frac{10\beta^2L}{(1-72\beta^2)N}(\Delta^t - \Delta^{t+1})\\ 
        &\quad + \left(\frac{3\kappa_1\eta^2K^2L}{2N}G^2 + \frac{\kappa_2\eta^2KL}{2N}\sigma_l^2\right)\sum_{t=0}^{T-1}(1-\lambda\mu\eta K)^{T-1-t}\\
        &\leq (1-\lambda\mu\eta K)^T\mathbb{E}[f(w^{0})-f(w^\star)] + \frac{10\beta^2L}{(1-72\beta^2)N}(\Delta^0 - \Delta^{T})\\ 
        &\quad + \left(\frac{3\kappa_1\eta^2K^2L}{2N}G^2 + \frac{\kappa_2\eta^2KL}{2N}\sigma_l^2\right)\frac{1-\left(1-\lambda\mu\eta K\right)^T}{\lambda\mu\eta K}\\
        &\leq (1-\lambda\mu\eta K)^T\mathbb{E}[f(w^{0})-f(w^\star)] + \frac{3\kappa_1\eta KL}{2N\lambda\mu}G^2 + \frac{\kappa_2\eta L}{2N\lambda\mu}\sigma_l^2\\
        &\leq e^{-\lambda\mu\eta KT}\mathbb{E}[f(w^{0})-f(w^\star)] + \frac{3\kappa_1\eta KL}{2N\lambda\mu}G^2 + \frac{\kappa_2\eta L}{2N\lambda\mu}\sigma_l^2.
    \end{align*}
    We select the learning rate $\eta=\mathcal{O}\left(\frac{\log(\lambda\mu NKT)}{\lambda\mu KT}\right)$ and let $D=f(w^{0}) - f(w^{\star})$ as the initialization bias, then we have:
    \begin{align*}
        \mathbb{E}[f(w^{T})-f(w^\star)] = \mathcal{O}\left(\frac{D+L(\sigma_l^2+KG^2)}{NKT}\right).
    \end{align*}
    This completes the proof.
\end{proof}

\subsubsection{Proof of Theorem 4}
\begin{theorem}
\label{appendix:thm4}
Under Assumption~\ref{appendix:assumption_smooth}$\sim$\ref{appendix:assumption_bounded_heterogeneity}, we can bound the divergence term as follows.
Let the learning rate satisfy $\eta\leq\min\left\{\frac{N}{2CKL},\frac{1}{NKL},\frac{\sqrt{N}}{\sqrt{C}KL}\right\}$ where $K\geq 2$, and after training $T$ rounds, let $0 < \beta < \frac{\sqrt{6}}{24}$, the divergence term $\Delta^t$ generated by \textit{FedInit} satisfies:
\begin{equation}
     \frac{1}{T}\sum_{t=0}^{T-1}\Delta^{t}=\mathcal{O}\left(\frac{N(\sigma_l^2+KG^2)}{T} + \frac{\sqrt{NK}B^2\left[D + L(\sigma_l^2 + KG^2)\right]}{T^{\frac{3}{2}}} \right).
\end{equation}
\end{theorem}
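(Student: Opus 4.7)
I would start from the one-round recursion established in Lemma~\ref{appendix:bounded_divergence}, which already expresses $\Delta^t$ as a telescoping piece $(\Delta^t-\Delta^{t+1})/(1-72\beta^2)$ plus (i) constant variance/heterogeneity terms in $\sigma_l^2$ and $G^2$, (ii) the global gradient norm $\mathbb{E}\|\nabla f(w^t)\|^2$, and (iii) the residual ``sum of local gradients'' quantity $\mathbb{E}\|\sum_{i\in\mathcal{C}}\sum_{k}\nabla f_i(w_{i,k}^t)\|^2$. The remaining steps are to unfold (iii), telescope in $t$, and then invoke Theorem~\ref{appendix:thm1} for the cumulative gradient norm before substituting the learning rate $\eta=\mathcal{O}(\sqrt{N/(KT)})$ from that theorem.

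\textbf{Step 1 (unfolding the residual and tightening $\beta$).} Using $L$-smoothness and then Assumption~\ref{appendix:assumption_bounded_heterogeneity}, I would bound the residual as $\mathbb{E}\|\sum_{i,k}\nabla f_i(w_{i,k}^t)\|^2 \le 2C^2KL^2 V_1^t + 2C^2K^2(G^2+B^2\mathbb{E}\|\nabla f(w^t)\|^2)$, and then plug in Lemma~\ref{appendix:bounded_local_updates} for $V_1^t$. This produces, in particular, an extra $\tfrac{24CK^2L^2\eta^2}{N}\beta^2\Delta^t$ contribution on the right-hand side. Under the additional ceiling $\eta\le\sqrt{N}/(\sqrt{C}KL)$ stated in the theorem, the prefactor $CK^2L^2\eta^2/N$ is at most $1$, so this new $\Delta^t$ term reduces to exactly $24\beta^2\Delta^t$. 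Combining it with the $72\beta^2\Delta^t$ already present in Lemma~\ref{appendix:bounded_divergence} and moving all $\Delta^t$ terms to the left gives a leading coefficient of $(1-96\beta^2)/(1-72\beta^2)$, which is precisely why the theorem imposes $0<\beta<\sqrt{6}/24$.

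\textbf{Steps 2--3 (telescoping and closing via Theorem~\ref{appendix:thm1}).} Summing the cleaned recursion from $t=0$ to $T-1$ and dividing by $T$, the telescoping piece collapses to $-\Delta^T/T\le 0$ because Algorithm~\ref{tx:algorithm} initializes $w_{i,0}^{-1}=w^0$, so $\Delta^0=0$. What remains is $\mathcal{O}(\eta^2K\sigma_l^2+\eta^2K^2G^2)+\mathcal{O}(\eta^2K^2B^2)\cdot\tfrac{1}{T}\sum_t\mathbb{E}\|\nabla f(w^t)\|^2$, with all $\beta$-dependent constants absorbed into the $(1-96\beta^2)^{-1}$ factor hidden in $\mathcal{O}(\cdot)$. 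Invoking Theorem~\ref{appendix:thm1} to control $\tfrac{1}{T}\sum_t\mathbb{E}\|\nabla f(w^t)\|^2=\mathcal{O}((D+L(\sigma_l^2+KG^2))/\sqrt{NKT})$ and substituting $\eta=\sqrt{N/(KT)}$ converts $\eta^2K(\sigma_l^2+KG^2)$ into the $N(\sigma_l^2+KG^2)/T$ summand (the first bracket of the theorem) and $\eta^2K^2B^2$ multiplied by the Theorem~\ref{appendix:thm1} rate into the $\sqrt{NK}\,B^2(D+L(\sigma_l^2+KG^2))/T^{3/2}$ summand (the second bracket).

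\textbf{Main obstacle.} The delicate part is the first step: the dependence chain $\Delta^{t+1}\!\leftarrow\!V_2^t\!\leftarrow\!V_1^t\!\leftarrow\!\Delta^t$ feeds a $\beta^2\Delta^t$ back into the bound at two separate places, and the coefficients must be tracked sharply enough to land exactly on the $96\beta^2<1$ threshold. A related subtlety is that the naive expansion of $\|\sum_{i,k}\nabla f_i(w_{i,k}^t)\|^2$ introduces $C/N$ factors that would otherwise spoil the claimed $N$-dependence; the third entry $\sqrt{N}/(\sqrt{C}KL)$ in the theorem's learning-rate ceiling---absent from Theorems~\ref{appendix:thm1}--\ref{appendix:thm2}---is engineered precisely to reabsorb these factors, and verifying that $\sqrt{N/(KT)}\le\sqrt{N}/(\sqrt{C}KL)$ (equivalently $T\ge C/(KL^2)$) will be needed at the end to legitimize the learning-rate choice used when invoking Theorem~\ref{appendix:thm1}.
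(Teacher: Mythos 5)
Your plan reproduces the paper's proof essentially step for step: unfold the residual gradient-sum via $V_1^t$, use the extra ceiling $\eta\leq\sqrt{N}/(\sqrt{C}KL)$ to turn the fed-back term into $24\beta^2\Delta^t$ and land on the $96\beta^2<1$ threshold, telescope with $\Delta^0=0$, and close with Theorem~\ref{appendix:thm1} at $\eta=\mathcal{O}(\sqrt{N/(KT)})$ --- all of which is exactly what the paper does (the paper sums $\frac{1}{C}\sum_i\nabla f_i(w^t)$ exactly to $\nabla f(w^t)$ rather than invoking Assumption~\ref{appendix:assumption_bounded_heterogeneity} there, but your slightly looser bound gives the same rate). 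The only slip is in your final aside: $\sqrt{N/(KT)}\leq\sqrt{N}/(\sqrt{C}KL)$ is equivalent to $T\geq CKL^2$, not $T\geq C/(KL^2)$, though this compatibility check is one the paper itself omits.
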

\begin{proof}
    According to Lemma~\ref{appendix:bounded_divergence}, we have:
    \begin{align*}
        \Delta^{t+1}
        &\leq 72\beta^2\Delta^t + 51\eta^2K\sigma_{l}^2 + 195\eta^2K^2G^2 + 195\eta^2K^2B^2\mathbb{E}\Vert\nabla f(w^t)\Vert^2 \\
        &\quad + \frac{3\eta^2}{CN}\mathbb{E}\Vert\sum_{i\in\mathcal{C}}\sum_{k=0}^{K-1}\nabla f_i(w_{i,k}^{t})\Vert^2.
    \end{align*}
    Here we further bound the gradient term, we have:
    \begin{align*}
        \mathbb{E}\Vert\sum_{i\in\mathcal{C}}\sum_{k=0}^{K-1}\nabla f_i(w_{i,k}^{t})\Vert^2
        &= \mathbb{E}\Vert\sum_{i\in\mathcal{C}}\sum_{k=0}^{K-1}\left(\nabla f_i(w_{i,k}^{t}) - \nabla f_i(w^t) + \nabla f_i(w^{t})\right)\Vert^2\\
        &= \mathbb{E}\Vert\sum_{i\in\mathcal{C}}\sum_{k=0}^{K-1}\left(\nabla f_i(w_{i,k}^{t}) - \nabla f_i(w^t) + \nabla f(w^{t})\right)\Vert^2\\
        &\leq CK\sum_{i\in\mathcal{C}}\sum_{k=0}^{K-1}\mathbb{E}\Vert\nabla f_i(w_{i,k}^{t}) - \nabla f_i(w^t) + \nabla f(w^{t})\Vert^2\\
        &\leq 2CK\sum_{i\in\mathcal{C}}\sum_{k=0}^{K-1}\mathbb{E}\Vert\nabla f_i(w_{i,k}^{t}) - \nabla f_i(w^t)\Vert^2 + 2C^2K^2\mathbb{E}\Vert\nabla f(w^{t})\Vert^2\\
        &\leq 2C^2KL^2V_1^t + 2C^2K^2\mathbb{E}\Vert\nabla f(w^{t})\Vert^2.
    \end{align*}
    Combining this into the recursive formulation, and let the learning rate satisfy $\eta\leq\frac{\sqrt{N}}{\sqrt{C}KL}$, we have:
    \begin{align*}
        \Delta^{t+1}
        &\leq \beta^2\left(72 + \frac{24C\eta^2K^2L^2}{N}\right)\Delta^t + \eta^2K^2B^2\left(195 + \frac{72C\eta^2K^2L^2}{N}\right)\mathbb{E}\Vert\nabla f(w^t)\Vert^2\\
        &\quad + \eta^2K\left(51 + \frac{18C\eta^2K^2L^2}{N}\right)\sigma_{l}^2 + \eta^2K^2\left(195 + \frac{72C\eta^2K^2L^2}{N}\right)G^2\\
        &\leq 96\beta^2\Delta^t + 267\eta^2K^2B^2\mathbb{E}\Vert\nabla f(w^t)\Vert^2 + 69\eta^2K\sigma_{l}^2 + 267\eta^2K^2G^2.
    \end{align*}
    Let $96\beta^2<1$ as the decayed coefficient where $\beta<\frac{\sqrt{6}}{24}$, similar as Lemma~\ref{appendix:bounded_divergence}, we have:
    \begin{align*}
        \Delta^{t}
        &\leq \frac{\Delta^t - \Delta^{t+1}}{1-96\beta^2} + \frac{267\eta^2K^2B^2}{1-96\beta^2}\mathbb{E}\Vert\nabla f(w^t)\Vert^2 + \frac{69\eta^2K}{1-96\beta^2}\sigma_{l}^2 + \frac{267\eta^2K^2}{1-96\beta^2}G^2.
    \end{align*}
    by taking the accumulation from $t=0$ to $T-1$,
    \begin{align*}
        \frac{1}{T}\sum_{t=0}^{T-1}\Delta^{t}
        &\leq \frac{\Delta^0-\Delta^T}{1-96\beta^2} + \frac{69\eta^2K}{1-96\beta^2}\sigma_{l}^2 + \frac{267\eta^2K^2}{1-96\beta^2}G^2 + \frac{267\eta^2K^2B^2}{1-96\beta^2}\frac{1}{T}\sum_{t=0}^{T-1}\Vert\nabla f(w^t)\Vert^2\\
        &\leq \frac{267\eta^2K^2B^2}{1-96\beta^2}\left(\frac{2\left(f(w^{0}) - f(w^{\star})\right)}{\lambda \eta KT} + \frac{\kappa_2\eta L}{\lambda N}\sigma_l^2 + \frac{3\kappa_1\eta KL}{\lambda N}G^2\right)\\
        &\quad + \frac{69\eta^2K}{1-96\beta^2}\sigma_{l}^2 + \frac{267\eta^2K^2}{1-96\beta^2}G^2 \\
        &\leq \frac{534\eta KB^2\left(f(w^{0}) - f(w^{\star})\right)}{(1-96\beta^2)\lambda T} + \frac{267\eta^3K^2B^2\kappa_2 L}{(1-96\beta^2)\lambda N}\sigma_l^2 + \frac{801\eta^3K^3B^2\kappa_1 L}{(1-96\beta^2)\lambda N}G^2\\
        &\quad + \frac{69\eta^2K}{1-96\beta^2}\sigma_{l}^2 + \frac{267\eta^2K^2}{1-96\beta^2}G^2.
    \end{align*}
    The same, the learning rate is selected as $\eta=\mathcal{O}(\sqrt{\frac{N}{KT}})$ and let $D=f(w^{0}) - f(w^{\star})$ as the initialization bias and let $ 96\beta^2<1$, thus we have:
    \begin{align*}
        \frac{1}{T}\sum_{t=0}^{T-1}\Delta^{t}=\mathcal{O}\left(\frac{N(\sigma_l^2+KG^2)}{T} + \frac{\sqrt{NK}B^2\left[D + L(\sigma_l^2 + KG^2)\right]}{T^{\frac{3}{2}}} \right).
    \end{align*}
    This completes this proof.
\end{proof}

\subsubsection{Proof of Theorem 5}
\begin{theorem}
\label{appendix:thm5}
    Under Assumption~\ref{appendix:assumption_smooth}$\sim$\ref{appendix:assumption_bounded_heterogeneity} and \ref{appendix:assumption_PL}, we can bound the divergence term as follows. Let the learning rate satisfy $\eta\leq\min\left\{\frac{N}{2CKL},\frac{1}{NKL},\frac{1}{\lambda\mu K}\right\}$ where $K\geq 2$, and after training $T$ rounds, let $0<\beta<\frac{\sqrt{6}}{24}$, the divergence term $\Delta^T$ generated by \textit{FedInit} satisfies:
    \begin{equation}
        \Delta^{T} = \mathcal{O}\left(\frac{D + G^2}{T^2} + \frac{N\sigma_l^2 + KG^2}{NKT^2}\right) + \mathcal{O}\left(\frac{1}{NKT^3}\right).
    \end{equation}
\end{theorem}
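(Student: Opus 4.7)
My starting point is the one-step divergence recursion derived inside the proof of Theorem~\ref{tx:thm4},
\begin{equation*}
    \Delta^{t+1}\leq 96\beta^2\Delta^t + 267\eta^2 K^2 B^2\,\mathbb{E}\Vert\nabla f(w^t)\Vert^2 + 69\eta^2 K\sigma_l^2 + 267\eta^2 K^2 G^2,
\end{equation*}
which is valid under Theorem~\ref{tx:thm5}'s hypotheses because $N<C$ forces $\eta\leq N/(2CKL)\leq \sqrt{N}/(\sqrt{C}KL)$, the only extra condition used in its derivation. Whereas the proof of Theorem~\ref{tx:thm4} averages this recursion over $t$ and substitutes Theorem~\ref{tx:thm1}, for a \emph{pointwise} bound on $\Delta^T$ under the P\L-condition I will unroll the recursion directly and invoke Theorem~\ref{tx:thm2} instead.

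The main steps are as follows. First, use $L$-smoothness together with $w^\star$ being a global minimizer to obtain $\mathbb{E}\Vert\nabla f(w^t)\Vert^2\leq 2L\,\mathbb{E}[f(w^t)-f(w^\star)]$. Second, the same recursion that underlies Theorem~\ref{tx:thm2} yields, for every $t\leq T$,
\begin{equation*}
    \mathbb{E}[f(w^t)-f(w^\star)]\leq e^{-\lambda\mu\eta K t}D + \frac{3\kappa_1\eta KL}{2N\lambda\mu}G^2 + \frac{\kappa_2\eta L}{2N\lambda\mu}\sigma_l^2.
\end{equation*}
Third, since $\beta<\sqrt{6}/24$ makes $\rho := 96\beta^2 < 1$ and the initialization yields $\Delta^0=0$, unroll the contraction to $\Delta^T\leq\sum_{t=0}^{T-1}\rho^{T-1-t}A_t$, where $A_t$ collects the three summands produced by the substitution. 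The constant-in-$t$ pieces of $A_t$ collapse into a geometric series bounded by $1/(1-96\beta^2)$, while the $D\,e^{-\lambda\mu\eta K t}$ piece is bounded in the same way by the crude estimate $e^{-\lambda\mu\eta K t}\leq 1$.

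Finally, I plug in the Theorem~\ref{tx:thm2} learning rate $\eta=\mathcal{O}(\log(\lambda\mu NKT)/(\lambda\mu KT))$ and read off the asymptotics. The direct heterogeneity and noise terms scale as $\eta^2K^2G^2=\widetilde{\mathcal{O}}(G^2/T^2)$ and $\eta^2K\sigma_l^2=\widetilde{\mathcal{O}}(\sigma_l^2/(KT^2))$; the $D$ contribution through the gradient prefactor becomes $\widetilde{\mathcal{O}}(\eta^2K^2LD)=\widetilde{\mathcal{O}}(D/T^2)$; and multiplying the $\eta^2K^2B^2L$ prefactor of the gradient term by the $1/N$-scaled noise pieces of Theorem~\ref{tx:thm2} (each carrying an extra factor of $\eta$) produces the $\widetilde{\mathcal{O}}(1/(NKT^3))$ remainder, matching the stated rate. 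The main obstacle I expect is bookkeeping rather than any single inequality: one must simultaneously isolate the $1/(1-96\beta^2)$ factor so it can be absorbed into the implicit constant, decide for each summand whether the direct $\eta^2K$ or $\eta^2K^2$ scaling dominates or the indirect $\eta^3$-type scaling coming through the gradient term does, and keep the logarithmic factors of $\eta$ confined inside $\widetilde{\mathcal{O}}$. The only genuinely delicate choice is using the crude $e^{-\lambda\mu\eta K t}\leq 1$ bound: tightening it would only improve the $D$ contribution to a term already dominated by the $\widetilde{\mathcal{O}}(1/(NKT^3))$ remainder, so the crude bound is sufficient and the three rate pieces $(D+G^2)/T^2$, $(N\sigma_l^2+KG^2)/(NKT^2)$, and $1/(NKT^3)$ fall out cleanly.
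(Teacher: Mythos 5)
Your proposal follows the same skeleton as the paper's proof: unroll the contraction $\Delta^{t+1}\leq 96\beta^2\Delta^t+267\eta^2K^2B^2\,\mathbb{E}\Vert\nabla f(w^t)\Vert^2+69\eta^2K\sigma_l^2+267\eta^2K^2G^2$, convert the gradient-norm term into a loss gap, invoke the Theorem~2 bound $\mathbb{E}[f(w^t)-f(w^\star)]\leq e^{-\lambda\mu\eta Kt}D+\frac{3\kappa_1\eta KL}{2N\lambda\mu}G^2+\frac{\kappa_2\eta L}{2N\lambda\mu}\sigma_l^2$, sum the geometric series, and substitute $\eta=\widetilde{\mathcal{O}}(1/(\lambda\mu KT))$; your observation that $\eta\leq N/(2CKL)$ already implies the extra condition $\eta\leq\sqrt{N}/(\sqrt{C}KL)$ needed for that recursion is correct and in fact tidies up a condition the paper's Theorem~5 statement omits. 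The one genuine divergence is the middle step: the paper reuses its per-round descent inequality to write $\mathbb{E}\Vert\nabla f(w^t)\Vert^2\leq\frac{2}{\lambda\eta K}\bigl(\mathbb{E}[f(w^t)-f(w^{t+1})]+\cdots\bigr)$ and then relaxes $f(w^{t+1})\geq f(w^\star)$, which leaves a $\frac{534\eta KB^2}{\lambda}$ prefactor and forces the auxiliary condition $96\beta^2\leq e^{-\lambda\mu\eta K}$ to control the cross sum $\sum_t(96\beta^2)^{T-1-t}e^{-\lambda\mu\eta Kt}$; you instead use the elementary self-bounding inequality $\Vert\nabla f(w)\Vert^2\leq 2L\,(f(w)-f(w^\star))$ for the $L$-smooth global objective, which preserves the $\eta^2K^2$ prefactor, makes the crude estimate $e^{-\lambda\mu\eta Kt}\leq 1$ sufficient, and avoids that unstated side condition entirely. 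Both routes land on the same three rate pieces, so your version is correct and arguably cleaner. One small inaccuracy in an aside: tightening $e^{-\lambda\mu\eta Kt}\leq 1$ would improve the $D$ contribution to roughly $\widetilde{\mathcal{O}}(D/(NKT^2))$, which is not dominated by the $\widetilde{\mathcal{O}}(1/(NKT^3))$ remainder as you claim, but since the crude bound already yields the $D/T^2$ term appearing in the statement, this does not affect the proof.
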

\begin{proof}
    According to the Theorem~\ref{appendix:thm2}, we have:
    \begin{align*}
        \Delta^{t+1}
        &\leq 96\beta^2\Delta^t + 267\eta^2K^2B^2\mathbb{E}\Vert\nabla f(w^t)\Vert^2 + 69\eta^2K\sigma_{l}^2 + 267\eta^2K^2G^2.
    \end{align*}
    Taking the recursive formulation from $t=0$ to $T-1$, we have:
    \begin{align*}
        \Delta^{T}
        &\leq (96\beta^2)^T\Delta^0 + \sum_{t=0}^{T-1}(96\beta^2)^t\left(267\eta^2K^2B^2\mathbb{E}\Vert\nabla f(w^t)\Vert^2 + 69\eta^2K\sigma_{l}^2 + 267\eta^2K^2G^2\right)\\
        &\leq \frac{69\eta^2K\sigma_{l}^2}{1-96\beta^2} + \frac{267\eta^2K^2G^2}{1-96\beta^2} + 267\eta^2K^2B^2\sum_{t=0}^{T-1}(96\beta^2)^{T-1-t} \mathbb{E}\Vert\nabla f(w^t)\Vert^2\\
        &\leq \frac{69\eta^2K\sigma_{l}^2}{1-96\beta^2} + \frac{267\eta^2K^2G^2}{1-96\beta^2} + \frac{267\eta^2K^2B^2}{1-96\beta^2}\left(\frac{\kappa_2\eta L}{\lambda N}\sigma_l^2 + \frac{3\kappa_1\eta KL}{\lambda N}G^2\right) \\
        &\quad + \frac{534\eta KB^2}{\lambda}\sum_{t=0}^{T-1}(96\beta^2)^{T-1-t}\mathbb{E}\left[f(w^{t}) - f(w^{{t+1}})\right] + \frac{10\beta^2L}{(1-72\beta^2)N}(\Delta^0 - \Delta^{T})\\
        &\leq \frac{69\eta^2K\sigma_{l}^2}{1-96\beta^2} + \frac{267\eta^2K^2G^2}{1-96\beta^2} + \frac{267B^2 L}{(1-96\beta^2)\lambda}\left(\frac{\kappa_2\eta^3K^2}{N}\sigma_l^2 + \frac{3\kappa_1\eta^3 K^3}{N}G^2\right) \\
        &\quad + \frac{534\eta KB^2}{\lambda}\sum_{t=0}^{T-1}(96\beta^2)^{T-1-t}\mathbb{E}\left[f(w^{t}) - f(w^\star)\right].
    \end{align*}
    According to the Theorem~\ref{appendix:thm3}, we have:
    \begin{align*}
        \mathbb{E}[f(w^{t})-f(w^\star)] \leq e^{-\lambda\mu\eta Kt}\mathbb{E}[f(w^{0})-f(w^\star)] + \frac{3\kappa_1\eta KL}{2N\lambda\mu}G^2 + \frac{\kappa_2\eta L}{2N\lambda\mu}\sigma_l^2.
    \end{align*}
    Let $96\beta^2\leq e^{-\lambda\mu\eta K}$, thus we have:
    \begin{align*}
        &\quad \ \frac{534\eta KB^2}{\lambda}\sum_{t=0}^{T-1}(96\beta^2)^{T-1-t}\mathbb{E}\left[f(w^{t}) - f(w^\star)\right]\\
        &\leq \frac{267B^2L}{(1-96\beta^2)\lambda^2\mu}\left(\frac{\kappa_2\eta^2K}{N}\sigma_l^2 + \frac{3\kappa_1\eta^2 K^2}{N}G^2 \right) + \frac{534\eta KB^2}{\lambda}\mathbb{E}[f(w^{0})-f(w^\star)]\sum_{t=0}^{T-1}(96\beta^2)^{T-1-t}e^{-\lambda\mu\eta Kt}\\
        &\leq \frac{267B^2L}{(1-96\beta^2)\lambda^2\mu}\left(\frac{\kappa_2\eta^2K}{N}\sigma_l^2 + \frac{3\kappa_1\eta^2 K^2}{N}G^2 \right)\\
        &\quad + \frac{534\eta KB^2}{\lambda}\mathbb{E}[f(w^{0})-f(w^\star)]e^{-\lambda\mu\eta KT}\sum_{t=0}^{T-1}e^{-2\lambda\mu\eta Kt}.
    \end{align*}
    Thus selecting the same learning rate $\eta=\mathcal{O}\left(\frac{\log(\lambda\mu NKT)}{\lambda\mu KT}\right)$ and let $D=f(w^0)-f(w^\star)$ as the initialization bias, we have:
    \begin{align*}
        \Delta^{T} = \mathcal{O}\left(\frac{D + G^2}{T^2} + \frac{N\sigma_l^2 + KG^2}{NKT^2} + \frac{1}{NKT^3}\right).
    \end{align*}
    This completes the proof.
\end{proof}

\newpage
\subsection{Proofs for the Generalization Error}
In this part, we prove the generalization error for our proposed method. We assume the objective function $f$ is $L$-smooth and $L_G$-Lipschitz as defined in \citep{hardt2016train,zhou2021towards}. We follow the uniform stability to upper bound the generalization error in the FL.

We suppose there are $C$ clients participating in the training process as a set $\mathcal{C}=\{i\}_{i=1}^C$. Each client has a local dataset $\mathcal{S}_i=\{z_j\}_{j=1}^S$ with total $S$ data sampled from a specific unknown distribution $\mathcal{D}_i$. Now we define a re-sampled dataset $\widetilde{\mathcal{S}}_i$ which only differs from the dataset $\mathcal{S}_i$ on the $j^\star$-th data. We replace the $\mathcal{S}_{i^\star}$ with $\widetilde{\mathcal{S}}_{i^\star}$ and keep other $C-1$ local dataset, which composes a new set $\widetilde{\mathcal{C}}$. From the perspective of total data, $\mathcal{C}$ only differs from the $\widetilde{\mathcal{C}}$ at $j^\star$-th data on the $i^\star$-th client. Then, based on these two sets, our method could generate two output models, $w^t$ and $\widetilde{w}^t$ respectively, after $t$ training rounds. We first introduce some notations used in the proof of the generalization error.

\begin{table}[ht]
  \caption{Some abbreviations of the used terms in the proof of bounded training error.}
  \label{appendix:tb_notation_for_generalization}
  \centering
  \begin{tabular}{ccc}
    \toprule
    Notation     &   Formulation   & Description \\
    \midrule
    $w$  &  -  & parameters trained with set $\mathcal{C}$ \\
    $\widetilde{w}$  &  -  & parameters trained with set $\widetilde{\mathcal{C}}$ \\
    $\Delta^t$   & $\frac{1}{C}\sum_{i\in\mathcal{C}}\mathbb{E}\Vert w_{i,K}^{t-1} - w^t\Vert^2$ & inconsistency/divergence term in round $t$\\
    \bottomrule
  \end{tabular}
\end{table}

Then we introduce some important lemmas in our proofs.

\subsubsection{Important Lemmas}
\begin{lemma}
\label{appendix:stability}
    {\rm(Lemma 3.11 in \citep{hardt2016train})} We follow the definition in \citep{hardt2016train,zhou2021towards} to upper bound the uniform stability term after each communication round in FL paradigm. Different from their vanilla calculations, FL considers the finite-sum function on heterogeneous clients. Let non-negative objective $f$ is $L$-smooth and $L_G$-Lipschitz. After training $T$ rounds on $\mathcal{C}$ and $\widetilde{\mathcal{C}}$, our method generates two models $w^{T+1}$ and $\widetilde{w}^{T+1}$ respectively. For each data $z$ and every $t_0\in\{1,2,3,\cdots,S\}$, we have:
    \begin{equation}
        \mathbb{E}\Vert f(w^{T+1};z) - f(\widetilde{w}^{T+1};z)\Vert \leq \frac{Ut_0}{S} + \frac{L_G}{C}\sum_{i\in\mathcal{C}}\mathbb{E}\left[\Vert w_{i,K}^T - \widetilde{w}_{i,K}^T\Vert \ \vert \ \xi \right].
    \end{equation}
\end{lemma}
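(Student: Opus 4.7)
\textbf{Proof plan for Lemma \ref{appendix:stability}.}
The plan is to adapt the classical decomposition argument of Hardt–Recht–Singer to the federated setting, where the global model at each round is an average of locally-trained models. Fix a data point $z$ and let $\xi$ be the event that the differing sample $z_{j^\star}$ on client $i^\star$ has not been drawn during the first $t_0$ stochastic gradient evaluations of the training process. The first step is to bound
$$\mathbb{E}|f(w^{T+1};z)-f(\widetilde{w}^{T+1};z)| \leq \mathbb{E}\bigl[|f(w^{T+1};z)-f(\widetilde{w}^{T+1};z)|\,\mathbb{I}_{\xi}\bigr] + \mathbb{E}\bigl[|f(w^{T+1};z)-f(\widetilde{w}^{T+1};z)|\,\mathbb{I}_{\xi^c}\bigr],$$
where the conditional expectation is understood in the sense of the lemma statement.

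For the second term, use the uniform boundedness of $f$ (or equivalently $L_G$-Lipschitzness combined with bounded iterates) to write $|f(w^{T+1};z)-f(\widetilde{w}^{T+1};z)|\leq U$ for some absolute constant $U$. It then suffices to control $\mathbb{P}(\xi^c)$. Because $z_{j^\star}$ lives only on client $i^\star$, and in each of the $t_0$ evaluations it is hit with probability at most $1/S$ (conditioning on $i^\star$ being selected only decreases this probability in the partial-participation case), a union bound gives $\mathbb{P}(\xi^c) \leq t_0/S$, producing the $Ut_0/S$ piece of the bound.

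For the first term, on $\xi$ the two trajectories have so far used identical data up to round $T$, so the discrepancy is entirely carried by the local offsets $w_{i,K}^T - \widetilde{w}_{i,K}^T$. Apply $L_G$-Lipschitz continuity of $f$ to get $|f(w^{T+1};z)-f(\widetilde{w}^{T+1};z)| \leq L_G\Vert w^{T+1}-\widetilde{w}^{T+1}\Vert$, and then use the aggregation rule $w^{T+1}=\frac{1}{N}\sum_{i\in\mathcal{N}}w_{i,K}^T$ together with the triangle inequality to obtain $\Vert w^{T+1}-\widetilde{w}^{T+1}\Vert \leq \frac{1}{N}\sum_{i\in\mathcal{N}}\Vert w_{i,K}^T - \widetilde{w}_{i,K}^T\Vert$. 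Since the active set $\mathcal{N}$ is sampled uniformly from $\mathcal{C}$, each client appears in $\mathcal{N}$ with probability $N/C$, so taking expectation over this sampling converts the average over $\mathcal{N}$ into an average over $\mathcal{C}$, yielding the $\frac{L_G}{C}\sum_{i\in\mathcal{C}}\mathbb{E}[\Vert w_{i,K}^T-\widetilde{w}_{i,K}^T\Vert\mid\xi]$ piece of the claimed bound.

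The main subtlety is ensuring that the event $\xi$ is well-defined jointly over both sources of randomness in FL—client selection and local stochastic gradients—and that the conditioning structure is preserved across the global aggregation step; this is why $\xi$ is defined with respect to accesses to $z_{j^\star}$ rather than to any particular client being active. The rest of the argument is essentially bookkeeping that mirrors the centralized SGD stability analysis, with the only genuinely federated ingredient being the passage from the global-model difference to a weighted sum of local-model differences via uniform client sampling.
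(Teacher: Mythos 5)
Your proposal follows essentially the same route as the paper's proof: the Hardt--Recht--Singer decomposition over the event $\xi$ that the perturbed sample has not yet been encountered by step $t_0$, bounding the bad-event contribution by $U\,P(\xi^c)\leq Ut_0/S$ via a union bound over the $S$ samples on client $i^\star$, and bounding the good-event contribution by $L_G$-Lipschitzness plus the triangle inequality applied to the aggregation of local models. Your extra remark about converting the average over the sampled set $\mathcal{N}$ into an average over $\mathcal{C}$ by taking expectation over uniform client selection is a slightly more careful handling of partial participation than the paper, which writes the aggregation over $\mathcal{C}$ directly, but the argument and the resulting bound are the same.
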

    \begin{proof}
        Let $\xi=1$ denote the event $\Vert w^{t_0} - \widetilde{w}^{t_0}\Vert=0$ and $U=\sup_{w,z}f(w;z)$, we have:
        \begin{align*}
            &\quad \ \mathbb{E}\Vert f(w^{T+1};z) - f(\widetilde{w}^{T+1};z)\Vert\\
            &= P(\{\xi\}) \ \mathbb{E}\left[\Vert f(w^{T+1};z) - f(\widetilde{w}^{T+1};z)\Vert \ | \ \xi\right] + P(\{\xi^c\}) \ \mathbb{E}\left[\Vert f(w^{T+1};z) - f(\widetilde{w}^{T+1};z)\Vert \ | \ \xi^c\right]\\
            &\leq \mathbb{E}\left[\Vert f(w^{T+1};z) - f(\widetilde{w}^{T+1};z)\Vert \ | \ \xi\right] + P(\{\xi^c\})\sup_{w,z}f(w;z)\\
            &\leq L_G\mathbb{E}\left[\Vert w^{T+1} - \widetilde{w}^{T+1}\Vert \ | \ \xi\right] + UP(\{\xi^c\})\\
            &= L_G\mathbb{E}\left[\Vert \frac{1}{C}\sum_{i\in\mathcal{C}} (w_{i,K}^T - \widetilde{w}_{i,K}^T)\Vert \ | \ \xi\right] + UP(\{\xi^c\})\\
            &\leq \frac{L_G}{C}\sum_{i\in\mathcal{C}}\mathbb{E}\left[\Vert w_{i,K}^T - \widetilde{w}_{i,K}^T\Vert \ \vert \ \xi \right] + UP(\{\xi^c\}).
        \end{align*}
        Before the $j^\star$-th data on $i^\star$-th client is sampled, the iterative states are identical on both $\mathcal{C}$ and $\widetilde{\mathcal{C}}$. Let $\widetilde{j}$ is the index of the first different sampling, if $\widetilde{j} > t_0$, then $\xi=1$ hold for $t_0$. Therefore, we have:
        \begin{align*}
            P(\{\xi^c\}) = P(\{\xi=0\}) \leq P(\widetilde{j} \leq t_0)\leq \frac{t_0}{S},
        \end{align*}
        where $\widetilde{j}$ is uniformly selected. This completes the proof.
    \end{proof}

\begin{lemma}
\label{appendix:recursive_with_same_data}
    {\rm (Lemma 1.1 in \citep{zhou2021towards})} Different from their calculations, we prove the similar inequalities on $f$ in the stochastic optimization. Let non-negative objective $f$ is $L$-smooth w.r.t $w$. The local updates satisfy $w_{i,k+1}^t = w_{i,k}^t - \eta g_{i,k}^t$ on $\mathcal{C}$ and $\widetilde{w}_{i,k+1}^t = \widetilde{w}_{i,k}^t - \eta \widetilde{g}_{i,k}^t$ on $\widetilde{\mathcal{C}}$. If at $k$-th iteration on each round, we sample the {\textbf{\textit{same}}} data in $\mathcal{C}$ and $\widetilde{\mathcal{C}}$, then we have:
    \begin{equation}
        \mathbb{E}\Vert w_{i,k+1}^t - \widetilde{w}_{i,k+1}^t\Vert \leq (1+\eta L)\mathbb{E}\Vert w_{i,k}^t - \widetilde{w}_{i,k}^t\Vert + 2\eta\sigma_l.
    \end{equation}
\end{lemma}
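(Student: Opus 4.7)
The plan is to start from the two update rules and write their difference as
\[
w_{i,k+1}^t - \widetilde{w}_{i,k+1}^t \;=\; (w_{i,k}^t - \widetilde{w}_{i,k}^t) - \eta\bigl(g_{i,k}^t - \widetilde{g}_{i,k}^t\bigr),
\]
then apply the triangle inequality to peel off the displacement term from the stochastic-gradient difference. The key observation is that the same datum $z$ is drawn on both trajectories at step $k$, so we may split $g_{i,k}^t - \widetilde{g}_{i,k}^t$ into a deterministic part $\nabla f_i(w_{i,k}^t;z) - \nabla f_i(\widetilde{w}_{i,k}^t;z)$ and a stochastic remainder, and bound each separately.

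First I would bound the deterministic part by Assumption~\ref{appendix:assumption_smooth} ($L$-smoothness of $f_i$, which transfers to $f_i(\cdot;z)$ in the standard way used throughout the paper), giving
\[
\bigl\|\nabla f_i(w_{i,k}^t;z) - \nabla f_i(\widetilde{w}_{i,k}^t;z)\bigr\| \;\le\; L\,\bigl\|w_{i,k}^t - \widetilde{w}_{i,k}^t\bigr\|.
\]
Next, for the stochastic remainder I would add and subtract the two true gradients, apply the triangle inequality once more, and then use Jensen's inequality together with Assumption~\ref{appendix:assumption_bounded_stochastics} in the form $\mathbb{E}\|g_{i,k}^t - \nabla f_i(w_{i,k}^t)\| \le \sqrt{\mathbb{E}\|g_{i,k}^t - \nabla f_i(w_{i,k}^t)\|^{2}} \le \sigma_l$, and similarly for the tilde trajectory, to obtain an overall additive $2\eta\sigma_l$ term.

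Taking expectations, combining the two bounds, and regrouping then yields
\[
\mathbb{E}\bigl\|w_{i,k+1}^t - \widetilde{w}_{i,k+1}^t\bigr\| \;\le\; (1+\eta L)\,\mathbb{E}\bigl\|w_{i,k}^t - \widetilde{w}_{i,k}^t\bigr\| + 2\eta\sigma_l,
\]
which is the desired recursion. The main subtlety, and the place I would be most careful, is the handling of the stochastic remainder: one must avoid replacing $\mathbb{E}\|\cdot\|$ by $\sqrt{\mathbb{E}\|\cdot\|^{2}}$ in a way that introduces cross-terms between the two trajectories, so I would bound the noise contributions on the $\mathcal{C}$ and $\widetilde{\mathcal{C}}$ sides independently (each at most $\sigma_l$ in expectation by Jensen), which is what produces the clean factor of $2$. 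Everything else is routine once the same-sample coupling is exploited.
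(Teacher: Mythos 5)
Your proposal is correct and follows essentially the same route as the paper's proof: triangle inequality to separate the iterate difference from the gradient difference, then the decomposition $g_{i,k}^t - \widetilde{g}_{i,k}^t = (g_{i,k}^t - \nabla f_i(w_{i,k}^t)) - (\widetilde{g}_{i,k}^t - \nabla f_i(\widetilde{w}_{i,k}^t)) + (\nabla f_i(w_{i,k}^t) - \nabla f_i(\widetilde{w}_{i,k}^t))$, bounding the last term by $L$-smoothness and each noise term by $\sigma_l$ via Jensen. The only cosmetic difference is that the paper takes the deterministic part with respect to the full local gradients $\nabla f_i$ rather than the per-sample gradients $\nabla f_i(\cdot;z)$, which sidesteps any need to transfer smoothness to the per-sample loss.
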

    \begin{proof}
        In each round $t$, by the triangle inequality and omitting the same data $z$, we have:
        \begin{align*}
            &\quad \ \mathbb{E}\Vert w_{i,k+1}^t - \widetilde{w}_{i,k+1}^t\Vert\\
            &= \mathbb{E}\Vert w_{i,k}^t - \eta g_{i,k}^t - \widetilde{w}_{i,k}^t - \eta \widetilde{g}_{i,k}^t\Vert\\
            &\leq \mathbb{E}\Vert w_{i,k}^t - \widetilde{w}_{i,k}^t\Vert +  \eta\mathbb{E}\Vert g_{i,k}^t - \widetilde{g}_{i,k}^t\Vert\\
            &= \mathbb{E}\Vert w_{i,k}^t - \widetilde{w}_{i,k}^t\Vert +  \eta\mathbb{E}\Vert \left(g_{i,k}^t - \nabla f_i(w_{i,k}^t)\right) - \left(\widetilde{g}_{i,k}^t - \nabla f_i(\widetilde{w}_{i,k}^t)\right) + \left(\nabla f_i(w_{i,k}^t)- \nabla f_i(\widetilde{w}_{i,k}^t)\right)\Vert\\
            &\leq \mathbb{E}\Vert w_{i,k}^t - \widetilde{w}_{i,k}^t\Vert +  \eta\mathbb{E}\Vert g_{i,k}^t - \nabla f_i(w_{i,k}^t)\Vert + \eta\mathbb{E}\Vert\widetilde{g}_{i,k}^t - \nabla f_i(\widetilde{w}_{i,k}^t)\Vert + \eta\mathbb{E}\Vert\nabla f_i(w_{i,k}^t)- \nabla f_i(\widetilde{w}_{i,k}^t)\Vert\\
            &\leq (1+\eta L)\mathbb{E}\Vert w_{i,k}^t - \widetilde{w}_{i,k}^t\Vert + 2\eta\sigma_l.
        \end{align*}
        The final inequality adopts assumptions of $\mathbb{E}\Vert g_{i,k}^t - \nabla f_i(w_{i,k}^t)\Vert\leq \sqrt{\mathbb{E}\Vert g_{i,k}^t - \nabla f_i(w_{i,k}^t)\Vert^2} \leq \sigma_l$. This completes the proof.
    \end{proof}

\begin{lemma}
\label{appendix:recursive_with_different_data}
    {\rm (Lemma 1.2 in \citep{zhou2021towards})} Different from their calculations, we prove the similar inequalities on $f$ in the stochastic optimization. Let non-negative objective $f$ is $L$-smooth and $L_G$-Lipschitz w.r.t $w$. The local updates satisfy $w_{i,k+1}^t = w_{i,k}^t - \eta g_{i,k}^t$ on $\mathcal{C}$ and $\widetilde{w}_{i,k+1}^t = \widetilde{w}_{i,k}^t - \eta \widetilde{g}_{i,k}^t$ on $\widetilde{\mathcal{C}}$. If at $k$-th iteration on each round, we sample the {\textbf{\textit{different}}} data in $\mathcal{C}$ and $\widetilde{\mathcal{C}}$, then we have:
    \begin{equation}
        \mathbb{E}\Vert w_{i,k+1}^t - \widetilde{w}_{i,k+1}^t\Vert\leq\mathbb{E}\Vert w_{i,k}^t - \widetilde{w}_{i,k}^t\Vert + 2\eta(\sigma_l+L_G).
    \end{equation}
\end{lemma}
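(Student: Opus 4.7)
The plan is to mimic the argument of Lemma~\ref{appendix:recursive_with_same_data}, but to replace the smoothness step, which crucially relied on the two stochastic gradients being evaluated on the \emph{same} data, by a Lipschitz bound on each gradient separately. Intuitively, when the sampled data differ at iteration $k$, the two gradients $\nabla f_i(w_{i,k}^t;z)$ and $\nabla f_i(\widetilde w_{i,k}^t;\widetilde z)$ are evaluated at different arguments of different functions, so $\Vert \nabla f_i(w_{i,k}^t;z) - \nabla f_i(\widetilde w_{i,k}^t;\widetilde z)\Vert$ cannot be absorbed into $L\Vert w_{i,k}^t - \widetilde w_{i,k}^t\Vert$; instead, each term is $L_G$-Lipschitz bounded, contributing the extra $2\eta L_G$ on the right-hand side.

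Concretely, I would first apply the triangle inequality to the update rule, writing
\begin{equation*}
    \mathbb{E}\Vert w_{i,k+1}^t - \widetilde w_{i,k+1}^t\Vert
    \leq \mathbb{E}\Vert w_{i,k}^t - \widetilde w_{i,k}^t\Vert + \eta\,\mathbb{E}\Vert g_{i,k}^t - \widetilde g_{i,k}^t\Vert.
\end{equation*}
Then I would decompose the stochastic gradient difference as
\begin{equation*}
    g_{i,k}^t - \widetilde g_{i,k}^t = \bigl(g_{i,k}^t - \nabla f_i(w_{i,k}^t;z)\bigr) - \bigl(\widetilde g_{i,k}^t - \nabla f_i(\widetilde w_{i,k}^t;\widetilde z)\bigr) + \nabla f_i(w_{i,k}^t;z) - \nabla f_i(\widetilde w_{i,k}^t;\widetilde z),
\end{equation*}
and apply the triangle inequality again to isolate four terms.

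For the two stochastic noise terms, Assumption~\ref{appendix:assumption_bounded_stochastics} combined with Jensen's inequality yields $\mathbb{E}\Vert g_{i,k}^t - \nabla f_i(w_{i,k}^t;z)\Vert\leq\sigma_l$, and the same bound for the tilde copy, contributing $2\eta\sigma_l$ in total. For the remaining ``true gradient'' difference, since $z\neq\widetilde z$ the smoothness inequality is not available; instead, the $L_G$-Lipschitz property (Assumption~\ref{appendix:assumption_Lipschitz}) applied to each $f_i(\cdot;z)$ and $f_i(\cdot;\widetilde z)$ implies $\Vert \nabla f_i(w_{i,k}^t;z)\Vert\leq L_G$ and $\Vert\nabla f_i(\widetilde w_{i,k}^t;\widetilde z)\Vert\leq L_G$, and one more triangle inequality produces $2 L_G$. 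Collecting the four pieces gives the stated bound $\mathbb{E}\Vert w_{i,k}^t-\widetilde w_{i,k}^t\Vert + 2\eta(\sigma_l+L_G)$.

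The only genuinely delicate point, compared with Lemma~\ref{appendix:recursive_with_same_data}, is recognising that one must abandon the contractive smoothness step and pay the price of a constant term $2\eta L_G$ per ``bad'' iteration; everything else is bookkeeping via triangle inequalities and assumption lookups. This extra $L_G$ contribution is precisely what will later drive the second stability term in Theorem~\ref{tx:thm3}, so it is important not to attempt to hide it inside $(1+\eta L)$ as in the same-data case.
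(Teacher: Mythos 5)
Your proposal is correct and follows essentially the same route as the paper's proof: triangle inequality on the update, the same decomposition of $g_{i,k}^t-\widetilde g_{i,k}^t$ into two noise terms plus the true-gradient difference, Jensen to get $2\eta\sigma_l$, and the $L_G$-Lipschitz bound on each gradient norm to get $2\eta L_G$. The only cosmetic difference is that you spell out explicitly that Lipschitzness bounds each gradient by $L_G$ separately, where the paper compresses this into one line.
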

    \begin{proof}
        In each round $t$, let  by the triangle inequality and denoting the different data as $z$ and $\widetilde{z}$, we have:
        \begin{align*}
            &\quad \ \mathbb{E}\Vert w_{i,k+1}^t - \widetilde{w}_{i,k+1}^t\Vert\\
            &= \mathbb{E}\Vert w_{i,k}^t - \eta g_{i,k}^t - \widetilde{w}_{i,k}^t - \eta \widetilde{g}_{i,k}^t\Vert\\
            &\leq \mathbb{E}\Vert w_{i,k}^t - \widetilde{w}_{i,k}^t\Vert +  \eta\mathbb{E}\Vert g_{i,k}^t - \widetilde{g}_{i,k}^t\Vert\\
            &= \mathbb{E}\Vert w_{i,k}^t - \widetilde{w}_{i,k}^t\Vert +  \eta\mathbb{E}\Vert g_{i,k}^t - \nabla f_i(w_{i,k}^t;z) - \widetilde{g}_{i,k}^t - \nabla f_i(\widetilde{w}_{i,k}^t;\widetilde{z}) + \nabla f_i(w_{i,k}^t;z)- \nabla f_i(\widetilde{w}_{i,k}^t;\widetilde{z})\Vert\\
            &\leq \mathbb{E}\Vert w_{i,k}^t - \widetilde{w}_{i,k}^t\Vert +  2\eta\sigma_l + \eta\mathbb{E}\Vert\nabla f_i(w_{i,k}^t;z)- \nabla f_i(\widetilde{w}_{i,k}^t;\widetilde{z})\Vert\\
            &\leq \mathbb{E}\Vert w_{i,k}^t - \widetilde{w}_{i,k}^t\Vert + 2\eta(\sigma_l+L_G).
        \end{align*}
        The final inequality adopts the $L_G$-Lipschitz. This completes the proof.
    \end{proof}

\subsubsection{Bounded Uniform Stability}
According to Lemma~\ref{appendix:stability}, we firstly bound the recursive stability on $k$ in one round. If the sampled data is the same, we can adopt Lemma~\ref{appendix:recursive_with_same_data}. Otherwise, we adopt Lemma~\ref{appendix:recursive_with_different_data}. Thus we can bound the second term in Lemma~\ref{appendix:stability} as:
\begin{align*}
    &\quad \ \mathbb{E}\left[\Vert w_{i,k+1}^t - \widetilde{w}_{i,k+1}^t\Vert \ \vert \ \xi\right]\\
    &= P(z) \ \mathbb{E}\left[\Vert w_{i,k+1}^t - \widetilde{w}_{i,k+1}^t\Vert \ \vert \ \xi,z\right] + P(\widetilde{z}) \ \mathbb{E}\left[\Vert w_{i,k+1}^t - \widetilde{w}_{i,k+1}^t\Vert \ \vert \ \xi,\widetilde{z}\right]\\
    &\leq \left(1-\frac{1}{S}\right)(1+\eta L)\mathbb{E}\left[\Vert w_{i,k}^t - \widetilde{w}_{i,k}^t\Vert \ \vert \ \xi\right] + 2\eta\sigma_l + \frac{1}{S}\mathbb{E}\left[\Vert w_{i,k}^t - \widetilde{w}_{i,k}^t\Vert \ \vert \ \xi\right] + \frac{2\eta L_G}{S}\\
    &=\left(1+\left(1-\frac{1}{S}\right)\eta L\right)\mathbb{E}\left[\Vert w_{i,k}^t - \widetilde{w}_{i,k}^t\Vert \ \vert \ \xi\right] + \frac{2\eta L_G}{S} + 2\eta\sigma_l\\
    &\leq e^{\left(1-\frac{1}{S}\right)\eta L}\mathbb{E}\left[\Vert w_{i,k}^t - \widetilde{w}_{i,k}^t\Vert \ \vert \ \xi\right] + \frac{2\eta L_G}{S} + 2\eta\sigma_l.
\end{align*}
At the beginning of each round $t$, FL paradigm will aggregate the last state of each client $w_{i,K}^{t-1}$, according to our method, $w_{i,0}^t = w^t + \beta(w^t-w_{i,K}^{t-1})$, thus the relationship between them is:
\begin{align*}
    \frac{1}{C}\sum_{i\in\mathcal{C}}\mathbb{E}\Vert w_{i,0}^t - w_{i,K}^{t-1}\Vert
    &= (1+\beta)\frac{1}{C}\sum_{i\in\mathcal{C}}\mathbb{E}\Vert w^t - w_{i,K}^{t-1}\Vert\leq (1+\beta)\frac{1}{C}\sum_{i\in\mathcal{C}}\sqrt{\mathbb{E}\Vert w^t - w_{i,K}^{t-1}\Vert^2}\\
    &\leq (1+\beta)\sqrt{\frac{1}{C}\sum_{i\in\mathcal{C}}\mathbb{E}\Vert w^t - w_{i,K}^{t-1}\Vert^2}\leq(1+\beta)\sqrt{\Delta^t}.
\end{align*}
It could be seen that if we consider the $w_{i,0}^t - w_{i,K}^{t-1}$ as a general update step, it is independent to the dataset. Hence, we assume a virtual update between $w_{i,K}^{t-1}$ and $w_{i,0}^t$ which could be bounded by the divergence term $\Delta^t$. Then we bound the recursive term on $t$.

We know that before $t^\star K + k^\star = t_0$, no different data is sampled, which is, $w_{i,k+1}^t = \widetilde{w}_{i,k+1}^t$ for $\forall \ tK+k \leq t^\star K + k^\star$. After $t_0 + 1$, they become different. Thus, when $t^\star K + k^\star > t_{0}$, let learning rate $\eta_t$ to be a constant within each round $t$ and $\eta = \frac{c}{t}$, then we have:
\begin{align*}
    \frac{1}{C}\sum_{i\in\mathcal{C}}\mathbb{E}\left[\Vert w_{i,K}^T - \widetilde{w}_{i,K}^T\Vert \ \vert \ \xi \right]
    &\leq \left(\frac{2L_G}{S}+2\sigma_l\right)\sum_{t=t^\star K+k^\star}^{TK}\eta_{t}\exp{\left(\left(1-\frac{1}{S}\right)L\sum_{\tau=t}^{TK}\eta_\tau\right)}\\
    &\quad + (1+\beta)^{\frac{1}{\beta cL}}\sum_{t=t^\star+1}^T\exp{\left(\left(1-\frac{1}{S}\right)L\sum_{\tau=t}^{TK}\eta_\tau\right)}\sqrt{\Delta^t}.
\end{align*}
We adopt the same learning rate $\eta = \frac{c}{t}$ where $c=\frac{\mu_0}{K}$ is a positive constant, then
\begin{align*}
    &\quad \ \frac{1}{C}\sum_{i\in\mathcal{C}}\mathbb{E}\left[\Vert w_{i,K}^T - \widetilde{w}_{i,K}^T\Vert \ \vert \ \xi \right]\\
    &\leq 2 c\left(\frac{L_G}{S}+\sigma_l\right)\sum_{t=t^\star K+k^\star}^{TK}\frac{1}{t}\exp{\left(\left(1-\frac{1}{S}\right) c L\sum_{\tau=t}^{TK}\frac{1}{\tau}\right)}\\
    &\quad + (1+\beta)^{\frac{1}{\beta cL}}\sum_{t=t^\star +1}^T\exp{\left(\left(1-\frac{1}{S}\right) c L\sum_{\tau=t}^{TK}\frac{1}{\tau}\right)}\sqrt{\Delta^t}\\
    &\leq 2 c\left(\frac{L_G}{S}+\sigma_l\right)\sum_{t=t^\star K+k^\star}^{TK}\frac{1}{t}\exp{\left(\left(1-\frac{1}{S}\right) c L\log\left(\frac{TK}{t}\right)\right)}\\
    &\quad + (1+\beta)^{\frac{1}{\beta cL}}\sum_{t=t^\star +1}^T\exp{\left(\left(1-\frac{1}{S}\right) c L\log\left(\frac{TK}{t}\right)\right)}\sqrt{\Delta^t}\\
    &\leq 2 c\left(\frac{L_G}{S}+\sigma_l\right)(TK)^{\left(1-\frac{1}{S}\right) c L}\sum_{t=t^\star K+k^\star}^{TK}\left(\frac{1}{t}\right)^{1+\left(1-\frac{1}{S}\right) c L} + (1+\beta)^{\frac{1}{\beta cL}}\sum_{t=t^\star +1}^T\left(\frac{TK}{t}\right)^{\left(1-\frac{1}{S}\right) c L}\sqrt{\Delta^t}\\
    &\leq \frac{2\left(L_G+S\sigma_l\right)}{\left(S-1\right) L}\left(\frac{TK}{t^\star K + k^\star}\right)^{ c L} + (1+\beta)^\frac{1}{\beta cL}\sum_{t=t^\star +1}^T\left(\frac{TK}{t^\star}\right)^{ c L}\sqrt{\Delta^t}.
\end{align*}

\subsubsection{Proof of Theorem 3}
\begin{theorem}
\label{appendix:thm3}
    Under the Assumptions~\ref{appendix:assumption_smooth}, \ref{appendix:assumption_bounded_stochastics}, \ref{appendix:assumption_Lipschitz}, and \ref{appendix:assumption_PL}, let all conditions above satisfied, we can bound the uniform stability of our proposed \textit{FedInit} as:
    \begin{equation}
    \begin{split}
        &\quad \ \mathbb{E}\Vert f(w^{T+1};z) - f(\widetilde{w}^{T+1};z)\Vert \\
        &\leq \frac{U^\frac{ c L}{1+ c L}}{S-1}\left[\frac{2(L_G^2 + SL_G\sigma_l)TK^{cL}}{L}\right]^\frac{1}{1+ c L} + (1+\beta)^{\frac{1}{\beta cL}}\left[\frac{ULTK}{2(L_G^2 + SL_G\sigma_l)}\right]^\frac{cL}{1+ c L}\sum_{t=1}^T\sqrt{\Delta^t}.
    \end{split}
    \end{equation}
\end{theorem}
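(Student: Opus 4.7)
The plan is to follow the classical Hardt--Recht--Singer stability argument adapted to the federated setting with relaxed initialization, in four steps: (i) invoke the stability lemma to split the generalization gap into a ``different-sample-not-yet-drawn'' probability term and a parameter-distance term; (ii) set up a per-iteration recursion on $\mathbb{E}\Vert w_{i,k}^t-\widetilde w_{i,k}^t\Vert$ using Lemma~\ref{appendix:recursive_with_same_data} and Lemma~\ref{appendix:recursive_with_different_data}; (iii) glue the intra-round recursions together across rounds while accounting for the extra discrepancy that the relaxed initialization $w_{i,0}^t=w^t+\beta(w^t-w_{i,K}^{t-1})$ injects at the beginning of each round; (iv) plug the resulting distance bound back into step (i) and optimize over the free index $t_0$ to balance the two competing terms.

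In more detail, by Lemma~\ref{appendix:stability}, for every index $t_0\in\{1,\dots,S\}$
\[
\mathbb{E}\Vert f(w^{T+1};z)-f(\widetilde w^{T+1};z)\Vert \leq \frac{Ut_0}{S} + \frac{L_G}{C}\sum_{i\in\mathcal C}\mathbb{E}\bigl[\Vert w_{i,K}^T-\widetilde w_{i,K}^T\Vert \mid \xi\bigr].
\]
For the second term I would iterate the per-step recursion already derived in the ``Bounded Uniform Stability'' subsection: within a round, sampling the same data gives a factor $(1+\eta L)$ plus an additive $2\eta\sigma_l$, whereas sampling the differing index (probability $1/S$) adds a further $2\eta L_G/S$ term. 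Using $1+x\leq e^x$, collapsing the $K$ iterations of round $t$, and passing the round boundary gives an additive contribution $(1+\beta)\sqrt{\Delta^t}$ coming from the relaxed initialization step, since $w_{i,0}^t-w_{i,K}^{t-1}=(1+\beta)(w^t-w_{i,K}^{t-1})$ and Jensen's inequality converts the average over clients into $\sqrt{\Delta^t}$.

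Substituting the schedule $\eta=c/t$ with $c=\mu_0/K$ lets the product $\prod_{\tau\geq t}(1+\eta_\tau L)$ be bounded by $(TK/t)^{cL}$ via $\sum_{\tau=t}^{TK}1/\tau\leq\log(TK/t)$. This converts the raw recursion into
\[
\frac{1}{C}\sum_{i\in\mathcal C}\mathbb{E}\bigl[\Vert w_{i,K}^T-\widetilde w_{i,K}^T\Vert\mid\xi\bigr] \leq \frac{2(L_G+S\sigma_l)}{(S-1)L}\Bigl(\tfrac{TK}{t_0}\Bigr)^{cL} + (1+\beta)^{1/(\beta cL)}\sum_{t=1}^T\Bigl(\tfrac{TK}{t_0}\Bigr)^{cL}\sqrt{\Delta^t},
\]
which is exactly the bound already assembled in the excerpt just before the theorem. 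The final step is a one-variable optimization: multiplying by $L_G$ and combining with $Ut_0/S$, the right-hand side has the form $At_0 + B\,t_0^{-cL}$ with $A=U/S$ and $B$ proportional to $L_G(L_G+S\sigma_l)(TK)^{cL}/L$ (plus the $\sqrt{\Delta^t}$-term of the same $t_0^{-cL}$ shape). The minimizer $t_0^{1+cL}\propto cLB/A$ yields a value proportional to $A^{cL/(1+cL)}B^{1/(1+cL)}$, which matches the stated exponents $cL/(1+cL)$ on $U$ and $TK$ in both summands of the theorem.

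The main obstacle is the third step: the relaxed initialization is not a gradient update, so the standard same-sample/different-sample dichotomy of Lemma~\ref{appendix:recursive_with_same_data} and Lemma~\ref{appendix:recursive_with_different_data} does not directly apply to the transition $w_{i,K}^{t-1}\mapsto w_{i,0}^t$. The key idea is to treat this transition as a ``virtual'' dataset-independent update and then control its size by $(1+\beta)\sqrt{\Delta^t}$ via Jensen on the client average, which cleanly decouples the data-dependent stability mechanism from the consistency mechanism and is precisely why $\sqrt{\Delta^t}$ appears linearly in the final bound rather than being folded into the exponential factor. The remaining bookkeeping is routine: tracking the $(1+\beta)^{1/(\beta cL)}$ amplification from applying this one extra factor at each of the $T$ rounds, and verifying that the two terms in the optimization have the common $t_0$-exponent needed for the minimizer to produce the exponents displayed in the theorem.
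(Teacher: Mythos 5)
Your proposal is correct and follows essentially the same route as the paper: the same decomposition via Lemma~\ref{appendix:stability}, the same per-step recursions from Lemmas~\ref{appendix:recursive_with_same_data} and~\ref{appendix:recursive_with_different_data}, the same treatment of the relaxed initialization as a data-independent ``virtual'' update bounded by $(1+\beta)\sqrt{\Delta^t}$ via Jensen, and the same final choice of $t_0$ balancing $Ut_0/S$ against the $(TK/t_0)^{cL}$ term. The paper simply plugs in the explicit minimizer $t_0=\bigl[\tfrac{2(L_G^2+SL_G\sigma_l)}{UL}(TK)^{cL}\bigr]^{1/(1+cL)}$ rather than phrasing it as a one-variable optimization, which is a cosmetic difference only.
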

\begin{proof}
    According to Lemma~\ref{appendix:stability}, we have:
    \begin{align*}
        \mathbb{E}\Vert f(w^{T+1};z) - f(\widetilde{w}^{T+1};z)\Vert \leq \frac{Ut_0}{S} + \frac{L_G}{C}\sum_{i\in\mathcal{C}}\mathbb{E}\left[\Vert w_{i,K}^T - \widetilde{w}_{i,K}^T\Vert \ \vert \ \xi \right].
    \end{align*}
    The second term is bounded by uniform stability term as:
    \begin{align*}
        \frac{1}{C}\sum_{i\in\mathcal{C}}\mathbb{E}\left[\Vert w_{i,K}^T - \widetilde{w}_{i,K}^T\Vert \ \vert \ \xi \right] 
        &\leq \frac{2\left(L_G+S\sigma_l\right)}{\left(S-1\right) L}\left(\frac{TK}{t^\star K + k^\star}\right)^{ c L} + (1+\beta)^\frac{1}{\beta cL}\sum_{t=t^\star +1}^T\left(\frac{TK}{t^\star}\right)^{ c L}\sqrt{\Delta^t}\\
        &\leq \frac{2\left(L_G+S\sigma_l\right)}{\left(S-1\right) L}\left(\frac{TK}{t^\star K + k^\star}\right)^{ c L} + (1+\beta)^\frac{1}{\beta cL}\sum_{t=1}^T\left(\frac{TK}{t^\star}\right)^{ c L}\sqrt{\Delta^t}.
    \end{align*}
    Let the $t_0=t^\star K + K^\star = \left[\frac{2(L_G^2 + SL_G\sigma_l)}{UL}(TK)^{ c L}\right]^\frac{1}{1+ c L}$, then $t^\star > \left[\frac{2(L_G^2 + SL_G\sigma_l)}{UL}\right]^\frac{1}{1+ c L}\frac{T^{\frac{cL}{1+cL}}}{K^{\frac{1}{1+cL}}}$ we have:
    \begin{align*}
        &\quad \ \mathbb{E}\Vert f(w^{T+1};z) - f(\widetilde{w}^{T+1};z)\Vert \\
        &\leq \frac{U^\frac{ c L}{1+ c L}}{S-1}\left[\frac{2(L_G^2 + SL_G\sigma_l)}{L}\right]^\frac{1}{1+ c L}(TK)^\frac{ c L}{1+ c L}\\
        &\quad + (1+\beta)^{\frac{1}{\beta cL}}\left[\frac{UL}{2(L_G^2 + SL_G\sigma_l)}\right]^\frac{cL}{1+ c L}\left(TK\right)^{ \frac{cL}{1+cL}}\sum_{t=1}^T\frac{\sqrt{\Delta^t}}{T}\\
        &= \frac{U^\frac{ c L}{1+ c L}}{S-1}\left[\frac{2(L_G^2 + SL_G\sigma_l)(TK)^{cL}}{L}\right]^\frac{1}{1+ c L} + (1+\beta)^{\frac{1}{\beta cL}}\left[\frac{ULTK}{2(L_G^2 + SL_G\sigma_l)}\right]^\frac{cL}{1+ c L}\sum_{t=1}^T\frac{\sqrt{\Delta^t}}{T}.
    \end{align*}
    This completes this proof.
\end{proof}

\section{Experiments}
\label{appendix:exp}

In this section, we mainly provide the detailed experimental setups in our paper, including the introduction of the benchmarks, dataset, hyperparameters selections, and adding some more experiments.

\subsection{Setups}
\paragraph{Dataset.} We follow the previous works and select the CIFAR-10$/$100~\citep{cifar100} dataset in our experiments. In the CIFAR-10 dataset, there is a total of 50,000 training images and 10,000 test images which contain 10 categories. Each data sample is a color image with a size of 32$\times$32. In the CIFAR-100 dataset, there is also a total of 50,000 training images and 10,000 test images. It contains 100 categories of the same size as CIFAR-10. For their limited resolutions, we only use general data augmentations. On each local heterogeneous dataset, we use general normalization on the images with specific mean and variance. For the training process, we randomly crop a 32$\times$32 patch from the vanilla images with a zero padding of $4$. For the test process, we use the raw images.

\paragraph{Heterogeneity.} We follow \citet{dirichlet} to introduce the label imbalance as the heterogeneous dataset. According to the Dirichlet distribution, we first generate a specific vector with respect to a constant $D_r$ to control its variance level. Usually, heterogeneity becomes stronger when $D_r$ decreases. Then according to the vector, we sample the images from the training dataset. Here we enable the sampling with replacement to generate the local dataset, which means the local clients may have the same data sample if they are assigned to the same category. This is more related to the real scenario. At the same time, it also will lose some data samples, we assume this case is due to the offline devices. This is a common case because the FL has an unreliable network connection across the devices.

\paragraph{Benchmarks.} In this paper, we use $\textit{FedAvg}$~\citep{mcmahan2017communication}, $\textit{FedAdam}$~\citep{reddi2020adaptive}, $\textit{FedSAM}$~\citep{qu2022generalized}, $\textit{SCAFFOLD}$~\citep{karimireddy2020scaffold}, $\textit{FedDyn}$~\citep{acar2021federated}, and $\textit{FedCM}$~\citep{xu2021fedcm} as the benchmarks. \textit{FedAvg} propose the general FL paradigm based on the local SGD method. It allows partial participation training via uniformly selecting a subset of local clients. A series of developments followed it to improve its performance. \textit{FedAdam} studies the efficient adaptive optimizer on the global server update, which extends the scope of the FL paradigm. \textit{SCAFFOLD} indicates that FL suffers from the client-drift problem which is due to the inconsistency of local optimum. Beyond this, it uses the variance reduction technique to further reduce the divergence across the local clients. To further alleviate, \textit{FedDyn} studies the primal-dual method via adopting the ADMM to solve the problem. The consistency condition works as a constraint during the optimization. It proves that when the global model converges, the local objectives will be aligned with the global one. \textit{FedCM} proposes an efficient momentum-based method, dubbed client-level momentum. It communicates the global update as a correction to correct each local update to force the local client updates in a similar direction. It maintains very high consistency via a biased correction. Therefore, it relies on an accurate global direction estimation. \textit{FedSAM} considers the generalization performance. Generally, we adopt empirical risk minimization~(ERM) to perform the optimization process. While the sharpness-aware-minmization~(SAM) studies that it could search for a flat loss landscape. Flatness guarantees a higher generalization performance. Though our focus is not the generalization, we theoretically prove that even in the \textit{FedAvg} method divergence term affects the generalization error bound more than the optimization error bound. From this perspective, generalization-efficiency methods may also be connected with consistency guarantees. These are all the SOTA benchmarks in the FL community that concern more on enhancing consistency.

\paragraph{Hyperparameters selection.} Here we detail our hyperparameter selection in our experiments. For each splitting, we fix the total communication rounds $T$, local interval $K$, and mini-batchsize for all the benchmarks and our proposed \textit{FedInit}. The other selections are stated as follows.
\begin{table}[h]
  \caption{General hyperparameters introductions.}
  \label{appendix:hyperparameters}
  \centering
  \small
  \begin{tabular}{c|cc}
    \toprule
    Dataset & CIFAR-10 & best selection \\
    \midrule
    communication round $T$ & 500 & - \\
    local interval $K$      & 5   & - \\
    minibatch               & 50  & - \\
    weight decay            & $1e^{-3}$ & - \\
    \midrule
    local learning rate   & $\left[0.01, 0.1, 0.5, 1\right]$ & 0.1 \\
    global learning rate  & $\left[0.01, 0.1, 1.0\right]$ & $1.0/0.1$ \\
    learning rate decay   & $\left[0.995, 0.998, 0.9995\right]$ & $0.998$ \\
    relaxed coefficient $\beta$ & $\left[0.01, 0.02, 0.05, 0.1, 0.15\right]$ & $0.1/0.01$ \\
    \midrule
    \midrule
    Dataset & CIFAR-100 & best selection \\
    \midrule
    communication round $T$ & 500 & - \\
    local interval $K$      & 5   & - \\
    minibatch               & 50  & - \\
    weight decay            & $1e^{-3}$ & - \\
    \midrule
    local learning rate   & $\left[0.01, 0.1, 0.5, 1\right]$ & 0.1 \\
    global learning rate  & $\left[0.01, 0.1, 1.0\right]$ & $1.0/0.1$ \\
    learning rate decay   & $\left[0.998, 0.9995, 0.9998\right]$ & $0.998/0.9995$ \\
    relaxed coefficient $\beta$ & $\left[0.01, 0.02, 0.05, 0.1, 0.15\right]$ & $\star$\\
    \bottomrule
  \end{tabular}
\end{table}

$\star$ means different selections according to the specific setups.

We fix the most hyperparameters of testing the whole benchmarks for a fair comparison. The other algorithm-specific hyperparameters are subjected to specific circumstances. The ResNet-18-GN and VGG-11 adopt the same set of selections. Then we show algorithm-specific hyperparameters:
\begin{table}[h]
  \caption{Algorithm-specific hyperparameter introductions.}
  \centering
  \small
  \begin{tabular}{c|cccc}
    \toprule
    Method & specific hyperparameter & introduction & selection & best selection \\
    \midrule
    FedAdam   & global learning rate & adaptive learning rate & $\left[0.01, 0.05, 0.1, 1\right]$ & $0.1$\\
    FedSAM    & perturbation learning rate & ascent step update & $\left[0.01, 0.1, 1\right]$ & $0.1$\\
    FedDyn    & regularization coefficient & coefficient of prox-term & $\left[0.001, 0.01, 0.1, 1\right]$ & $\star$ \\
    FedCM     & client-level coefficient & ratios in local updates & $\left[0.05, 0.1, 0.5, 0.9\right]$ & $\star$\\
    \bottomrule
  \end{tabular}
\end{table}

\textbf{Special hyperparameter selections}. In the \textit{FedAdam} method, we test that it is very sensitive to the global learning rate. Though we report the best selection is $0.1$, it still requires some finetuning based on the dataset and experimental setups. In the \textit{FedSAM} method, we test it is very sensitive to the perturbation learning rate. Usually, it should be selected as $0.1$ in most cases. However, in some poor-sampling cases, i.e. low participation ratio, it should be selected as $0.01$. In the \textit{FedDyn}, we test it is very sensitive to the regularization coefficient. Generally, it adopts the regularization coefficient to be $0.1$ on CIFAR-10 and $0.01/0.001$ on CIFAR-100. In \textit{FedCM}, we select the client-level coefficient as $0.1$ which is followed by~\citet{xu2021fedcm} in most cases. However, on the VGG-11 model, it fails to converge with a small client-level coefficient.

\newpage
\subsection{Experiments}
\subsubsection{Curves}
In this section, we show the curves of our results.
\begin{figure}[H]
\centering
    \subfigure[Dir-0.6 10$\%$-100.]{
        \includegraphics[width=0.24\textwidth]{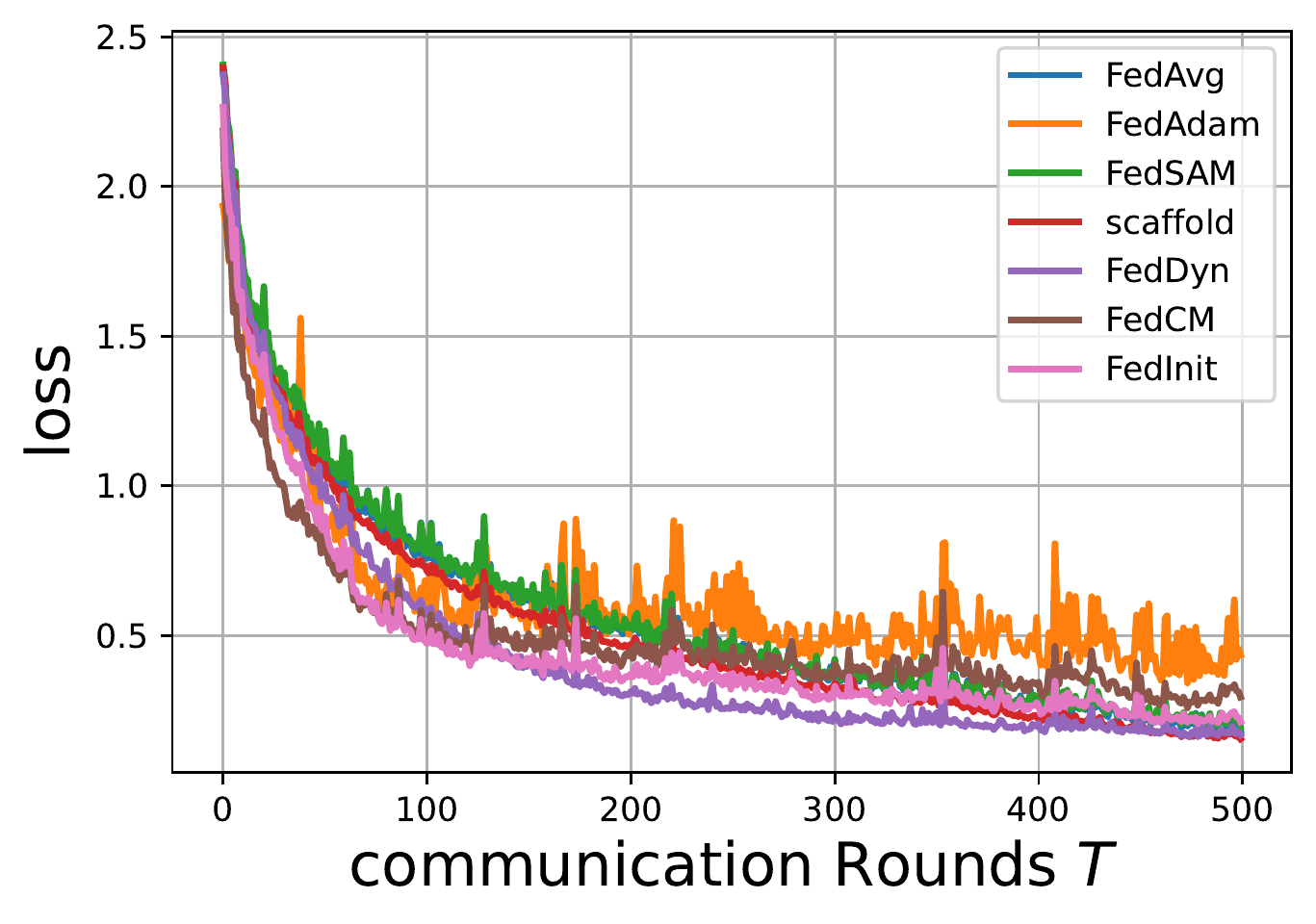}}\!
    \subfigure[Dir-0.1 10$\%$-100.]{
	\includegraphics[width=0.24\textwidth]{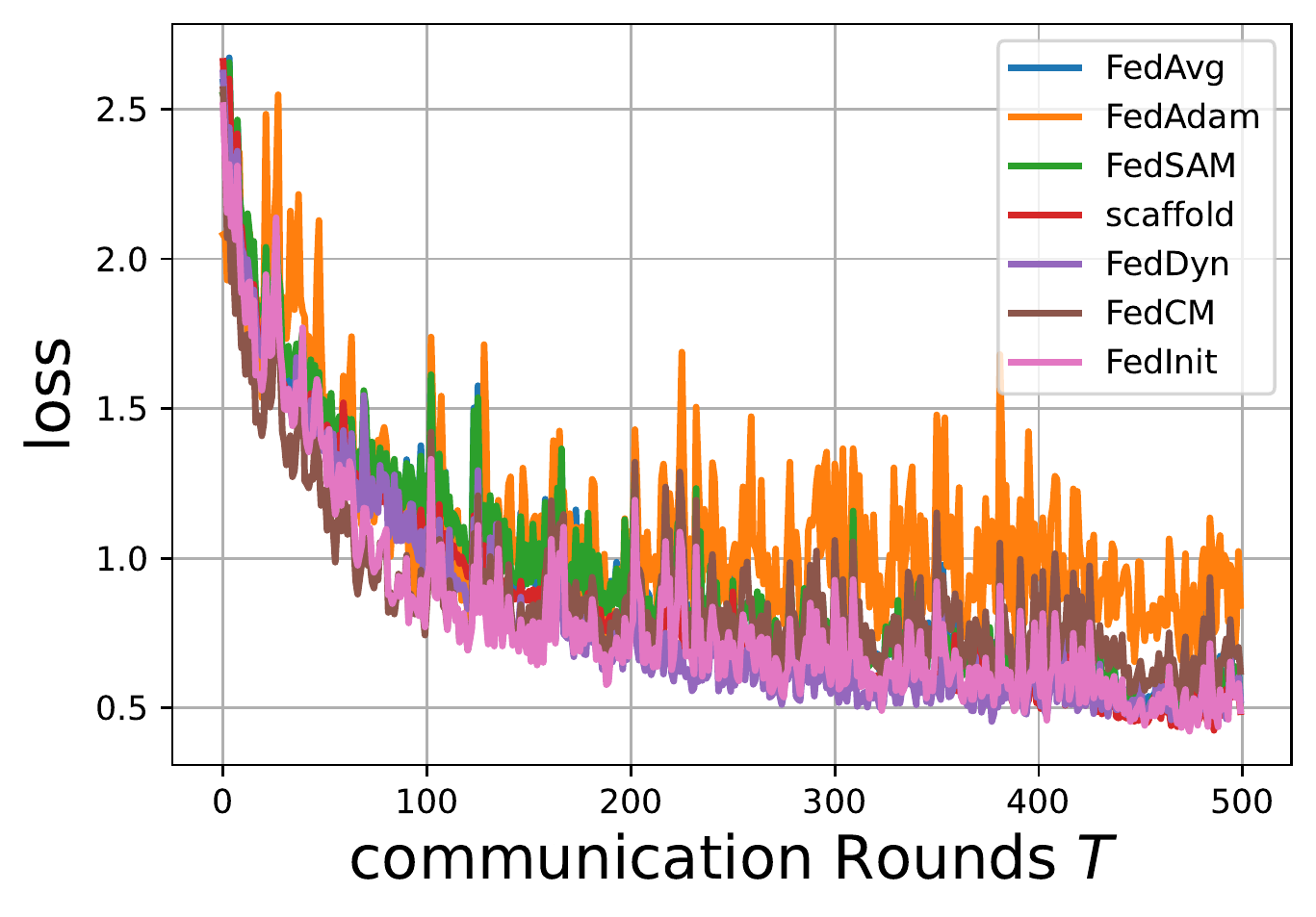}}\!
    \subfigure[Dir-0.6 5$\%$-200.]{
	\includegraphics[width=0.24\textwidth]{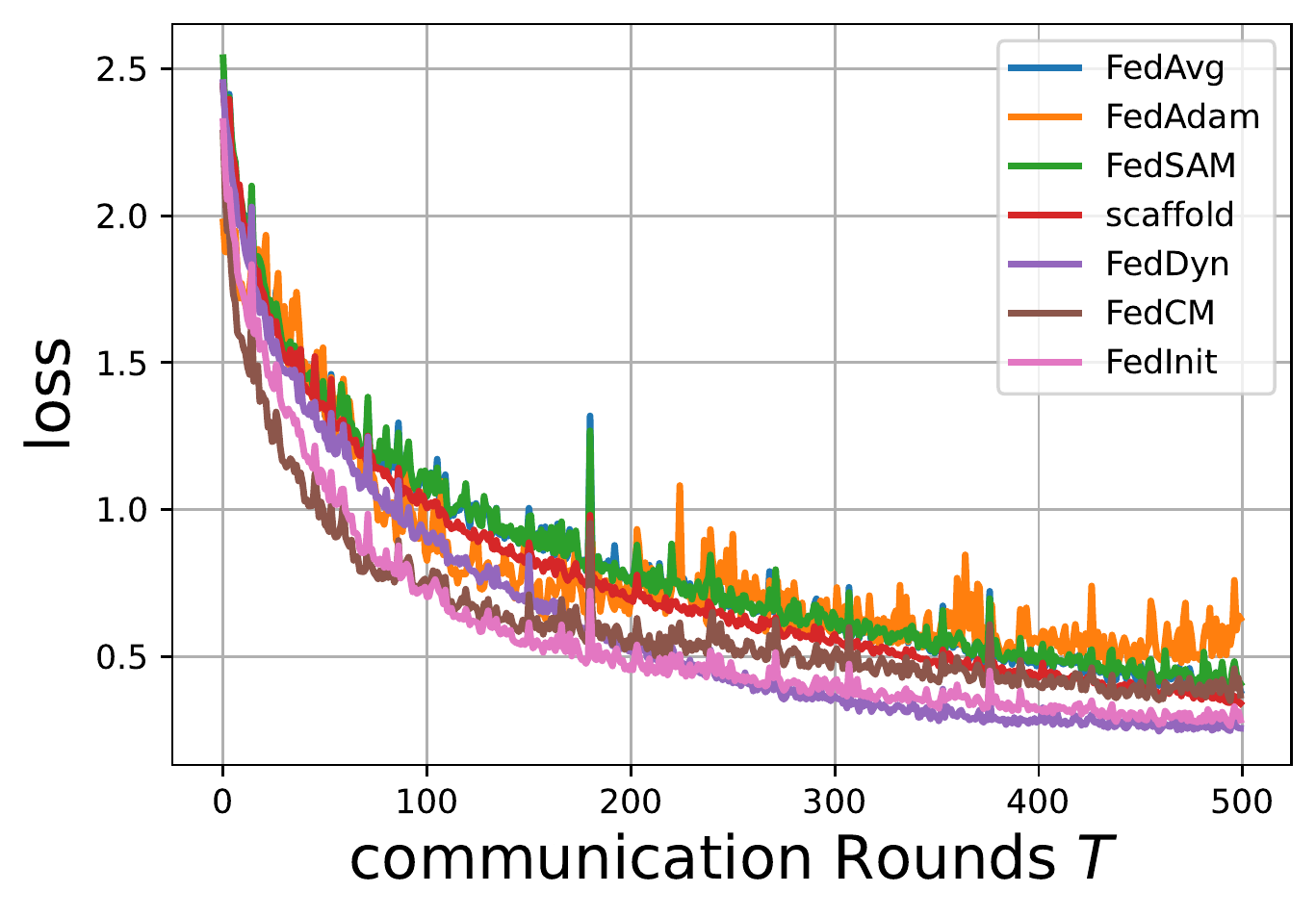}}\!
    \subfigure[Dir-0.1 5$\%$-200.]{
	\includegraphics[width=0.24\textwidth]{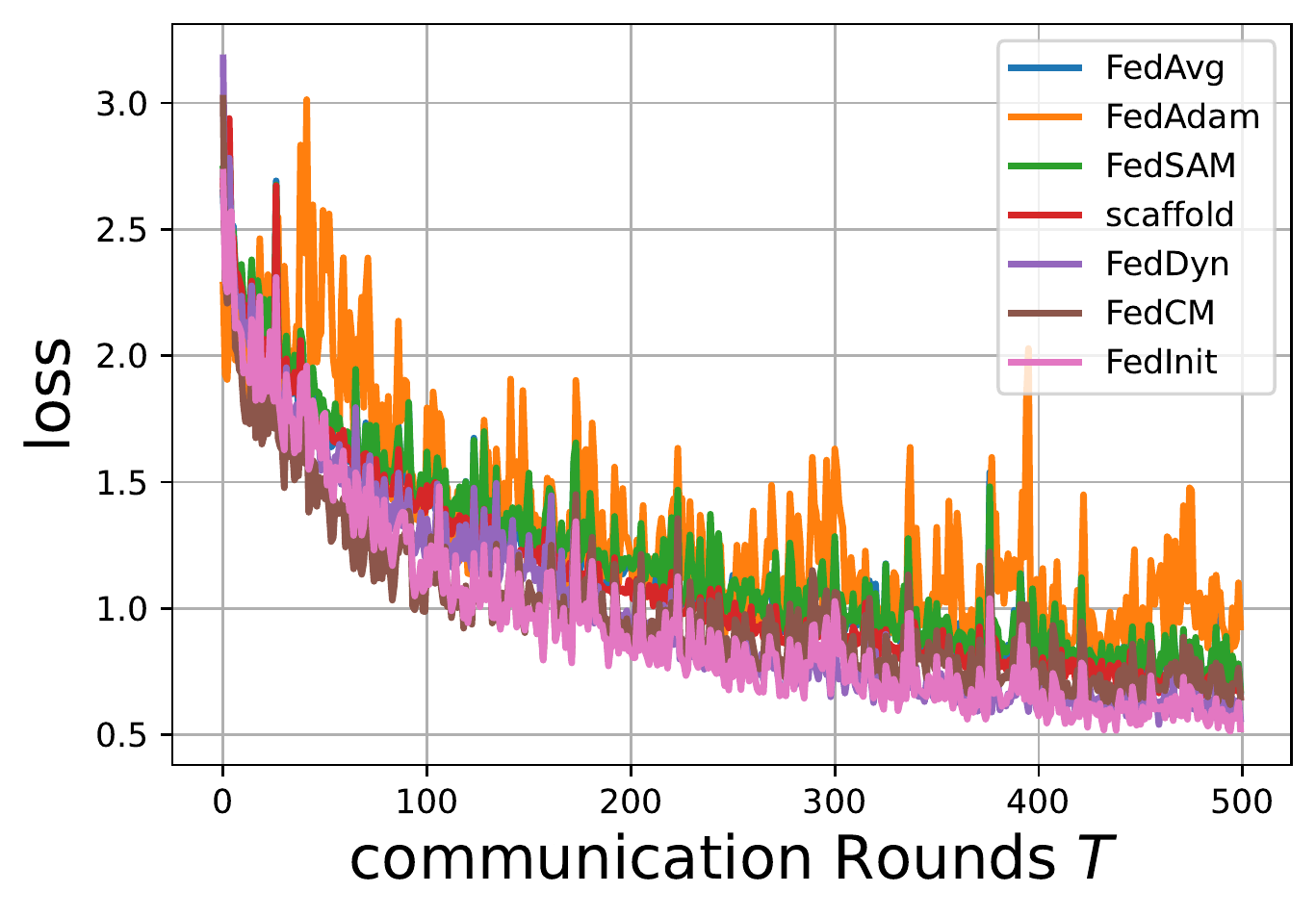}}
\vskip -0.05in
\caption{Loss on the CIFAR-10 dataset.}
\vskip -0.05in
\end{figure}

\begin{figure}[H]
\centering
    \subfigure[Dir-0.6 10$\%$-100.]{
        \includegraphics[width=0.24\textwidth]{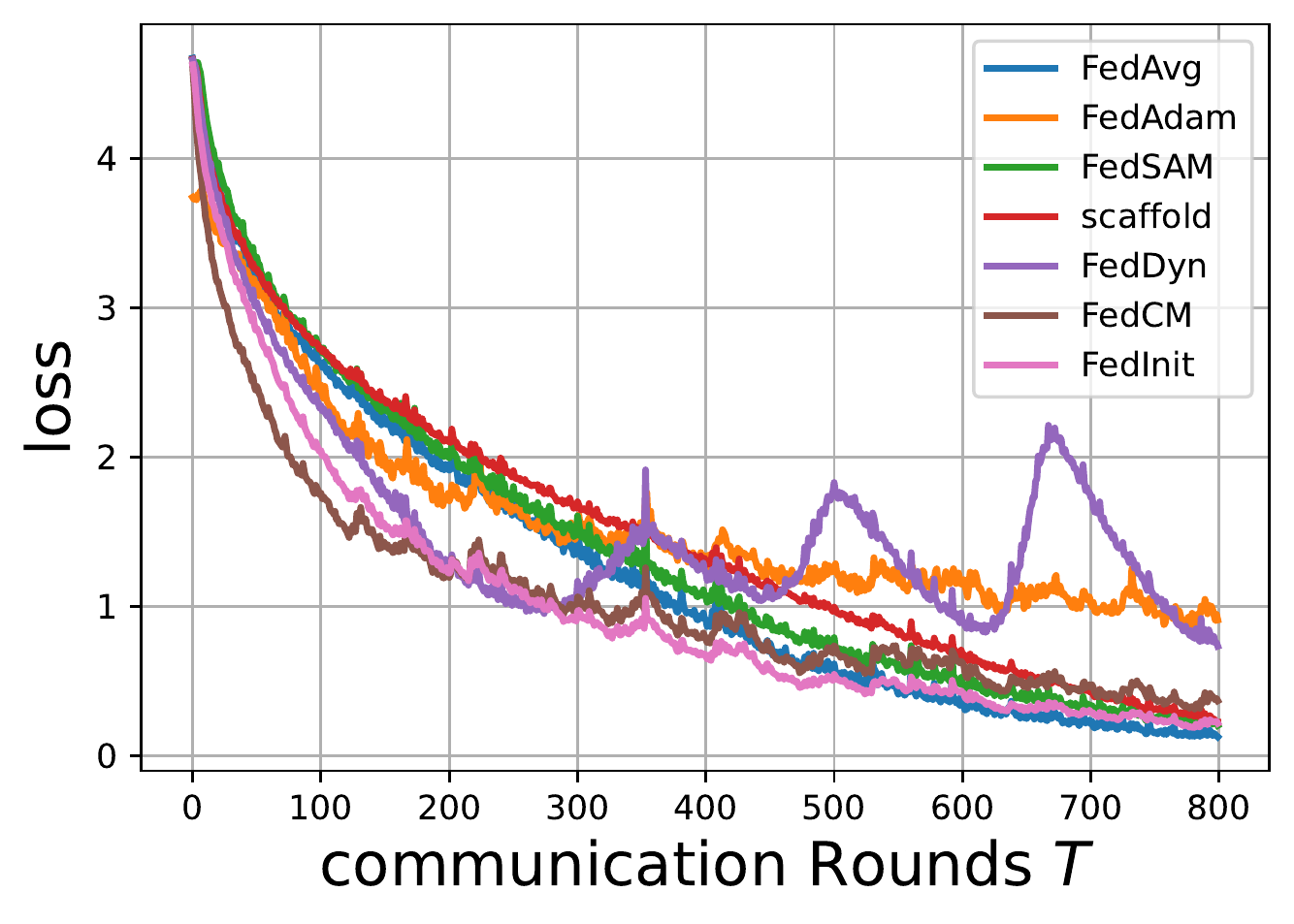}}\!
    \subfigure[Dir-0.1 10$\%$-100.]{
	\includegraphics[width=0.24\textwidth]{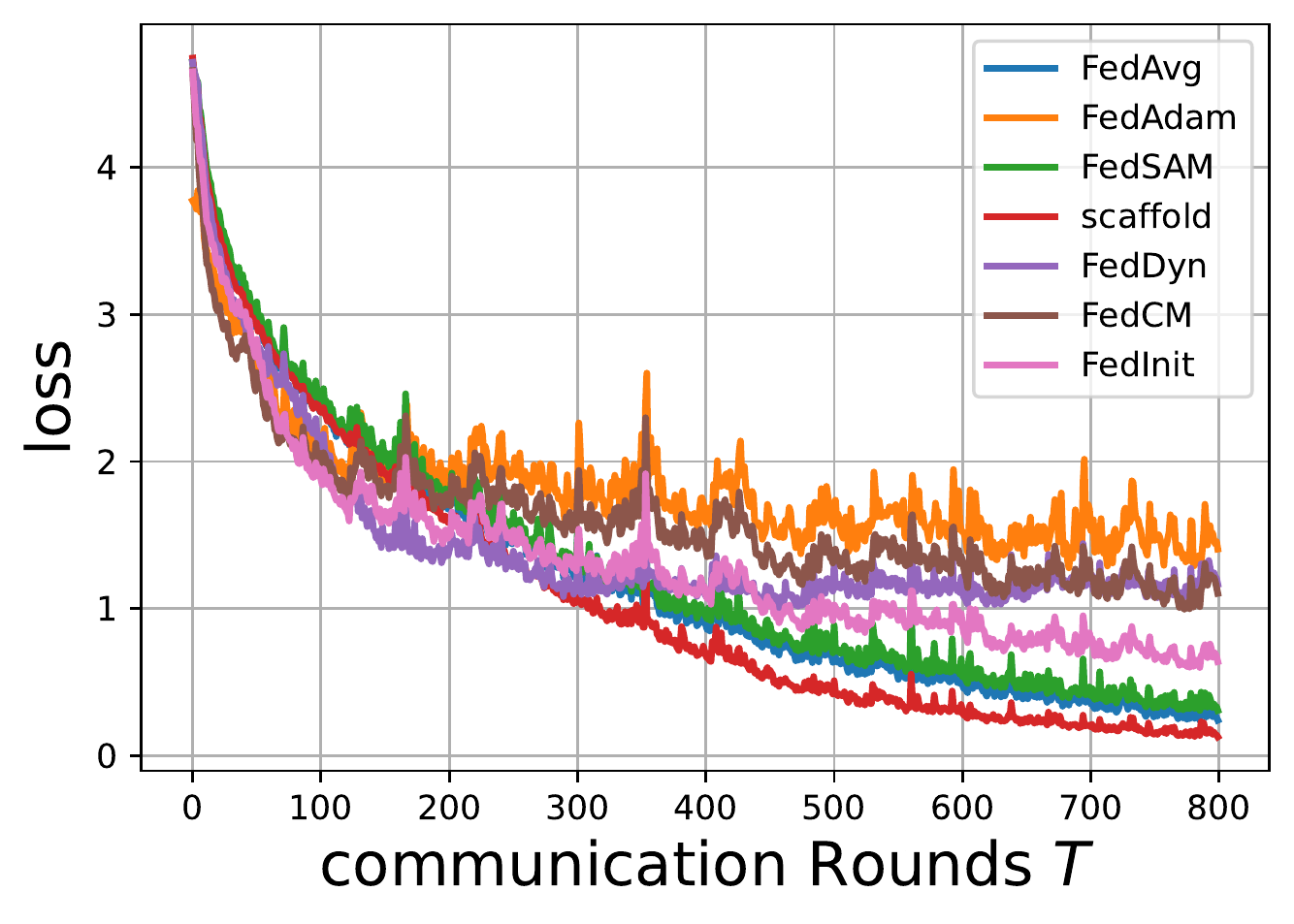}}\!
    \subfigure[Dir-0.6 5$\%$-200.]{
	\includegraphics[width=0.24\textwidth]{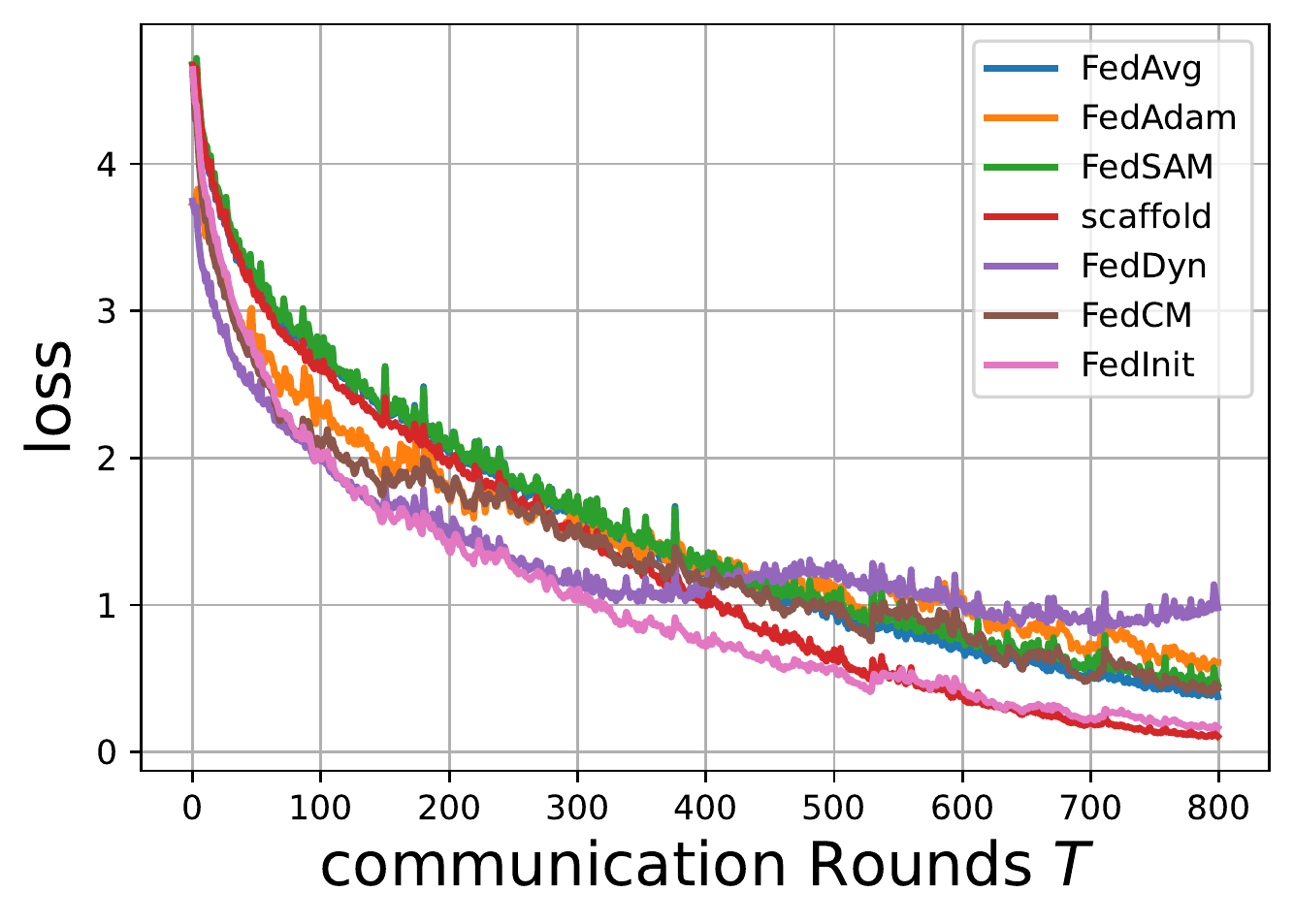}}\!
    \subfigure[Dir-0.1 5$\%$-200.]{
	\includegraphics[width=0.24\textwidth]{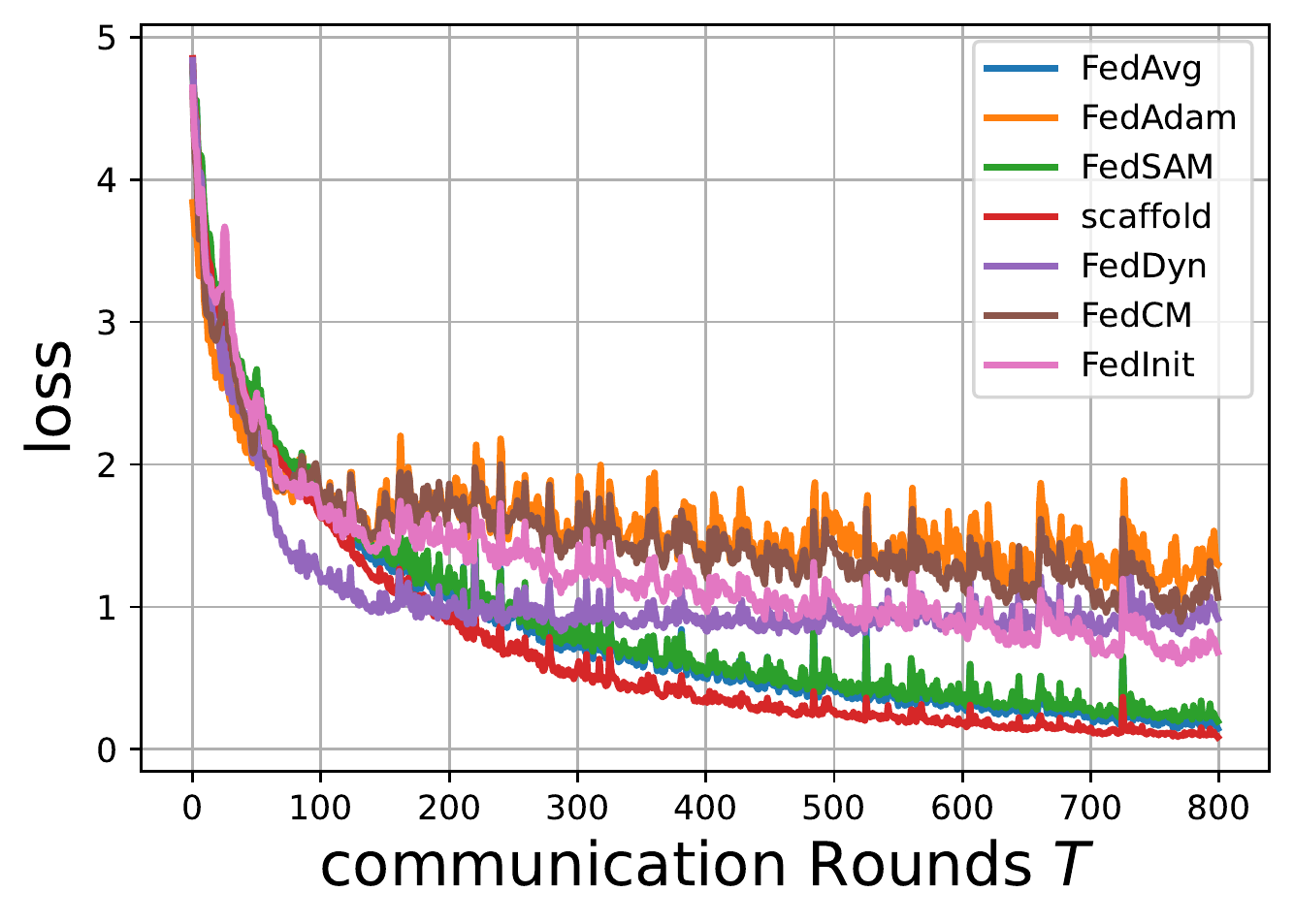}}
\vskip -0.05in
\caption{Loss on the CIFAR-100 dataset.}
\vskip -0.05in
\end{figure}

\begin{figure}[H]
\centering
    \subfigure[Dir-0.6 10$\%$-100.]{
        \includegraphics[width=0.24\textwidth]{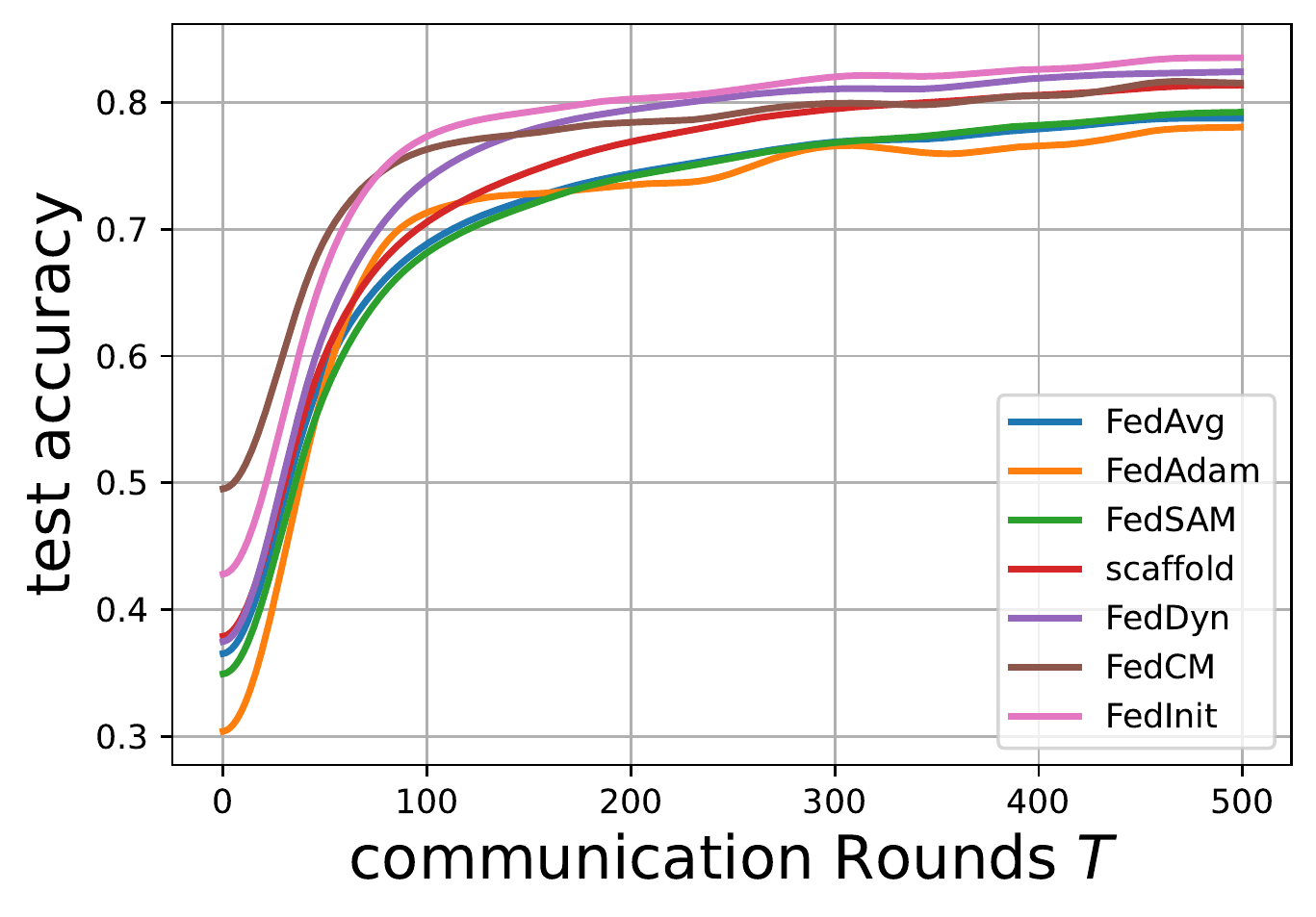}}\!
    \subfigure[Dir-0.1 10$\%$-100.]{
	\includegraphics[width=0.24\textwidth]{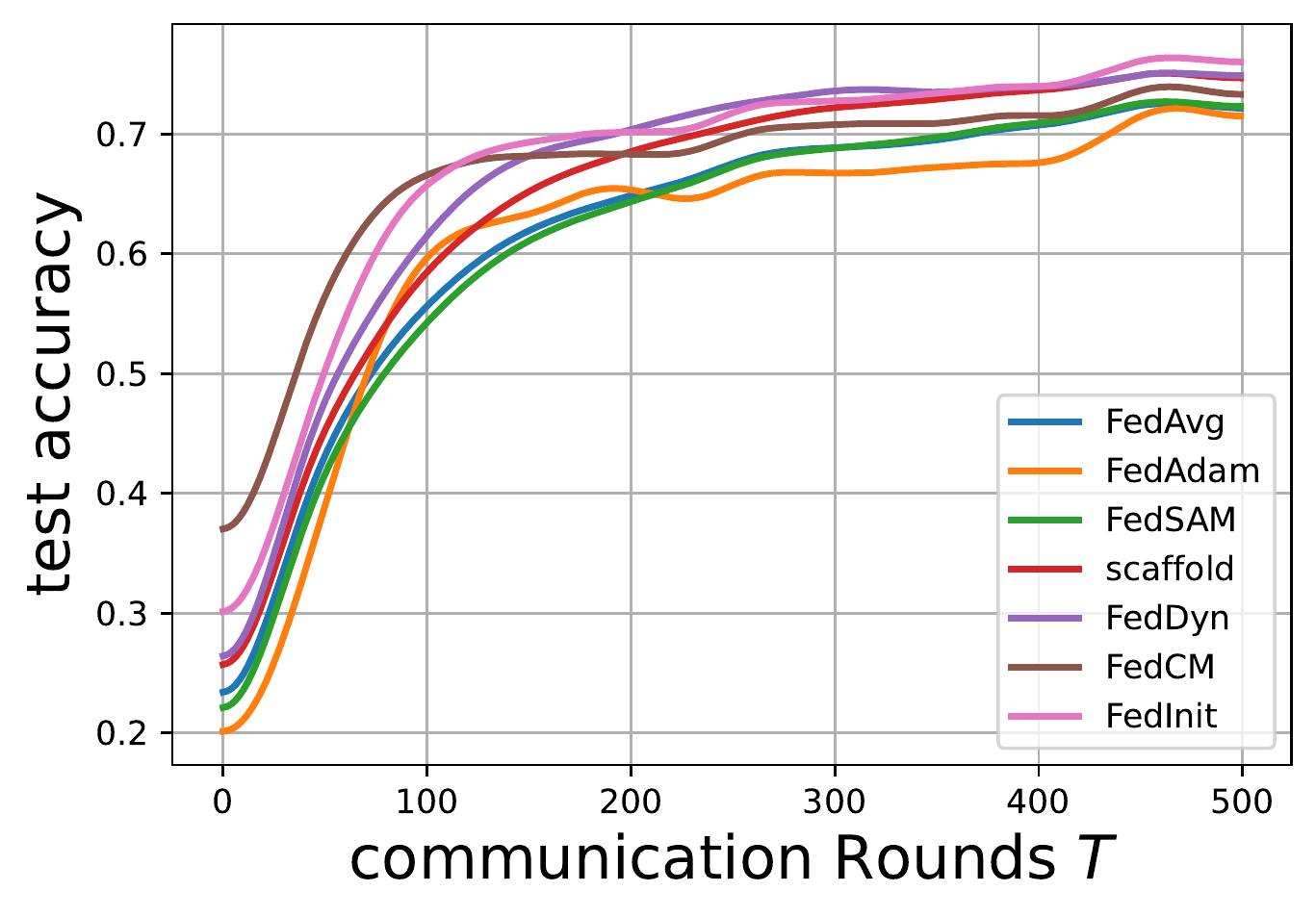}}\!
    \subfigure[Dir-0.6 5$\%$-200.]{
	\includegraphics[width=0.24\textwidth]{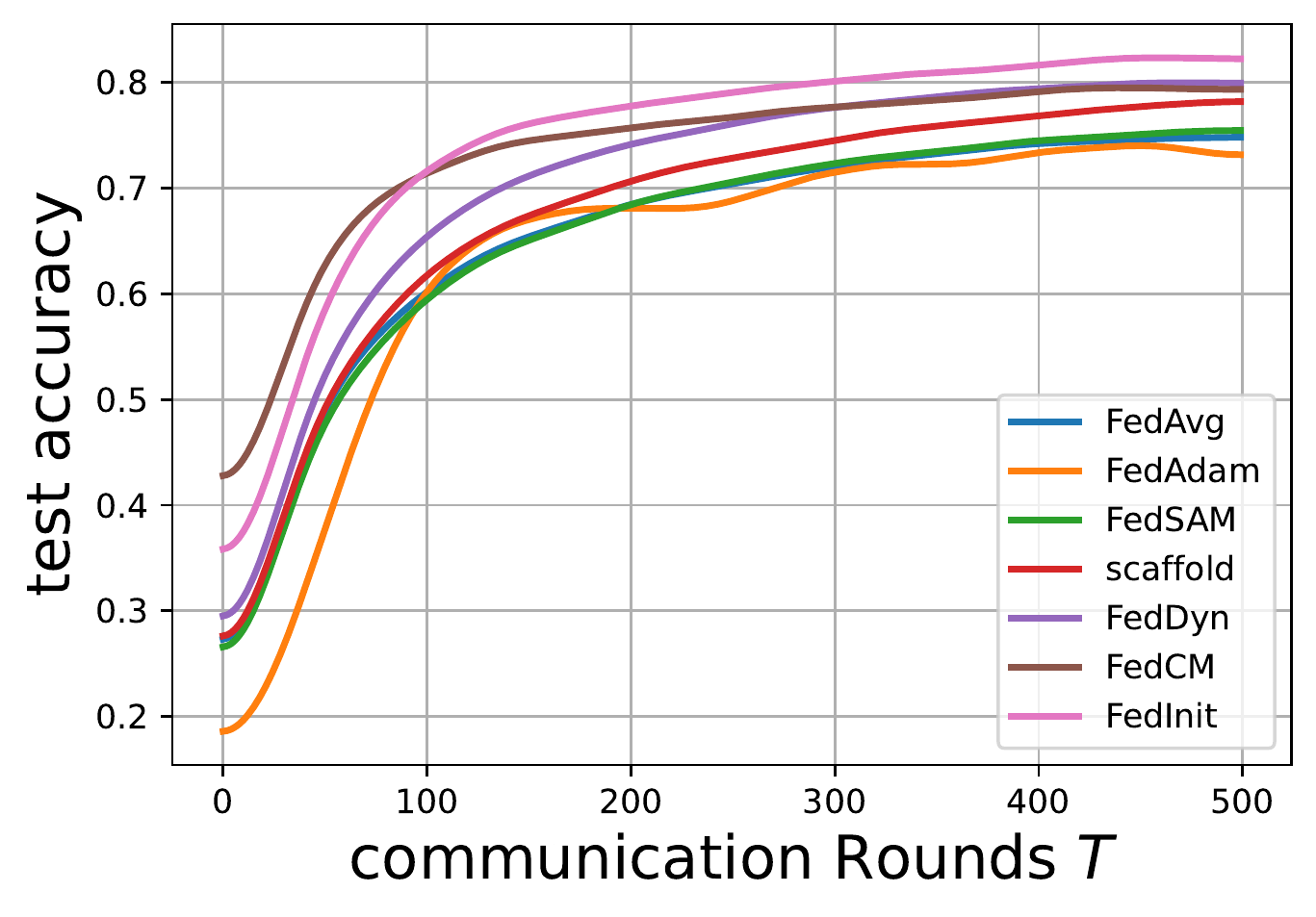}}\!
    \subfigure[Dir-0.1 5$\%$-200.]{
	\includegraphics[width=0.24\textwidth]{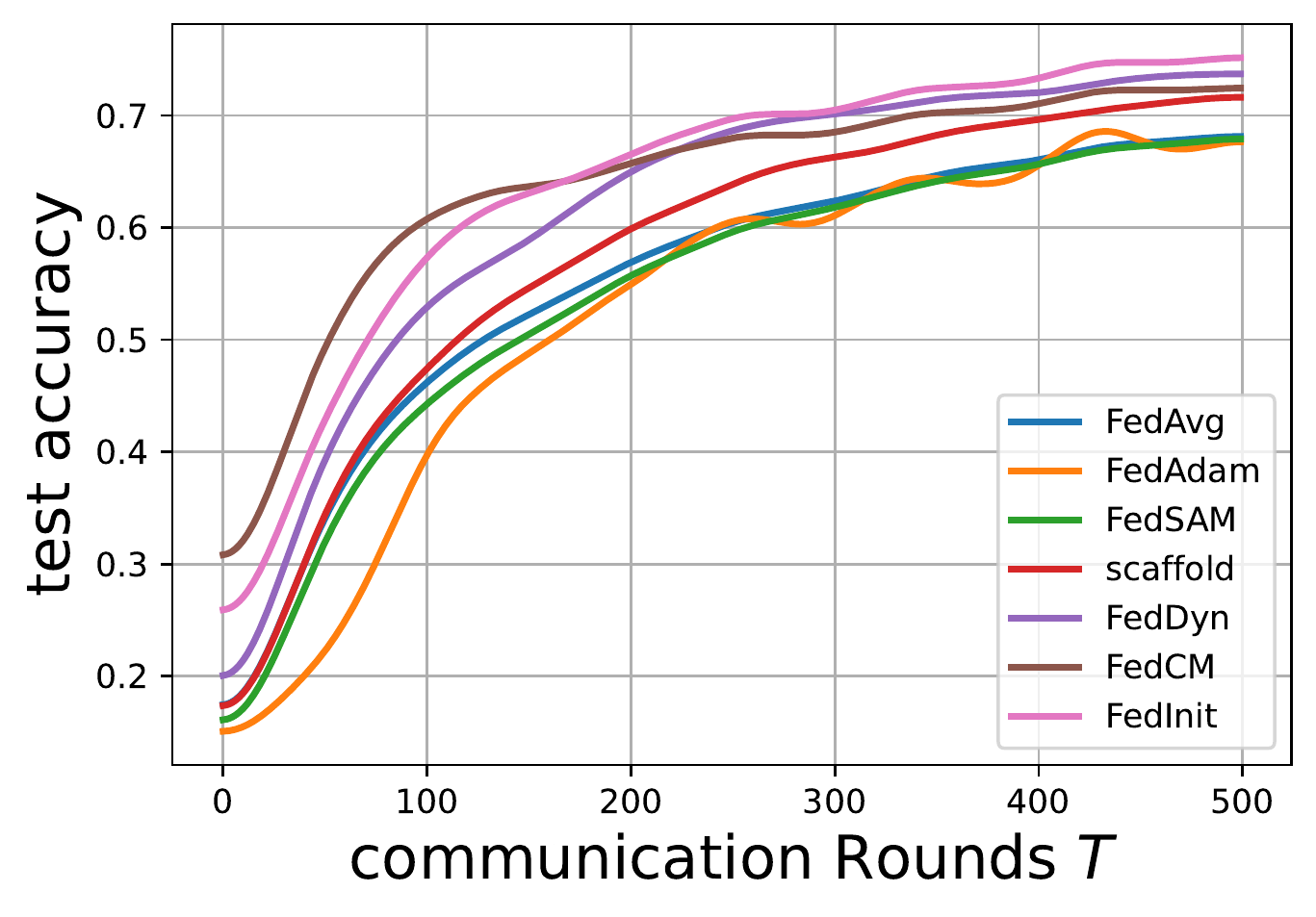}}
\vskip -0.05in
\caption{Test accuracy on the CIFAR-10 dataset.}
\vskip -0.05in
\end{figure}

\begin{figure}[H]
\centering
    \subfigure[Dir-0.6 10$\%$-100.]{
        \includegraphics[width=0.24\textwidth]{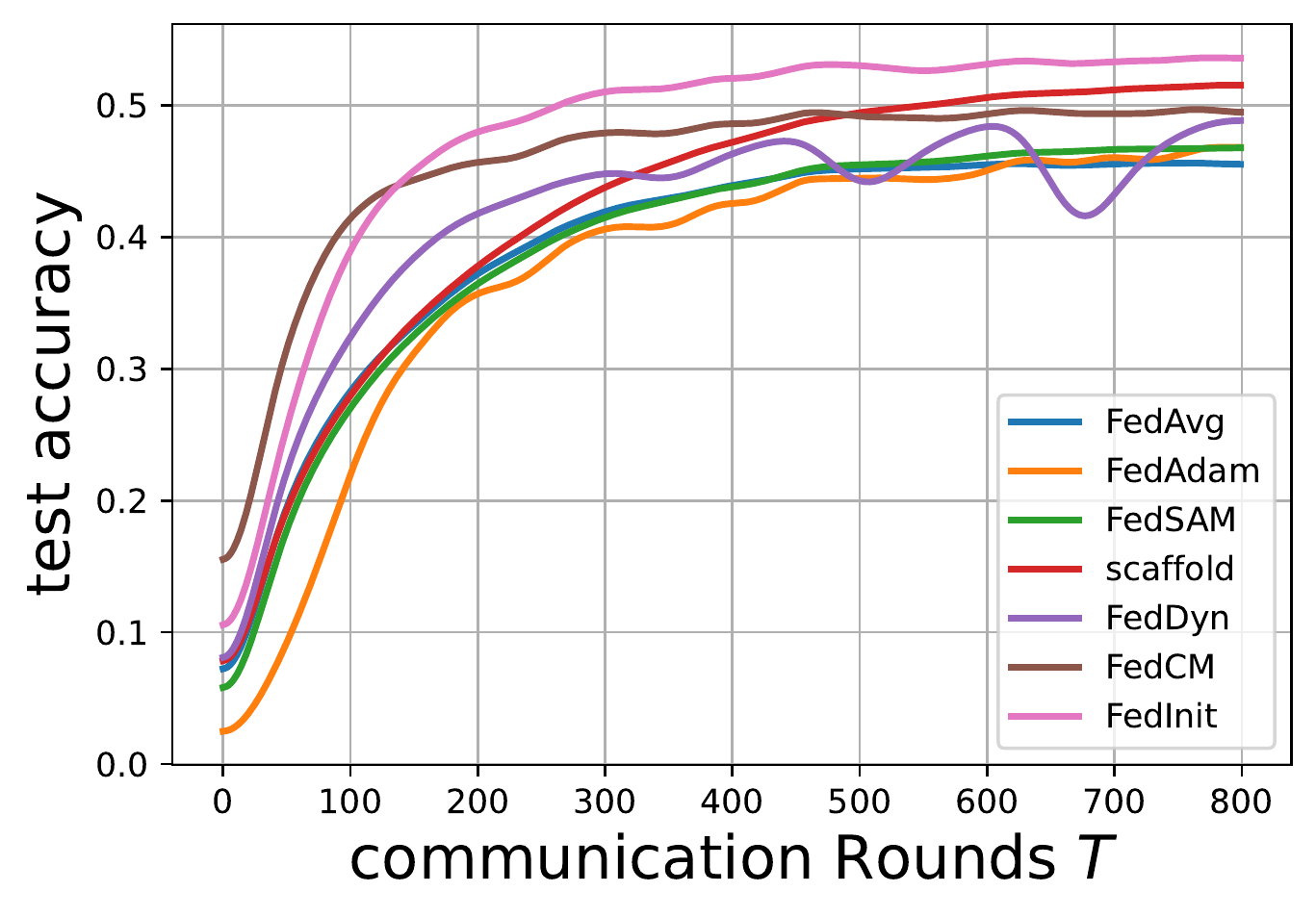}}\!
    \subfigure[Dir-0.1 10$\%$-100.]{
	\includegraphics[width=0.24\textwidth]{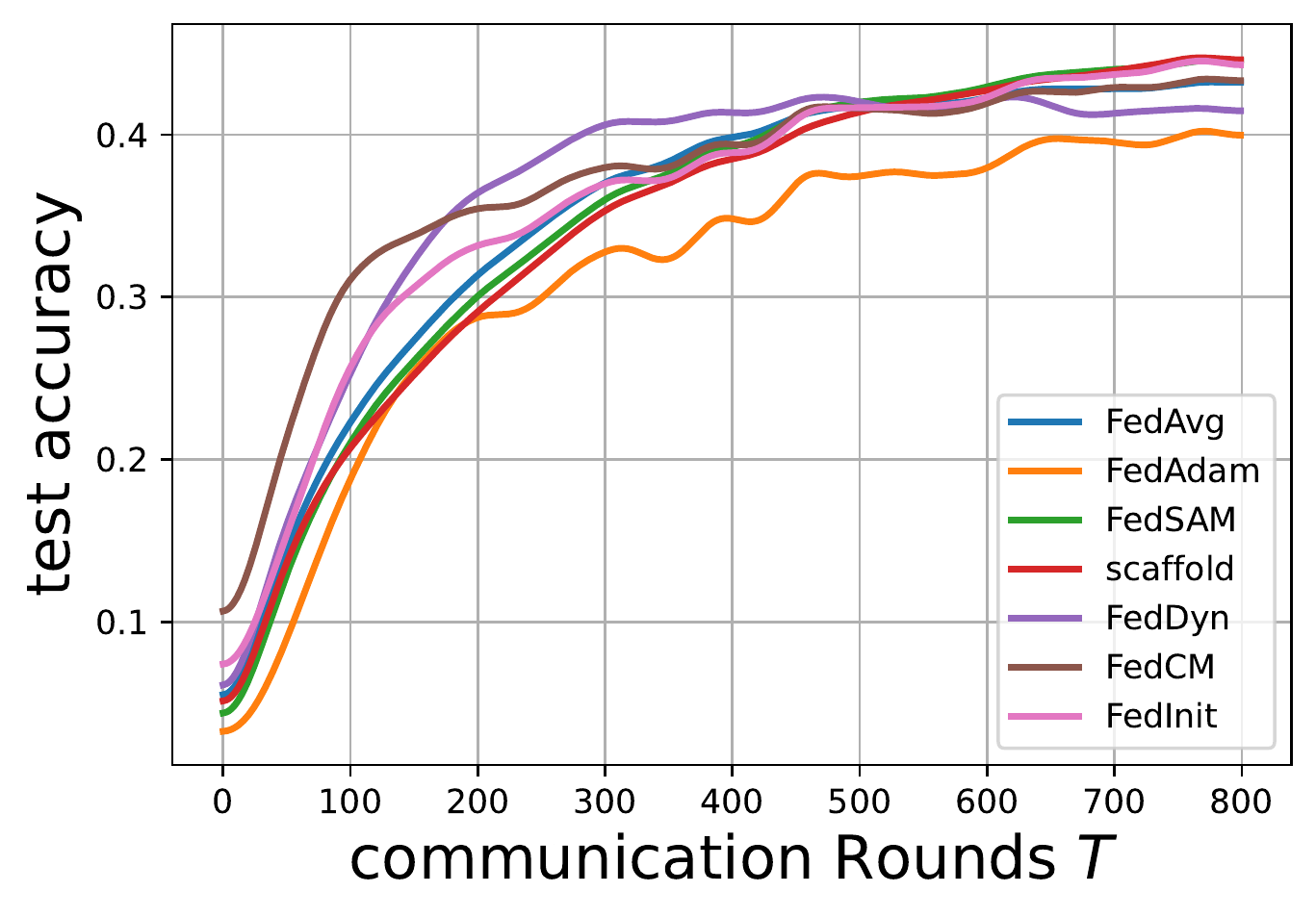}}\!
    \subfigure[Dir-0.6 5$\%$-200.]{
	\includegraphics[width=0.24\textwidth]{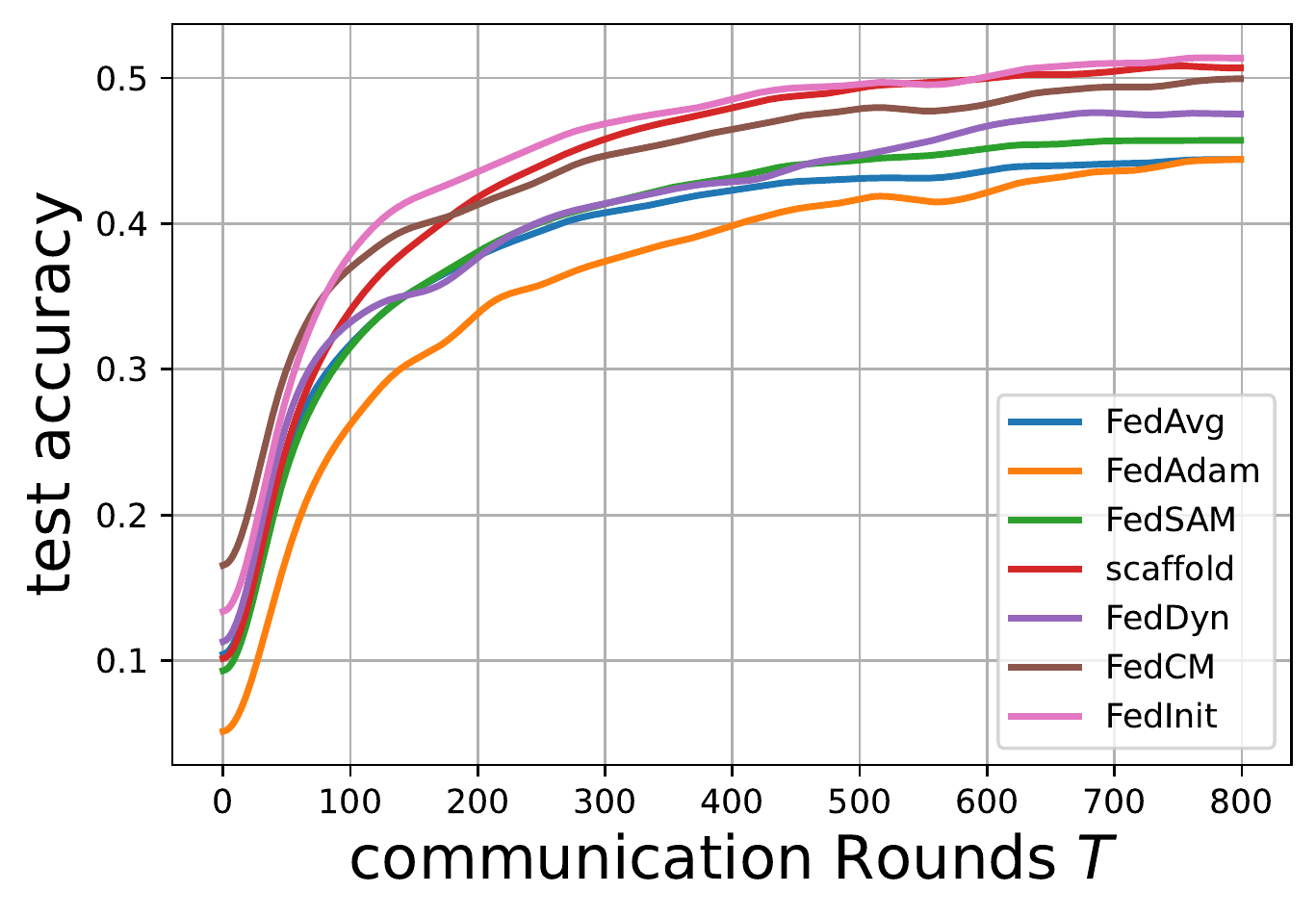}}\!
    \subfigure[Dir-0.1 5$\%$-200.]{
	\includegraphics[width=0.24\textwidth]{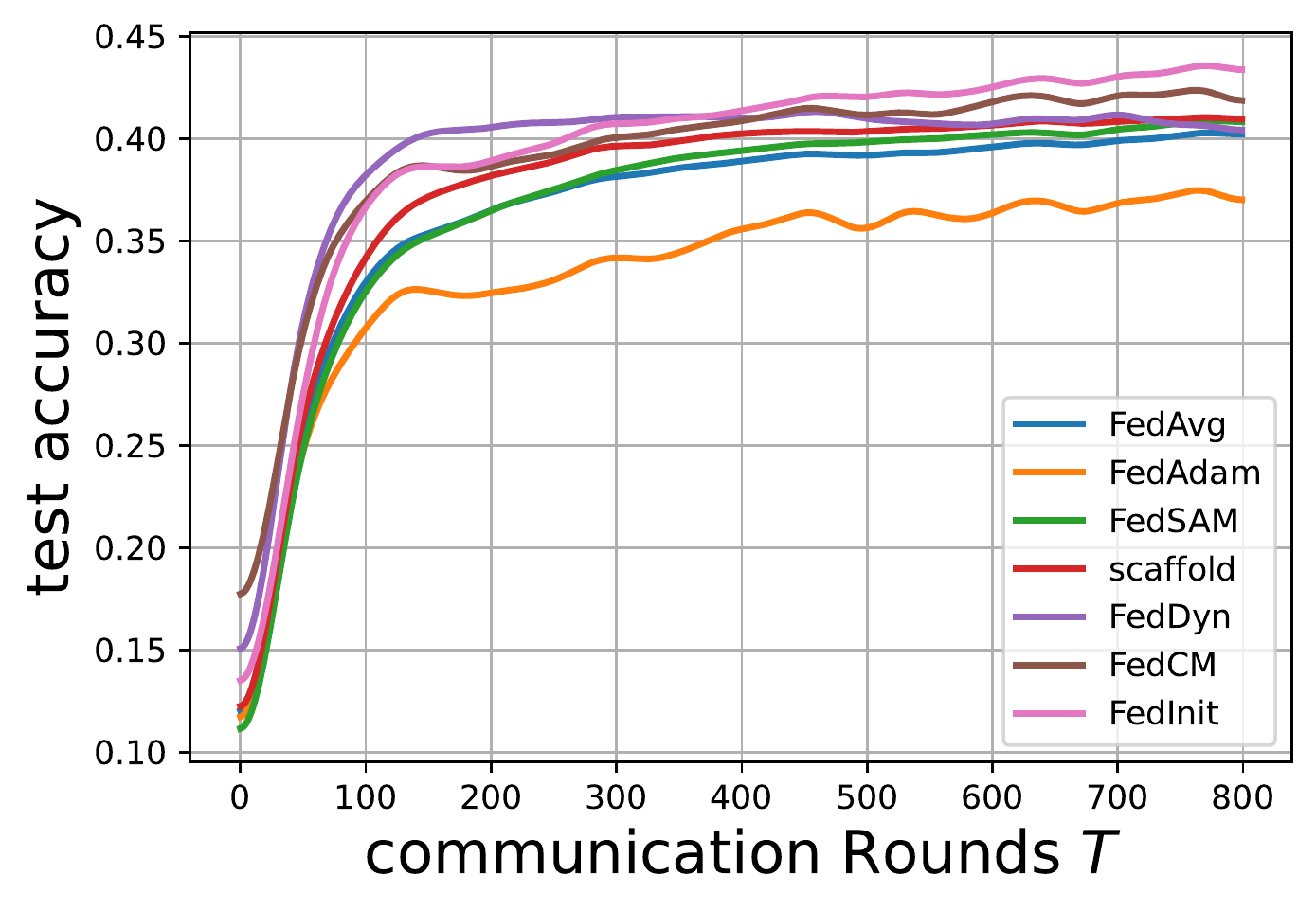}}
\vskip -0.05in
\caption{Test accuracy on the CIFAR-100 dataset.}
\vskip -0.05in
\end{figure}

To show the stable accuracy curves, we use the third-party {\ttfamily tsmoothie.smoother} to smooth the raw curve via the function {\ttfamily ConvolutionSmoother(window\_len=100, window\_type=`hanning')}. On most setups, our proposed \textit{FedInit} achieves the SOTA results. It effectively avoids negative impacts from local overfitting. 

\newpage
\subsubsection{Consistency of Different Initialization}
In this part, we mainly test the consistency level of different $\beta$. The coefficient $\beta$ controls the divergence level of the local initialization states. We select the \textit{FedAvg} and \textit{SCAFFOLD} to show the efficiency of the proposed relaxed initialization.
\begin{figure}[H]
\centering
    \subfigure[Dir 0.6 10\%-100 Accuracy and Consistency.]{
        \includegraphics[width=0.24\textwidth]{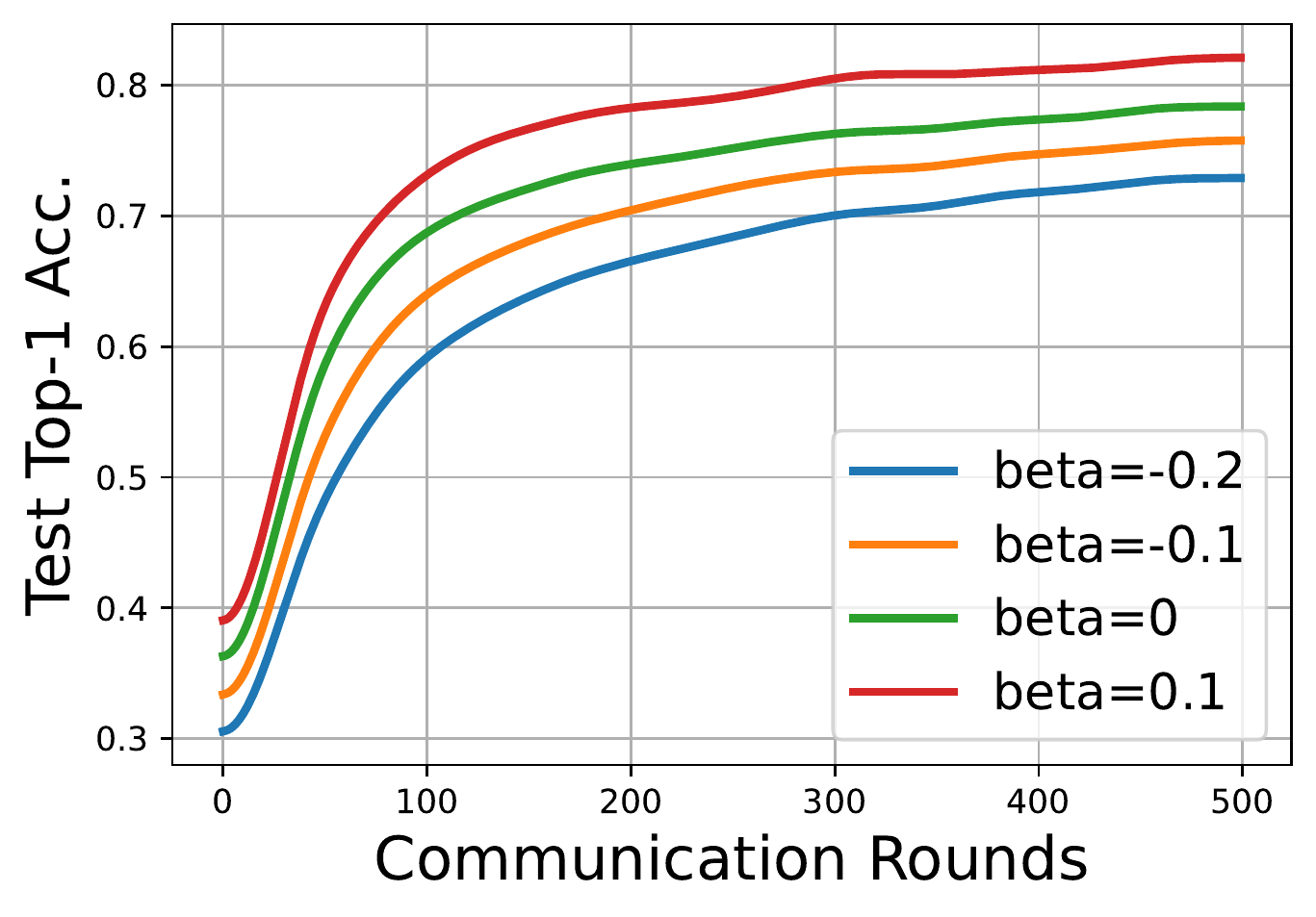}\!
	\includegraphics[width=0.24\textwidth]{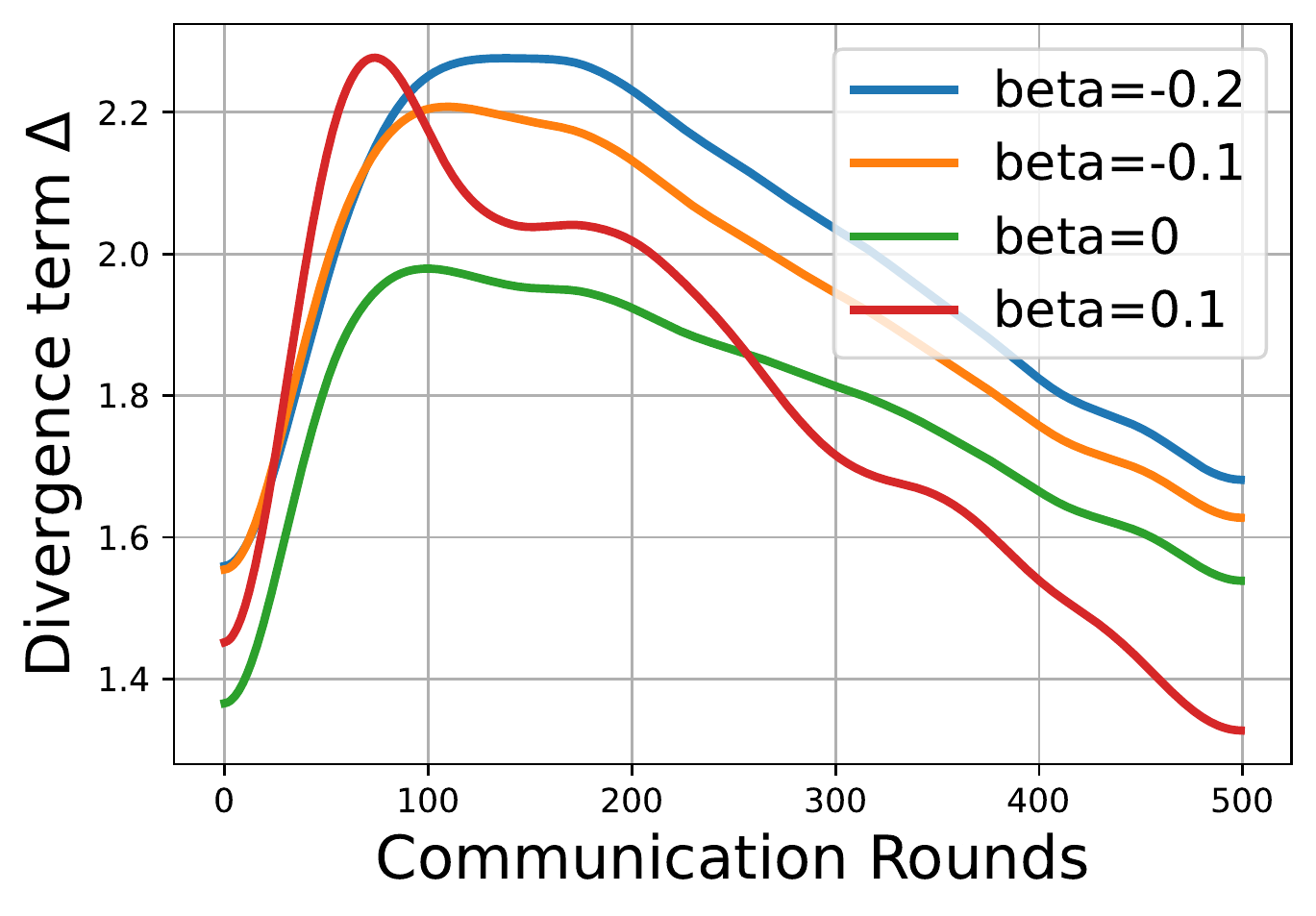}}\!
    \subfigure[Dir 0.1 5\%-200 Accuracy and Consistency.]{
	\includegraphics[width=0.24\textwidth]{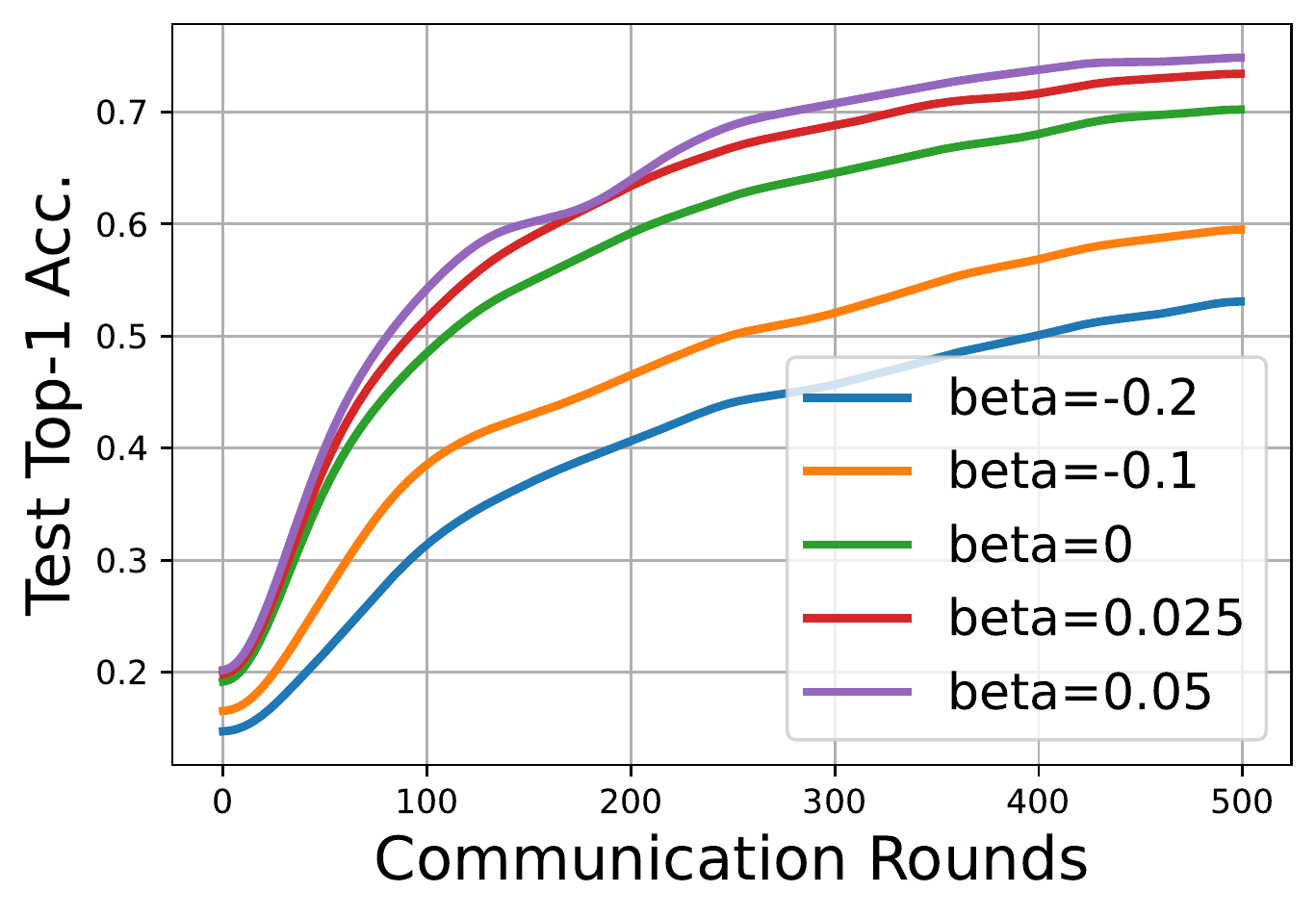}\!
	\includegraphics[width=0.24\textwidth]{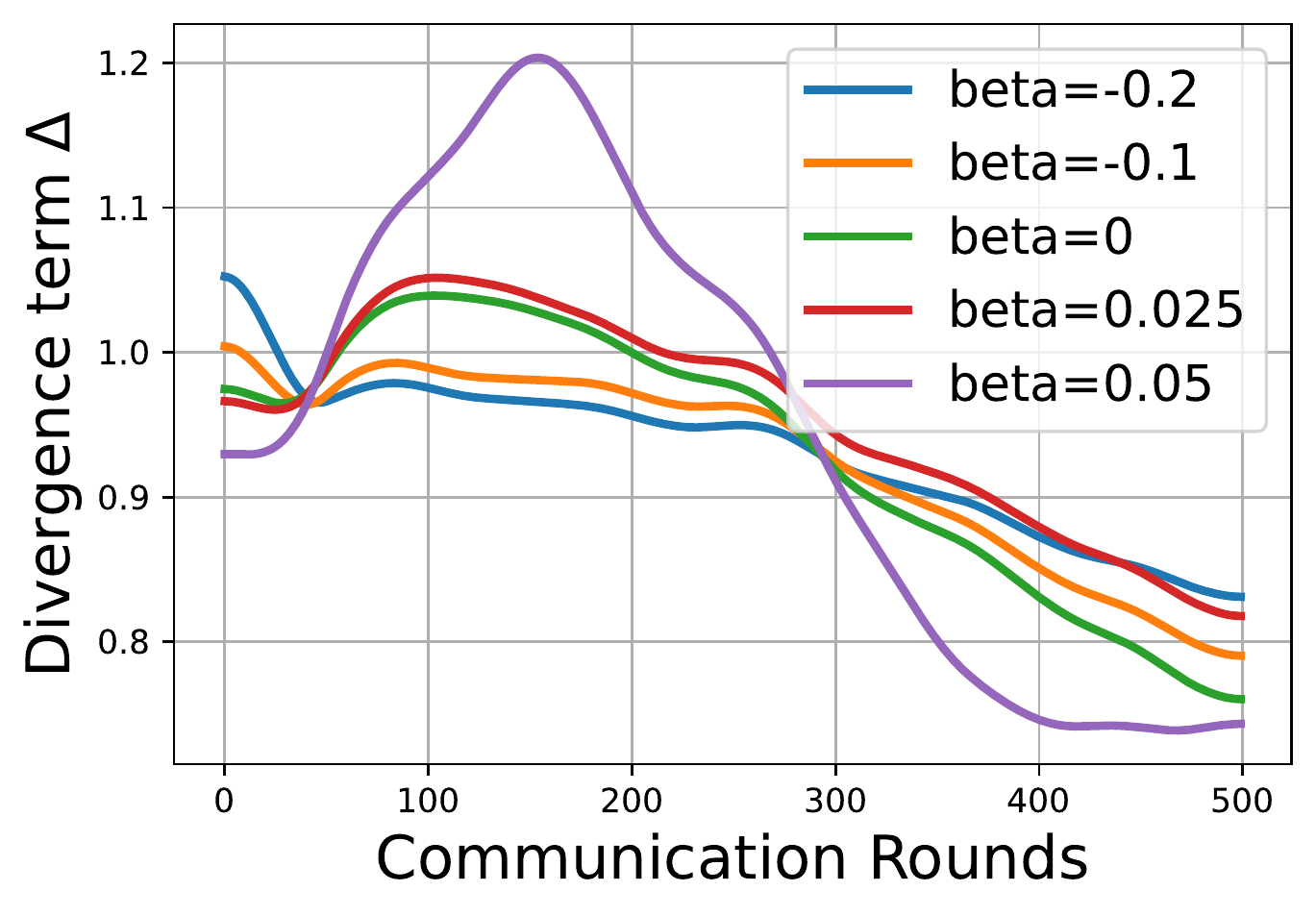}}
\vskip -0.05in
\caption{Experiments of \textit{FedAvg} on the CIFAR-10 dataset.}
\vskip -0.05in
\end{figure}

\begin{figure}[H]
\centering
    \subfigure[Dir 0.6 10\%-100 Accuracy and Consistency.]{
        \includegraphics[width=0.24\textwidth]{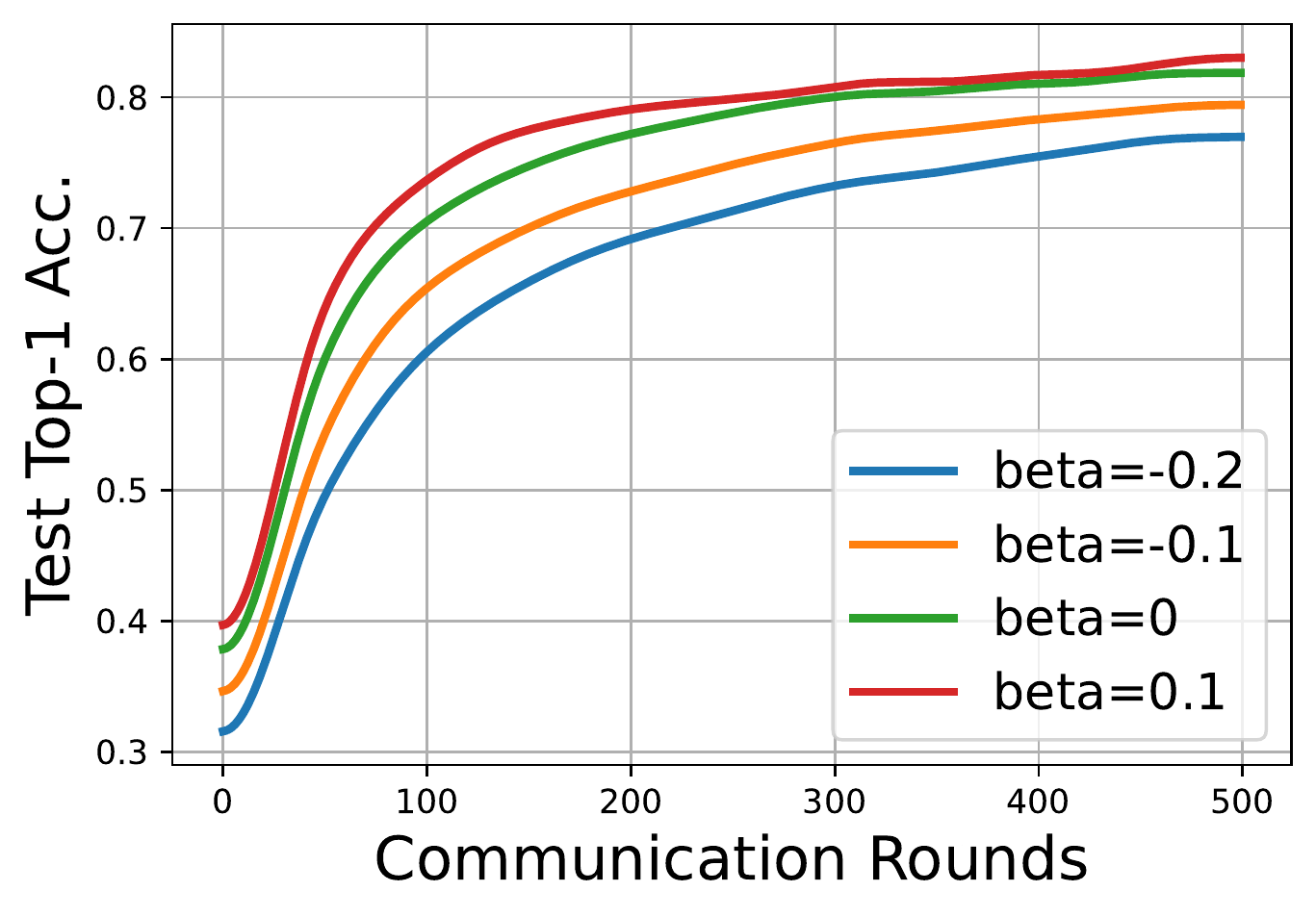}\!
	\includegraphics[width=0.24\textwidth]{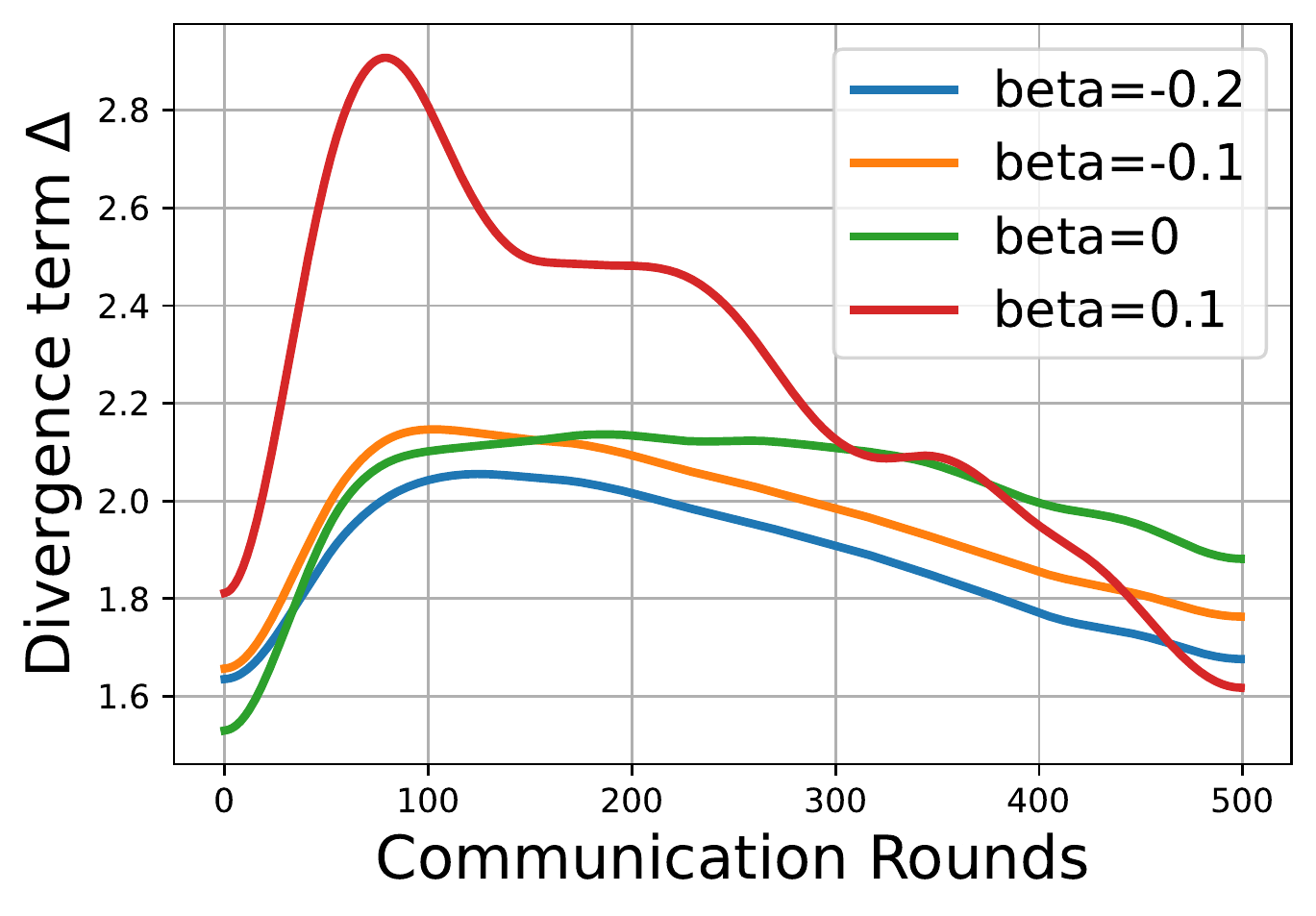}}\!
    \subfigure[Dir 0.1 5\%-200 Accuracy and Consistency.]{
	\includegraphics[width=0.24\textwidth]{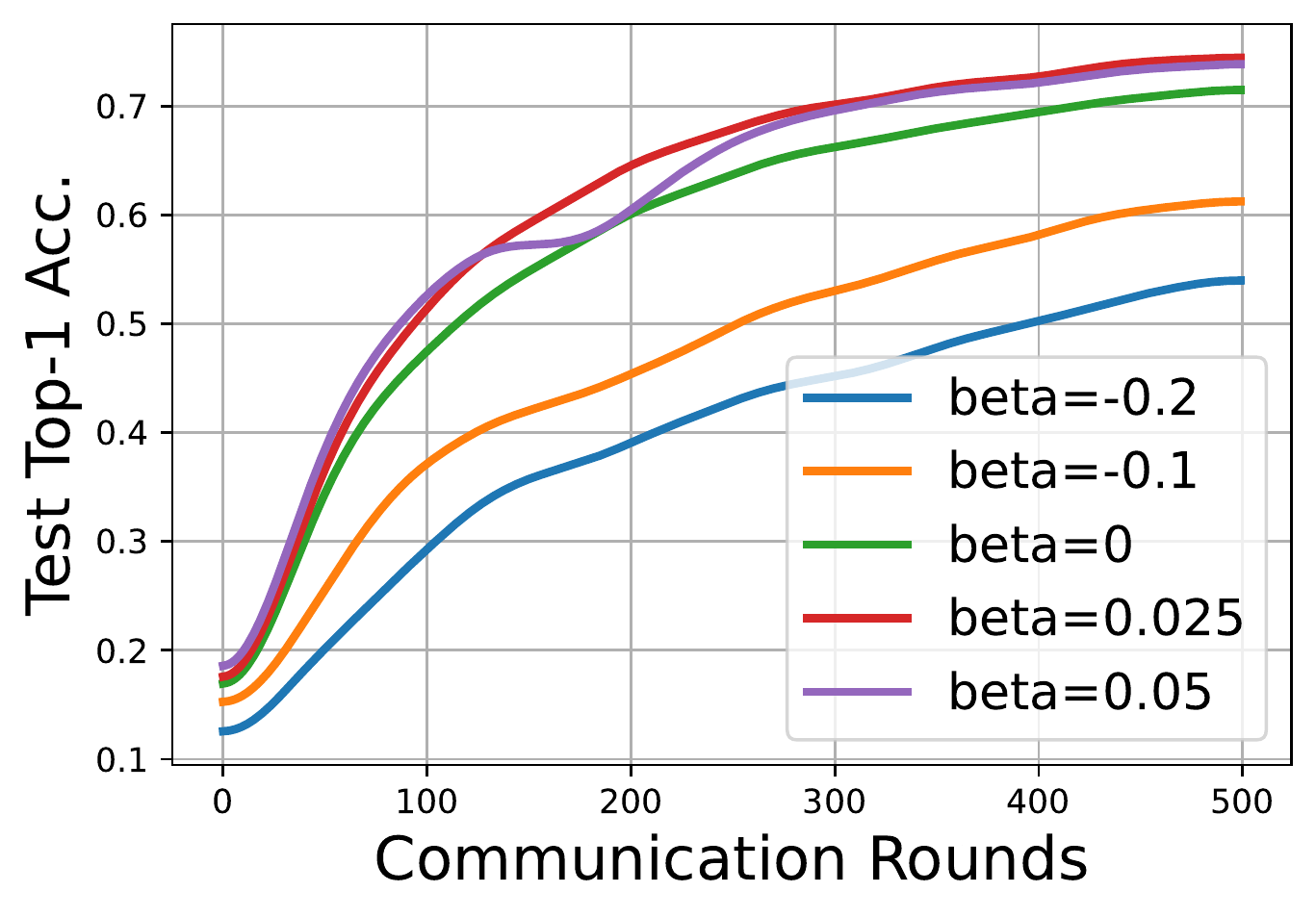}\!
	\includegraphics[width=0.24\textwidth]{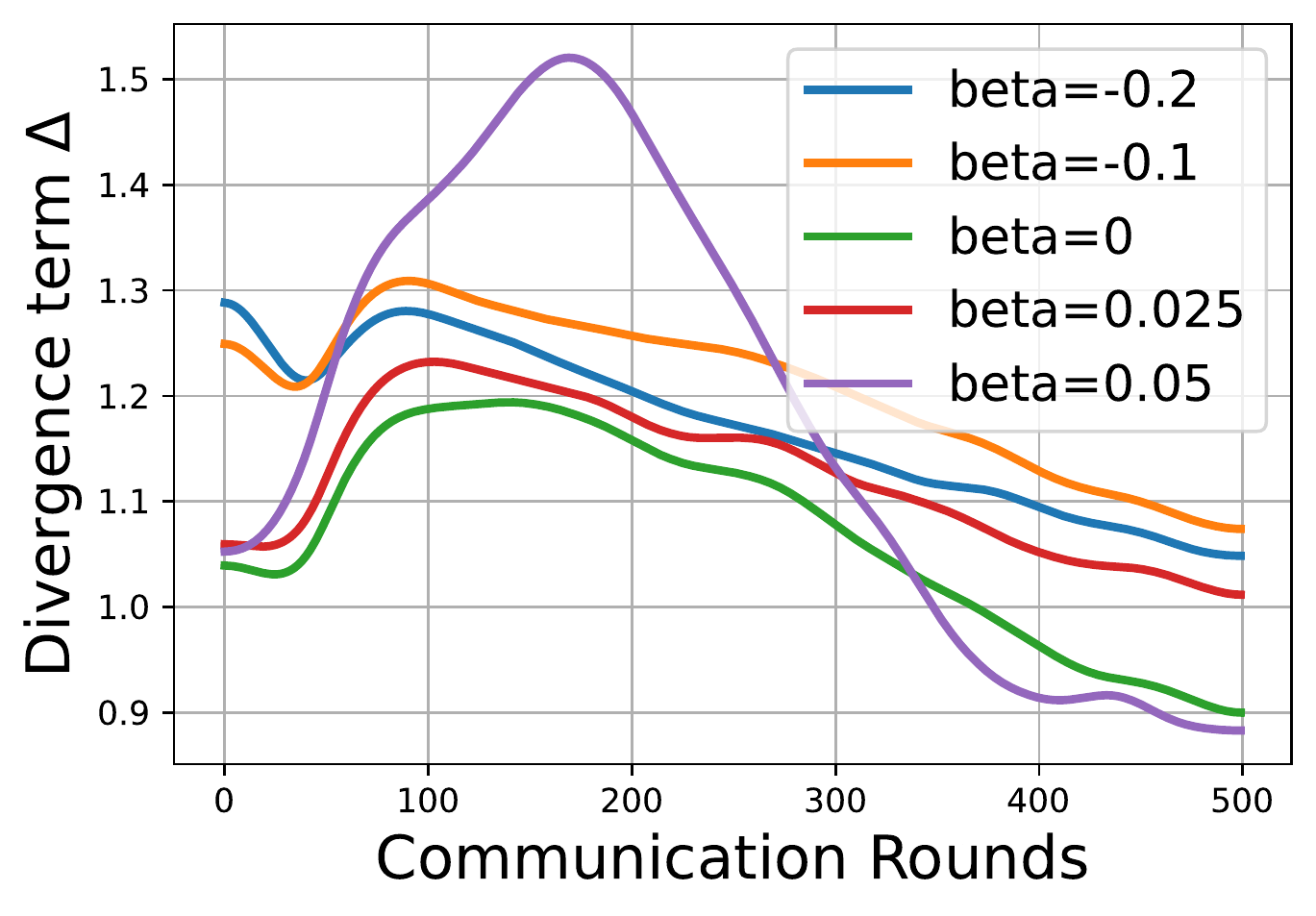}}
\vskip -0.05in
\caption{Experiments of \textit{SCAFFOLD} on the CIFAR-10 dataset.}
\vskip -0.05in
\end{figure}

These experiments show that the relaxed initialization~(RI) effectively reduces the consistency and improves the test accuracy. In all tests, when $\beta=0$~(green curve), it represents the vanilla method without RI. After incorporating the RI, the test accuracy achieves at least 2\% improvement on each setup.

\newpage
\subsubsection{Communication, Calculation and Storage Costs}
In this part, we mainly compare the communication, calculation, and storage costs theoretically and experimentally. By assuming the total model maintain $d$ dimensions, we summarize the costs of benchmarks and our proposed \textit{FedInit} as follows:
\begin{table}[h]
  \caption{Communication, calculation, and storage costs per communication round.}
  \centering
  \small
  \begin{tabular}{c|cr|cr|cr}
    \toprule
    Method & communication & ratio &  gradient calculation & ratio & total storage & ratio\\
    \midrule
    FedAvg    & $Nd$  & 1$\times$ & $NKd$ & 1$\times$ & $Cd$ & 1$\times$ \\
    FedAdam   & $Nd$  & 1$\times$ & $NKd$ & 1$\times$ & $Cd$ & 1$\times$ \\
    FedSAM    & $Nd$  & 1$\times$ & $2NKd$ & 2$\times$ & $2Cd$ & 2$\times$ \\
    SCAFFOLD  & $2Nd$ & 2$\times$ & $NKd$ & 1$\times$ & $3Cd$ & 3$\times$ \\
    FedDyn    & $Nd$  & 1$\times$ & $NKd$ & 1$\times$ & $3Cd$ & 3$\times$ \\
    FedCM     & $2Nd$ & 2$\times$ & $NKd$ & 1$\times$ & $2Cd$ & 3$\times$ \\
    FedInit   & $Nd$  & 1$\times$ & $NKd$ & 1$\times$ & $Cd$ & 1$\times$ \\
    \bottomrule
  \end{tabular}
\end{table}

where $N$ is the number of participating clients, $C$ is the total number of clients, and $K$ is the local training interval.

\textbf{Limitations of the benchmarks.} From this table, we can see that \textit{SCAFFOLD} and \textit{FedCM} both require double communication costs than the vanilla \textit{FedAvg}. They adopt the correction term~(variance reduction and client-level momentum) to revise each local iteration. Though this achieves good performance, we must indicate that under the millions of edge devices in the FL paradigm, this may introduce a very heavy communication bottleneck. In addition, the \textit{FedSAM} method considers adopting the local SAM optimizer instead of ERM to approach the flat minimal. However, it requires double gradient calculations per iteration. For the very large model, it brings a large calculation cost that can not be neglected. \textit{SCAFFOLD} and \textit{FedDyn} are required to store $3\times$ vectors on each local devices. This is also a limitation for the light device, i.e. mobiles.

We also test the practical wall-clock time on real devices. Our experiment environments are stated as follows:
\begin{table}[h]
  \caption{Experiment environments.}
  \label{appendix:environment}
  \centering
  \small
  \begin{tabular}{ccccc}
    \toprule
    GPU & CUDA & Driver Version &  CUDA Version & Platform  \\
    \midrule
    Tesla-V100~(16GB) & NVIDIA-SMI 470.57.02 & 470.57.02 & 11.4 & Pytorch-1.12.1\\
    \bottomrule
  \end{tabular}
\end{table}

In the following table, we test the wall-clock time cost of each method:
\begin{table}[h]
  \caption{Wall-clock time cost~(s$/$round).}
  \centering
  \small
  \begin{tabular}{c|c|cccccc}
    \toprule
    & FedAvg & FedAdam & FedSAM & SCAFFOLD & FedDyn & FedCM & FedInit  \\
    \midrule
    10$\%$-100  & 19.38 & 23.22 & 30.23 & 28.61 & 23.84 & 22.63 & \textbf{20.41} \\
    ratio & 1$\times$ & 1.19$\times$ & 1.56$\times$ & 1.47$\times$ & 1.23$\times$ & 1.17$\times$ & \textbf{1.05$\times$}  \\
    \midrule
    5$\%$-200  & 15.87 & 17.50 & 22.18 & 24.49 & 20.61 & 18.19 & \textbf{16.14}  \\
    ratio & 1$\times$ & 1.10$\times$ & 1.40$\times$ & 1.54$\times$ & 1.30$\times$ & 1.15$\times$ & \textbf{1.02$\times$} \\
    \bottomrule
  \end{tabular}
\end{table}

From this table, due to the different communication costs and calculation costs, the practical wall-clock time is different for each method. Generally, \textit{FedAvg} adopts the local-SGD updates without any additional calculations. \textit{FedAdam} adopts similar local-SGD updates and an adaptive optimizer on the global server. \textit{FedSAM} calculation double gradients, which is the main reason for being slowest among the benchmarks. \textit{SCAFFOLD}, \textit{FedDyn}, and \textit{FedCM} are required to calculate some additional vectors to correct the local updates. Therefore they need some additional time costs. Our proposed \textit{FedInit} only adopts an additional initialization calculation, which requires the same costs as \textit{FedAvg}.

\newpage
\subsubsection{Training Efficiency: Communication Rounds and Time Costs}
In this part, we mainly show the results of the training efficiency. We set the target accuracy and compare their required communication rounds and training time respectively. We test on the ResNet-18-GN model with the 10\%-100 Dir-0.1 splitting.

\begin{table}[H]
\centering
\renewcommand{\arraystretch}{1.2}
\caption{\small We train 500 rounds on CIFAR-10 and 800 rounds on CIFAR-100. ``-" means the corresponding method can not achieve the target accuracy during the training processes.}
\label{appendix:efficiency}
\scalebox{1.0}{
\setlength{\tabcolsep}{1.5mm}{\begin{tabular}{@{}lcc|cc|cc|cc@{}}
\toprule
\multirow{3}{*}[-1.5ex]{\centering Method} & \multicolumn{4}{c}{CIFAR-10~($\geq$70\%)} & \multicolumn{4}{c}{CIFAR-100~($\geq$30\%)} \\ 
\cmidrule(lr){2-5} \cmidrule(lr){6-9}
\multicolumn{1}{c}{} & \multicolumn{2}{c}{Round} & \multicolumn{2}{c}{Time~(s)} & \multicolumn{2}{c}{Round} & \multicolumn{2}{c}{Time~(s)}\\ 
\cmidrule(lr){2-3} \cmidrule(lr){4-5} \cmidrule(lr){6-7} \cmidrule(lr){8-9} 
\multicolumn{1}{c}{} & \multicolumn{1}{c}{}  & Speed Ratio & \multicolumn{1}{c}{}  & Speed Ratio & \multicolumn{1}{c}{}  & Speed Ratio & \multicolumn{1}{c}{}  & Speed Ratio \\ 
\cmidrule(lr){1-9}               
FedAvg              & \multicolumn{1}{c}{371} & 1$\times$ & \multicolumn{1}{c}{7189} & 1$\times$ & \multicolumn{1}{c}{191} & 1$\times$ & \multicolumn{1}{c}{3701} & 1$\times$ \\ 
FedAdam             & \multicolumn{1}{c}{489} & 0.76$\times$ & \multicolumn{1}{c}{11354} & 0.63$\times$ & \multicolumn{1}{c}{256} & 0.74$\times$ & \multicolumn{1}{c}{5944} & 0.62$\times$ \\
FedSAM              & \multicolumn{1}{c}{377} & 0.98$\times$ & \multicolumn{1}{c}{11396} & 0.63$\times$ & \multicolumn{1}{c}{204} & 0.93$\times$ & \multicolumn{1}{c}{6166} & 0.60$\times$ \\ 
SCAFFOLD            & \multicolumn{1}{c}{248} & 1.50$\times$ & \multicolumn{1}{c}{7095} & 1.01$\times$ & \multicolumn{1}{c}{211} & 0.90$\times$ & \multicolumn{1}{c}{6036} & 0.61$\times$ \\ 
FedDyn              & \multicolumn{1}{c}{192} & 1.93$\times$ & \multicolumn{1}{c}{4577} & 1.57$\times$ & \multicolumn{1}{c}{122} & 1.56$\times$ & \multicolumn{1}{c}{2908} & 1.27$\times$ \\ 
FedCM               & \multicolumn{1}{c}{183} & 2.02$\times$ & \multicolumn{1}{c}{4141} & 1.73$\times$ & \multicolumn{1}{c}{\textbf{95}} & \textbf{2.01}$\times$ & \multicolumn{1}{c}{\textbf{2149}} & \textbf{1.72$\times$} \\ 
\textbf{FedInit}    & \multicolumn{1}{c}{\textbf{172}} & \textbf{2.15$\times$} & \multicolumn{1}{c}{\textbf{3510}} & \textbf{2.04$\times$} & \multicolumn{1}{c}{132} & 1.44$\times$ & \multicolumn{1}{c}{2694} & 1.37$\times$ \\ 
\bottomrule
\end{tabular}}}
\end{table}

The setups of the test environment are stated in Table~\ref{appendix:environment}.
According to this table, we clearly see that some advanced methods, i.e. \textit{SCAFFOLD} and \textit{FedDyn}, are efficient on the communication round $T$. However, due to the additional costs of each training iteration, they must spend more time on the total training. \textit{FedInit} is a very light and practical method, which only adopts a relaxed initialization on the \textit{FedAvg} method, which makes it to be better and even achieves SOTA results.

\end{document}